\documentclass{article}

\usepackage[preprint]{neurips_2026}

\usepackage[utf8]{inputenc} 
\usepackage[T1]{fontenc}    
\usepackage{hyperref}       
\usepackage{url}            
\usepackage{booktabs}       
\usepackage{amsfonts}       
\usepackage{nicefrac}       
\usepackage{microtype}      
\usepackage{xcolor}         

\title{Efficiently Representing Algorithms with Chain-of-Thought Transformers}

\author{
Yanhong Li\\
Allen Institute for AI \\
\texttt{yanhongl@allenai.org} \\
\And
Anej Svete \\
ETH Zürich \\
\texttt{asvete@ethz.ch} \\
\AND
Ashish Sabharwal \\
Allen Institute for AI \\
\texttt{ashishs@allenai.org} \\
\And
William Merrill \\
Allen Institute for AI\\
\texttt{willm@allenai.org} \\
}

\usepackage{times}
\usepackage{microtype}
\usepackage{amsmath,amssymb,amsthm,mathtools}
\usepackage{natbib}
\usepackage{hyperref}
\usepackage{bm}
\usepackage{bbm}
\usepackage{xcolor}
\usepackage{colortbl}
\usepackage{relsize}
\usepackage{paralist}

\hypersetup{
  hidelinks,
  colorlinks=true,
  citecolor=[RGB]{0,51,102},
  linkcolor=[RGB]{0,51,102},
  urlcolor=[RGB]{0,51,102}
}
\setcitestyle{round}

\definecolor{ETHBlue}{RGB}{33,92,175}	
\definecolor{ETHGreen}{RGB}{98,115,19}		
\definecolor{ETHPurpleDark}{RGB}{140,10,89}	
\definecolor{ETHPurple}{RGB}{163,7,116}	
\definecolor{ETHGray}{RGB}{111,111,111}	
\definecolor{ETHRed}{RGB}{183,53,45}	
\definecolor{ETHPetrol}{RGB}{0,120,148}	
\definecolor{ETHBronze}{RGB}{142,103,19}	

\colorlet{ETHdarkblue}{ETHBlue!80!black}
\colorlet{ETHdarkgreen}{ETHGreen!80!black}
\colorlet{ETHpink}{ETHPurple}
\colorlet{ETHgray}{ETHGray}
\colorlet{ETHred}{ETHRed}
\colorlet{ETHgreenblue}{ETHPetrol}
\colorlet{ETHbrown}{ETHBronze}

\definecolor{TextBlack}{RGB}{51,51,51}
\definecolor{BackgroundWhite}{RGB}{255,255,255}

\definecolor{AccentBlue}{RGB}{0,122,204}
\definecolor{LightBlue}{RGB}{173,216,230}
\definecolor{DarkBlue}{RGB}{0,51,102}

\definecolor{AccentGreen}{RGB}{70,160,73}
\definecolor{LightGreen}{RGB}{144,238,144}
\definecolor{DarkGreen}{RGB}{0,100,0}

\definecolor{AccentRed}{RGB}{255,0,0}
\definecolor{LightRed}{RGB}{255,99,71}
\definecolor{DarkRed}{RGB}{139,0,0}

\definecolor{AccentOrange}{RGB}{255,165,0}
\definecolor{LightOrange}{RGB}{255,204,153}
\definecolor{DarkOrange}{RGB}{255,140,0}

\definecolor{NeutralLightGray}{RGB}{204,204,204}
\definecolor{NeutralMediumGray}{RGB}{102,102,102}

\definecolor{NoteYellow}{RGB}{255,255,0}

\definecolor{DiversePurple}{RGB}{128,0,128}
\definecolor{DiverseTeal}{RGB}{0,128,128}
\definecolor{DiverseOlive}{RGB}{128,128,0}
\definecolor{DiverseCyan}{RGB}{0,128,192}
\definecolor{DiverseMagenta}{RGB}{192,0,128}

\definecolor{BrickRed}{rgb}{0.8, 0.1, 0.1}

\definecolor{MacroColor}{rgb}{0.0, 0.4, 0.8}
\newcommand{\mymacro}[1]{{#1}}
\newcommand{\opname}[1]{{\mymacro{\operatorname{#1}}}}
\usepackage{booktabs}
\usepackage[inline]{enumitem}
\usepackage{tikz}
\usetikzlibrary{arrows.meta,positioning,fit,decorations.pathreplacing,calligraphy,calc,backgrounds}
\usepackage{caption}
\usepackage{xspace}
\setlist[itemize]{leftmargin=14pt,topsep=2pt,itemsep=1.5pt}
\setlist[enumerate]{leftmargin=14pt,topsep=2pt,itemsep=1.5pt}

\usepackage{thmtools}
\usepackage{thm-restate}

\declaretheorem[name=Theorem]{theorem}
\declaretheorem[name=Lemma]{lemma}
\declaretheorem[name=Corollary]{corollary}
\declaretheorem[name=Proposition]{proposition}
\declaretheorem[name=Definition,style=definition]{definition}

\declaretheorem[name=Remark,style=remark]{remark}

\usepackage[capitalize,noabbrev]{cleveref}
\crefname{section}{\S}{\S\S}
\crefname{table}{Tab.}{Tabs.}
\crefname{figure}{Fig.}{Figs.}
\crefname{algorithm}{Alg.}{Algs.}
\crefname{equation}{Eq.}{Eqs.}
\crefname{example}{Example}{Examples}
\crefname{fact}{Fact}{Facts}
\crefname{appendix}{App.}{Apps.}
\crefname{theorem}{Thm.}{Thms.}
\crefname{aquestion}{Question}{Questions}
\crefname{assumption}{Assumption}{Assumptions}
\crefname{lemma}{Lem.}{Lemmata}
\crefname{conjecture}{Conj.}{Conjs.}
\crefname{proposition}{Prop.}{Props.}
\crefname{chapter}{Ch.}{Chapters}
\crefname{line}{line}{lines}
\crefname{principle}{Principle}{Principles}
\crefname{definition}{Def.}{Defs.}
\crefname{corollary}{Cor.}{Cors.}
\crefname{construction}{Construction}{Construction}
\crefname{takeaway}{Takeaway}{Takeaways}
\crefformat{section}{\S#2#1#3} 

\newcommand{\defn}[1]{\mymacro{\textbf{#1}}}
\newcommand{\defeq}{\mathrel{\stackrel{\textnormal{\tiny def}}{=}}}

\newcommand{\R}{\mymacro{\mathbb{R}}}

\newcommand{\N}{\mymacro{\mathbb{N}}}

\newcommand{\indFun}[1]{\mymacro{\mathbbm{1}[#1]}}
\newcommand{\set}[1]{\mymacro{\{#1\}}}

\newcommand{\norm}[1]{\left\lVert #1\right\rVert_2}
\newcommand{\BIT}{\opname{BIT}}
\newcommand{\bits}{\opname{bits}}
\newcommand{\bitsFun}[1]{\bits(#1)}

\newcommand{\inputstr}{{\mymacro{\boldsymbol{x}}}}        
\newcommand{\alphabet}{{\mymacro{\Sigma}}}

\newcommand{\inputsym}[1]{\mymacro{x_{#1}}}           

\newcommand{\EOS}{\mymacro{\$}}

\newcommand{\layernorm}{\opname{layernorm}}
\newcommand{\PLN}{\opname{PLN}}              
\newcommand{\LN}{\opname{LN}}               

\newcommand{\PC}{\opname{PC}}               

\newcommand{\PCstart}{\opname{PC}_{\mathrm{start}}}
\newcommand{\numregs}{{\mymacro{\rho}}}    
\newcommand{\score}{\opname{score}}        
\newcommand{\instrCost}{\mymacro{\kappa}}
\newcommand{\instrCostFun}[1]{\mymacro{\instrCost(#1)}}

\newcommand{\wRAM}{Word RAM\xspace}

\newcommand{\bigO}{{\mymacro{\mathcal{O}}}}
\newcommand{\bigOFun}[1]{{\mymacro{\bigO(#1)}}}
\newcommand{\bigOtilde}{{\mymacro{\tilde{\mathcal{O}}}}}

\newcommand{\bigOmega}{{\mymacro{\Omega}}}

\newcommand{\bigTheta}{{\mymacro{\Theta}}}
\newcommand{\bigThetaFun}[1]{{\mymacro{\bigTheta(#1)}}}

\newcommand{\wordsize}{{\mymacro{w}}}
\newcommand{\ramtime}{{\mymacro{t}}}
\newcommand{\cotstep}{{\mymacro{t}}}
\newcommand{\inputidx}{{\mymacro{i}}}
\newcommand{\inputlen}{{\mymacro{n}}}

\newcommand{\NTo}[1]{\mymacro{[#1]}}
\newcommand{\wordSet}{\mymacro{\NTo{2^{\wordsize} - 1}}}

\newcommand{\BOV}{\mymacro{\texttt{\#}}}

\newcommand{\classname}[1]{\mymacro{\relsize{-1} \textsf{#1}}\xspace}

\newcommand{\terminal}[1]{\mymacro{\texttt{#1}}}

\newcommand{\keytext}[1]{\terminal{\textcolor{ETHPurple}{#1}}}

\newcommand{\memkey}{\keytext{mem}\xspace}

\newcommand{\EOI}{\mymacro{\texttt{\&}}}

\newcommand{\uhat}{\classname{UHAT}}
\newcommand{\ahat}{\classname{AHAT}}

\newcommand{\posEnc}{\mymacro{\mathrm{PE}}}

\newcommand{\inner}[2]{\mymacro{#1^\top #2}}
\newcommand{\repr}{\opname{repr}}

\providecolor{ClaudeMini}{RGB}{150,30,160}

\usepackage[textsize=tiny]{todonotes}
\definecolor{mintgreen}{RGB}{152, 255, 152}
\makeatletter
\newcommand*\iftodonotes{\if@todonotes@disabled\expandafter\@secondoftwo\else\expandafter\@firstoftwo\fi}  %
\makeatother

\DeclarePairedDelimiter\ceil{\lceil}{\rceil}

\def\vzero{{{\mymacro{\bm{0}}}}}

\def\ve{{{\mymacro{\bm{e}}}}}

\def\vk{{{\mymacro{\bm{k}}}}}

\def\vq{{{\mymacro{\bm{q}}}}}

\def\vv{{{\mymacro{\bm{v}}}}}

\DeclareMathAlphabet{\mathsfit}{\encodingdefault}{\sfdefault}{m}{sl}
\SetMathAlphabet{\mathsfit}{bold}{\encodingdefault}{\sfdefault}{bx}{n}

\DeclarePairedDelimiter\abs{\lvert}{\rvert}
\DeclarePairedDelimiter\floor{\lfloor}{\rfloor}

\makeatletter
\let\oldnorm\norm
\def\norm{\@ifstar{\oldnorm}{\oldnorm*}}
\let\oldabs\abs
\def\abs{\@ifstar{\oldabs}{\oldabs*}}
\let\oldceil\ceil
\def\ceil{\@ifstar{\oldceil}{\oldceil*}}
\let\oldfloor\floor
\def\floor{\@ifstar{\oldfloor}{\oldfloor*}}
\makeatother

\usepackage{soul}
\usepackage{tcolorbox}

\newcommand{\dc}[3]{{\fboxsep=2pt\textsf{\small\color{#2!70!black}\sethlcolor{#2!10}\hl{\textbf{#1:} #3}}}}

\newcommand{\DC}[3]{%
    \begin{tcolorbox}[
        colback=#2!10, colframe=#2!10,
        fontupper=\small\sffamily\color{#2!70!black},
        boxsep=1pt, left=1pt, right=1pt, top=1pt, bottom=1pt,
        boxrule=0pt, arc=1pt
    ]
    \textbf{#1:} #3
    \end{tcolorbox}%
}

\newcommand{\registerdc}[3]{%
    \expandafter\newcommand\csname #2\endcsname[1]{\dc{#1}{#3}{##1}}%
    \expandafter\newcommand\csname #1\endcsname[1]{\DC{#1}{#3}{##1}}%
}

\registerdc{WM}{wm}{orange}
\registerdc{AS}{as}{purple}
\registerdc{YL}{yl}{cyan}
\registerdc{ASv}{asv}{lime!50!black}

\begin{document}

\maketitle

\begin{abstract}
The increasing popularity of \emph{reasoning} models---language models that output a series of reasoning or thought tokens before producing an answer---is justified, in part, by theoretical results showing that chain-of-thought (CoT) transformers can simulate Turing machines, and thus perform arbitrary computation. 
However, the Turing machine, while suitable for complexity-theoretic analysis, is not convenient, intuitive, or efficient for discussing algorithms. 
Algorithms are typically designed and analyzed at a higher level of abstraction, captured by the \emph{Word RAM} model with random-access memory and unit-cost operations on $\bigO(\log n)$-bit words. 
As a result, Word RAM algorithms can be substantially more efficient than their Turing machine counterparts, raising the question: \emph{Can CoT transformers efficiently simulate Word RAM algorithms?} For instance, can they sort $n$ items in $\bigO(n \log n)$ steps or run Dijkstra's algorithm in $\bigO(E + V \log V)$ steps?
We answer affirmatively, up to poly-logarithmic overhead. 
We first establish this for finite-precision transformers with poly-logarithmic width and rightmost unique hard attention, then strengthen the result to two more practical settings with finite width and log-precision: \emph{continuous} CoT, where reasoning takes the form of vectors rather than tokens, and a \emph{hybrid} architecture in which transformer layers sit atop a recurrent (linear RNN) layer. 
In all three cases, we find that CoT \emph{can} efficiently simulate any Word RAM algorithm with only a poly-logarithmic overhead in $n$. 
This overhead reduces to log-square when the Word RAM has a ``flat'' instruction set, and only logarithmic for multiplication-free flat instructions---in stark contrast to known CoT simulations of Turing machines, which require quadratic overhead over Word RAM.
\end{abstract}

\section{Introduction}
It has been known for a while that chain of thought (CoT) makes transformers \emph{Turing complete}---able to simulate any algorithm in principle \citep{perez2021attention}. 
This justifies the empirical practice of training reasoning models to solve complex tasks \citep[e.g.,][]{snell2024scalingllmtesttimecompute,deepseek-r1,openai2026openaio1card}. 
However, the ability to simulate a Turing machine (TM) does not immediately translate to being able to implement interpretable algorithms efficiently: while TMs provide an established framework for analyzing computational complexity, they are not practical for designing real-world algorithms. 
For instance, sorting $\inputlen$ items takes only $\bigOFun{\inputlen \log \inputlen}$ steps on a random-access machine (RAM)---an idealized computer whose memory cells can be read and written in unit time at arbitrary addresses, formalized in \Cref{sec:wordram}---because each comparison and swap touches an arbitrary array entry in a single step.
A TM has no such random access: to reach the addressed cell it must move its head there, scanning a tape region of length up to $\bigO(\inputlen + \ramtime)$ per access.
Simulating these $\bigOFun{\inputlen \log \inputlen}$ RAM steps on a TM therefore costs up to $\bigO(\inputlen^2 \log \inputlen)$ steps, and on a single-tape TM sorting provably requires $\bigOmega(\inputlen^2 / \log \inputlen)$ steps \citep{HENNIE1965553}.
This overhead is not a quirk of the analysis---it reflects a fundamental mismatch between TMs and algorithms that rely on random memory access or word-level arithmetic. 
This paper asks: can CoT transformers efficiently simulate algorithms as written in textbooks? We answer affirmatively.

The natural model for describing textbook algorithms efficiently and concisely is the \defn{\wRAM{}}.
A \wRAM{} program stores data in a random-access memory (RAM) partitioned into \emph{words} of $\wordsize = \bigTheta(\log(\inputlen + \ramtime))$ bits---wide enough to address any of the $\bigO(\inputlen + \ramtime)$ cells the program may touch during a $\ramtime$-step execution---and it executes a program one instruction at a time, where each instruction runs in $\bigO(1)$ steps.
Its primitive operations---memory reads and writes, integer arithmetic, bitwise operations, and comparisons---correspond directly to the operations of modern programming languages and the algorithms textbooks describe.
Thus, the standard complexity bounds practitioners know about correspond naturally to the algorithms expressed in the \wRAM{} model.
For example, sorting takes $\bigO(\inputlen \log \inputlen)$ \wRAM{} steps and Dijkstra's algorithm takes $\bigO(E + V \log V)$.

Existing CoT expressivity results establish computational universality by simulating Turing machines \citep{perez2021attention,merrill2024cot}.
These constructions, however, operate entirely at the TM-instruction level: each CoT token encodes one tape symbol, and each decoding step simulates one TM transition.
Routing a \wRAM{} program of $\ramtime$ steps through this construction therefore costs $\bigO(\ramtime^2)$ CoT tokens: a \wRAM{} on a TM already incurs an $\bigO(\ramtime^2)$ blowup,\footnote{In general, each random-access step forces the TM to scan a tape region of length $\bigO(\ramtime)$ to reach the addressed cell.} and the CoT spends one token per TM step.
What is missing is a direct simulation at the \wRAM{}-instruction level---one that would let CoT transformers execute textbook algorithms at close to their textbook complexity.

We close this gap with direct \wRAM{} simulations for decoder-only CoT transformers in three settings.
\begin{enumerate*}[label=\textit{(\roman*)}]
    \item First, finite-precision transformers of polylogarithmic width using rightmost-unique hard attention---an interpretable and pedagogical construction emitting $\bigOFun{\wordsize^2}$ tokens per \wRAM{} instruction, hence $\bigO(\ramtime\, \wordsize^2) = \bigO(\ramtime \log^2(\inputlen + \ramtime))$ tokens overall, with the per-instruction overhead dropping to $\bigOFun{\wordsize}$ for multiplication-free instruction sets.
    This is essentially tight for discrete CoT: just as a Python program hard-coding a $p$-bit integer must write $\bigOmega(p)$ characters, any CoT transcript representing a $\wordsize$-bit word must emit $\bigOmega(\wordsize)$ tokens from a fixed alphabet, giving an $\bigOmega(\ramtime\, \wordsize) = \bigOmega(\ramtime \log(\inputlen + \ramtime))$ lower bound.
    Our \wRAM{} simulation matches this bound up to one logarithmic factor in general and exactly when multiplication is excluded---already far below the $\bigOmega(\ramtime^2)$ of TM-based CoT simulations.
    \item Second, \emph{fixed-width} transformers with \emph{continuous CoT}---which reasons with hidden vectors in place of discrete tokens.
    These models can invoke a TM oracle for each \wRAM{} instruction, generalizing the finite-precision setting.
    \item Third, \emph{hybrid} models---transformers augmented with a linear RNN layer---which can implement the same simulation with \emph{explicit} CoT.
\end{enumerate*}
For an instruction set whose individual operations run in time $\instrCost(\wordsize)$ on a TM, the latter two constructions take $\bigO(\ramtime\, \instrCost(\wordsize))$ CoT steps, generalizing the results to arbitrary instruction sets.
This recovers the same poly-logarithmic overhead as the first setting: for the flat instruction sets we consider, $\instrCost(\wordsize) = \bigO(\wordsize)$ when multiplication, division, and modulo are excluded and $\bigO(\wordsize^2)$ otherwise, so $\bigO(\ramtime\, \instrCost(\wordsize))$ is $\bigO(\ramtime \log(\inputlen + \ramtime))$ or $\bigO(\ramtime \log^2(\inputlen + \ramtime))$ accordingly.

\begin{figure}[t]
    \centering
    \colorlet{KeyFill}{ETHBlue!10}
    \colorlet{KeyBorder}{ETHBlue!55}
    \colorlet{AddrFill}{ETHPetrol!10}
    \colorlet{AddrBorder}{ETHPetrol!55}
    \colorlet{ValFill}{ETHGreen!12}
    \colorlet{ValBorder}{ETHGreen!60}
    \colorlet{TMFill}{ETHBronze!10}
    \colorlet{TMBorder}{ETHBronze!55}
    \colorlet{SlotFill}{ETHBlue!4}
    \colorlet{SlotBorder}{ETHBlue!55}
    \resizebox{\linewidth}{!}{%
    \begin{tikzpicture}[
        font=\small,
        node distance=1mm,
        field/.style={draw, rounded corners=2pt, minimum height=5mm, align=center,
                      inner xsep=3pt, inner ysep=2pt, line width=0.55pt},
        key/.style ={field, fill=KeyFill,  draw=KeyBorder,  font=\footnotesize\ttfamily},
        addr/.style={field, fill=AddrFill, draw=AddrBorder, font=\footnotesize\ttfamily},
        val/.style ={field, fill=ValFill,  draw=ValBorder,  font=\footnotesize\ttfamily},
        tm/.style  ={field, fill=TMFill,  draw=TMBorder,  font=\scriptsize\ttfamily},
        delim/.style={font=\footnotesize\ttfamily, text=ETHGray!70!black},
        slot/.style={draw=SlotBorder, fill=SlotFill, line width=0.9pt,
                     rounded corners=4pt, inner xsep=3mm, inner ysep=1.5mm},
        slotbg/.style={slot, on background layer},
        brace/.style={decorate, decoration={brace, amplitude=3.5pt, raise=2pt}},
        ]

        \node[font=\footnotesize] (head) {$\cdots$};

        \node[tm, right=3mm of head, minimum width=22mm] (tmS)
            {0110\,1001\,1010\,$\cdots$};
        \node[key,  right=2mm of tmS]      (sM1k)   {\memkey};
        \node[addr, right=0.5mm of sM1k]   (sM1a)   {\texttt{\#}\,$a_1$};
        \node[val,  right=0.5mm of sM1a]   (sM1v)   {\texttt{\#}\,$v_1^{(s)}$};
        \node[draw=none, right=2mm of sM1v, font=\footnotesize] (sdots) {$\cdots$};
        \node[key,  right=2mm of sdots]    (sMlk)   {\memkey};
        \node[addr, right=0.5mm of sMlk]   (sMla)   {\texttt{\#}\,$a_\ell$};
        \node[val,  right=0.5mm of sMla]   (sMlv)   {\texttt{\#}\,$v_\ell^{(s)}$};
        \draw[brace, decoration={brace, amplitude=3pt, raise=7pt, mirror}, ETHBronze!75]
            (tmS.south west) -- (tmS.south east)
            node[midway, below=11pt, font=\scriptsize\itshape, text=ETHBronze!80!black]
            (tmSlbl) {TM-oracle scratch};
        \begin{scope}[on background layer]
        \node[fit=(tmS)(sMlv), slot,
              label={[font=\scriptsize\itshape, text=ETHBlue!75!black]above:RAM step $s$}]
              (slotS) {};
        \end{scope}

        \node[tm, right=6mm of slotS, minimum width=22mm] (tmT)
            {1011\,0010\,0101\,$\cdots$};
        \node[key,  right=2mm of tmT]      (tM1k)   {\memkey};
        \node[addr, right=0.5mm of tM1k]   (tM1a)   {\texttt{\#}\,$a_1$};
        \node[val,  right=0.5mm of tM1a]   (tM1v)   {\texttt{\#}\,$v_1^{(s+1)}$};
        \node[draw=none, right=2mm of tM1v, font=\footnotesize] (tdots) {$\cdots$};
        \begin{scope}[on background layer]
        \node[fit=(tmT)(tdots), slot,
              label={[font=\scriptsize\itshape, text=ETHBlue!75!black]above:RAM step $s{+}1$}]
              (slotT) {};
        \end{scope}

        \node[font=\footnotesize, right=2mm of slotT] (tail) {$\cdots$};

    \end{tikzpicture}}
    \caption{%
    \textbf{Schematic of the CoT transcript simulating a \wRAM{} program.}
    Within each slot, the model may emit \textcolor{ETHBronze!75!black}{TM-oracle scratch} tokens---intermediate computations used to carry out the instruction---before committing the resulting memory updates as a list of \textcolor{ETHPetrol!75!black}{address}--\textcolor{ETHGreen!70!black}{value} pairs.
    }
    \label{fig:block}
\end{figure}

\paragraph{Contributions.}
In summary, we prove that CoT transformers can directly simulate \wRAM{} algorithms with only poly-logarithmic overhead per instruction, matching an information-theoretic lower bound for discrete CoT up to a single logarithmic factor.
Specifically, as summarized in \Cref{tab:settings}:
\begin{enumerate}[label=(\roman*),leftmargin=14pt]
    \item \textbf{Polylogarithmic-width transformers (\Cref{sec:log-width}).} Fixed-precision transformers of $\log^2(\inputlen + \ramtime)$ width with rightmost-unique hard attention can simulate \wRAM{} programs. A \wRAM{} running $\ramtime$ steps with word size $\wordsize = \bigTheta(\log(\inputlen + \ramtime))$ is simulated using $\bigO(\ramtime \cdot \wordsize^2) = \bigO(\ramtime \log^2(\inputlen + \ramtime))$ tokens, dropping to $\bigO(\ramtime \cdot \wordsize) = \bigO(\ramtime \log(\inputlen + \ramtime))$ for multiplication-free instruction sets. The latter matches the $\bigOmega(\ramtime\, \wordsize)$ lower bound for discrete CoT exactly, the former up to a single $\wordsize$ factor.
    \item \textbf{Fixed-width transformers with continuous CoT (\Cref{sec:continuous-cot}).} A fixed-width, log-precision transformer augmented with continuous CoT---where each reasoning step is a real-valued vector rather than a discrete token---simulates any \wRAM{} program with $\bigO(\ramtime \cdot \instrCostFun{\wordsize})$ decoding steps, where $\instrCostFun{\wordsize}$ is the TM cost of executing one instruction. 
    \item \textbf{Hybrid transformer--RNN models (\Cref{sec:hybrid}).} A fixed-width transformer augmented with a single linear RNN layer simulates any \wRAM{} program at the same $\bigO(\ramtime \cdot \instrCostFun{\wordsize})$ cost with \emph{explicit} CoT, showing that recurrence is a natural and practical substitute for the width or continuity extensions of (i) and (ii).
\end{enumerate}
\begin{table}[h]
    \centering
    \small
    \caption{The three transformer settings in which we simulate \wRAM{}s and their trade-offs.
    Overheads are per simulated RAM step (CoT tokens per instruction); $\wordsize = \bigTheta(\log(\inputlen + \ramtime))$ is the word size and $\instrCostFun{\wordsize}$ is the TM cost of one instruction.
    The polylog-width overhead drops to $\bigO(\wordsize)$ for multiplication-free instruction sets; for the fixed-width settings, $\instrCostFun{\wordsize} = \bigO(\wordsize)$ (simple-arithmetic) or $\bigO(\wordsize^2)$ (full-arithmetic) flat instructions.}
    \label{tab:settings}
    \resizebox{\linewidth}{!}{%
    \begin{tabular}{@{}lcccccc@{}}
        \toprule
        Setting & Precision & Width & Attention & Uniform? & Overhead/step & Result \\
        \midrule
        Polylog-width (\Cref{sec:log-width})
            & fixed & $\bigO(\log^2 \inputlen)$ & rightmost \uhat{} & no
            & $\bigO(\wordsize^2)$ & \Cref{thm:logwidthSimulation} \\
        Continuous CoT (\Cref{sec:continuous-cot})
            & $\bigO(\log \inputlen)$ & fixed & averaging hard & yes
            & $\bigO(\instrCostFun{\wordsize})$ & \Cref{thm:wordram-ccot} \\
        Hybrid (\Cref{sec:hybrid})
            & $\bigO(\log \inputlen)$ & fixed & averaging hard {+} RNN & yes
            & $\bigO(\instrCostFun{\wordsize})$ & \Cref{thm:wordram-hybrid} \\
        \bottomrule
    \end{tabular}}
\end{table}

\section{Preliminaries}
\label{sec:prelim}

\subsection{The Word RAM Computational Model}
\label{sec:wordram}

There are several different variants of \wRAM{}s in the literature with different instruction set details, ranging from C-like \wRAM{}s \citep{hagerup1998wordram,thorup1996ram,morin2013ods} to the weaker multiplication-free basic instruction set of \citet{brodnik1997transdichotomous}.
While classical literature has often explored various \emph{minimal}, assembly-like instruction sets under which the model is Turing-complete, our goal here is different: we would like to argue that transformers and hybrid models, when equipped with CoT, can directly simulate programs even when they are written in a much higher-level language, such as ``textbook algorithms'' or simple Python programs, with only a log or log-square overhead. Note that this is a stronger claim than simulating a minimal instruction set with instructions like \terminal{load}, \terminal{store}, \terminal{increment}, etc., with a log or log-square overhead. Keeping this overhead small is important; after all, every \wRAM{} program can be converted to a Turing machine if one disregards the overhead (which, in general, would be quadratic), and then simulated using standard transformers with CoT via known constructions \citep{perez2021attention,merrill2024cot}.

\textbf{Syntax.}
To this end, we define the following grammar for specifying \wRAM programs. Let $\inputlen$ be the input length, $k$ as the maximum number of variables (realized as registers) allowed in the \wRAM{} program, and \terminal{c1, $\ldots$, cm} as the fixed constants (independent of $\inputlen$) allowed in the program.

\begin{small}
\begin{tabular}{rcl}
    Line & $\rightarrow$ & Var \terminal{=} Expr | \terminal{halt} \\
    Expr & $\rightarrow$ & Input | Const | Var | Expr Op Expr | UnOp Expr | Expr \terminal{if} Cond \terminal{else} Expr \\
    Cond & $\rightarrow$ & Bool | Expr Comp Expr | Cond BoolOp Cond | \terminal{$\neg$} Cond \\
    Var & $\rightarrow$ & Reg | Mem | \terminal{pc} \\
    Mem & $\rightarrow$ & \terminal{mem}[Expr] \\
    Input & $\rightarrow$ & \terminal{inp}[Expr] \\
    Reg & $\rightarrow$ & \terminal{r1} | \terminal{r2} | $\ldots$ | \terminal{rk} \\
    Op & $\rightarrow$ & $+ | - | \times | \div | \bmod | \ll | \gg | \mathbin{\&} | \mathbin{|} | \oplus$ \\
    BoolOp & $\rightarrow$ & $\wedge | \vee$ \\
    UnOp & $\rightarrow$ & ${\neg}$ \\
    Comp & $\rightarrow$ & $< | \le | = | \neq | \ge | >$ \\
    Const & $\rightarrow$ & \terminal{\inputlen} | \terminal{c1} | \terminal{c2} | $\ldots$ | \terminal{cm} \\
\end{tabular}
\end{small}

The above grammar allows nesting of expressions (e.g., \terminal{mem}[\terminal{mem}[\terminal{mem}[7]]]) as well as composition of logical operators over conditionals via the Cond rule.
We require that the nesting depth of every expression and condition be bounded by a constant independent of $\inputlen$.
This is what makes each line executable in $\bigO(1)$ \wRAM{} steps.
We also consider a simplified version of this grammar that doesn't allow nesting. This is obtained by changing the grammar expansion rules for Expr and Cond to not be recursive, as follows:

\begin{small}
\begin{tabular}{rcl}
    FlatExpr & $\rightarrow$ & Input | Const | Var \\
    Expr & $\rightarrow$ & FlatExpr | FlatExpr Op FlatExpr | UnOp FlatExpr | FlatExpr \terminal{if} Cond \terminal{else} FlatExpr \\
    Cond & $\rightarrow$ & Bool | FlatExpr Comp FlatExpr
\end{tabular}
\end{small}

We call a \wRAM{} program \defn{flat} if it uses only the flat grammar above.
We further distinguish two flat instruction classes by which operators from Op appear:
\defn{full-arithmetic} flat programs may use any operator from Op,
while \defn{simple-arithmetic} flat programs restrict Op to $\{+, -, \ll, \gg, \mathbin{\&}, \mathbin{|}, \oplus\}$, i.e., it excludes $\times$, $\div$, $\bmod$.

\textbf{Semantics.}
A specific \wRAM machine is specified by a finite-length program of the form above, with the standard semantics when executed on a string $\inputstr \in \alphabet^*$.
In more detail, the machine has three memory arrays: a read-only one $\terminal{inp}$ that contains the input $\inputstr$, $\terminal{mem}$ is a standard read-write tape initialized to all zeros, and $\terminal{out}$ is an output tape that can only be written to, where written values are stored $\bmod \abs{\alphabet}$.
The machine state also specifies the value of the program counter $\terminal{pc}$ and finite registers $\terminal{r1}, \ldots, \terminal{rk}$, which all start as 0.
When an instruction is run, $\terminal{pc}$ is incremented, unless it is otherwise assigned by the instruction (e.g., $\terminal{pc} = \terminal{r2} + 1$).
The program runs until $\terminal{halt}$ is executed, at which point the contents of $\terminal{out}$ is deemed the output.
When considering language recognition, we say that a \wRAM accepts a string iff $\terminal{out}[0] = 1$.

Without loss of generality, we can view the program counter $\terminal{pc}$ and each register $\terminal{ri}$ as stored in memory via
$\terminal{pc} \triangleq \terminal{mem}[0]$ and 
$\terminal{ri} \triangleq \terminal{mem}[-i]$, with $\terminal{mem}$ being 1-indexed for standard memory.

\subsection{Transformers and Related Models}
\label{subsec:transformers}

\textbf{Transformers.}
We defer a definition of the standard transformers model to \Cref{app:transformer-model}.
The depth of our transformer will be \emph{fixed}, i.e., $\bigO(1)$, w.r.t. sequence length $\inputlen$ and CoT runtime $t = \bigO(\inputlen^c)$.
Regarding precision and width of our transformers, we consider two possibilities motivated by a bifurcation in the transformer expressivity literature \citep{strobl-etal-2024-formal,svete2026revisiting}:
\begin{enumerate}
    \item Fixed precision and polylogarithmic, i.e., $\bigO(\log^k \inputlen)$ width. Because width grows with $\inputlen$ and $t$, this model is not fully uniform, meaning new parameters are introduced on longer inputs and CoTs. We use rightmost-unique hard attention transformers (\uhat{}s) \citep[cf.][]{strobl-etal-2024-formal}.
    \item Logarithmic, i.e., $\bigO(\log \inputlen)$, precision and fixed width. We further enforce in this model that the parameters are fully uniform. We use averaging hard attention \citep[cf.][]{strobl-etal-2024-formal}.
\end{enumerate}

\textbf{Linear RNNs and Hybrid Models.}
Next, we define linear RNN layers. There are many existing linear RNN architectures with complex parameterizations that all meet the following general form for different choices of $A_t$, $b_t$, and $c_t$ \citep{merrill2026linearrnnsparallelizable}:
    $S_{t+1} = A_t S_t + b_t$ and 
    $h_t = c_t^\top S_t$
We will assume for simplicity that $A_t$, $b_t$, and $c_t$ are all parameterized as projections of the input vector to the RNN layer at step $t$.
RNNs can be fit into a transformer architecture by using the head input to compute $A_t, b_t$, and $c_t$ and then defining $h_t$ as the head output.
The sublayer output from a multihead RNN layer would then be passed through layer-norm in the standard way.
A \emph{hybrid model} is an architecture that mixes attention heads with linear RNN heads.

\textbf{CoT.}
Finally, we say that a transformer $T$ computes a function $f$ within CoT budget $t(\inputlen)$ if, after feeding $\inputstr$ through $T$ and allowing at most $t(\abs{w})$ autoregressive steps sampled greedily (potentially terminating early), the generated tokens end with the suffix ${=}f(w) \EOS$, where ${=}$ is an output delimiter and $\EOS$ is the end-of-string symbol.
Continuous CoT is defined analogously except that final logits of the transformer are passed as a ``soft token'' to the next step rather than a token decoded from them.

\subsection{Layernorm Hash Representation}
\label{subsec:layernorm-hash}

In the log-precision model, we will represent values $v$ using their layer-norm hash $\phi(v)$, which is a unit-norm vector in $\R^4$ defined as follows~\citep{merrill2024cot}:
\begin{equation}
    \phi(v)\; \equiv \; \phi(v,1) \; := \; \layernorm(v,1,-v,-1) \; = \; \frac{(v,1,-v,-1)}{\sqrt{2v^2 + 2}} .
\end{equation}
More generally, $\phi(a,b) = \layernorm(a,b,-a,-b)$.
This representation has several useful properties (similar to binary encodings) that we will leverage. 
The first property is useful for performing an equality check using inner product, as well as tie-breaking in order to simulate rightmost attention:

\begin{restatable}[Equality Check and Mismatch Gap: \citealp{merrill2024cot}]{proposition}{phiEqualityAndGapProp}\label{lem:phi1-gap}
    Let $u,v \in \{0,1,\dots,T\}$. Then $\langle \phi(u),\phi(v)\rangle=1$ if and only if $u=v$. Further, when $u \neq v$, $\phi(u),\phi(v)\rangle \le 1-\delta_T$ where $\delta_T := \frac{1}{4(T^2+1)^2}$.
\end{restatable}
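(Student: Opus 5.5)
Writing out the inner product explicitly reduces both claims to elementary inequalities. Since $\phi(v)=(v,1,-v,-1)/\sqrt{2v^2+2}$, for $u,v\ge 0$ we have
\[
\langle \phi(u),\phi(v)\rangle \;=\; \frac{2uv+2}{\sqrt{(2u^2+2)(2v^2+2)}} \;=\; \frac{uv+1}{\sqrt{(u^2+1)(v^2+1)}} .
\]
Equivalently, $\phi(u)$ is (up to the duplicated negated coordinates) the normalization of the vector $(u,1)\in\R^2$, so the inner product is the cosine of the angle between $(u,1)$ and $(v,1)$.

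\textbf{Equality part.} Both vectors have unit norm, so by Cauchy--Schwarz the inner product equals $1$ iff $\phi(u)=\phi(v)$. This holds iff $(u,1)$ and $(v,1)$ are positively proportional, and since their second coordinates are both $1$, iff $u=v$. The converse direction is immediate.

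\textbf{Gap part.} I will use the identity
\[
(u^2+1)(v^2+1)-(uv+1)^2=(u-v)^2 .
\]
Write $A=\sqrt{(u^2+1)(v^2+1)}$ and $B=uv+1>0$. Then
\[
1-\frac{B}{A}=\frac{A-B}{A}=\frac{A^2-B^2}{A(A+B)}=\frac{(u-v)^2}{A(A+B)} .
\]
For distinct integers, $(u-v)^2\ge 1$. For the denominator, $B\le A$ by Cauchy--Schwarz, so $A(A+B)\le 2A^2=2(u^2+1)(v^2+1)\le 2(T^2+1)^2$. Hence
\[
1-\langle\phi(u),\phi(v)\rangle\;\ge\;\frac{1}{2(T^2+1)^2}\;\ge\;\frac{1}{4(T^2+1)^2}=\delta_T,
\]
which gives the claimed bound with room to spare.

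The main obstacle is avoiding a lossy bound such as $1-\cos\theta\ge\theta^2/\pi^2$, which would require separately lower-bounding the angle between $(u,1)$ and $(v,1)$. The exact identity $A^2-B^2=(u-v)^2$ bypasses this, and the whole argument reduces to the rationalization step above.
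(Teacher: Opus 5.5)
Your proof is correct and follows essentially the same route as the paper: the same closed form $\frac{uv+1}{\sqrt{(u^2+1)(v^2+1)}}$ and the same identity $(u^2+1)(v^2+1)-(uv+1)^2=(u-v)^2$, yielding the same intermediate bound $\frac{1}{2(T^2+1)^2}$ before slackening to $\delta_T$. The only differences are cosmetic: you rationalize $1-B/A$ directly, whereas the paper bounds $1-\langle\phi(u),\phi(v)\rangle^2$ and then applies $\sqrt{1-\epsilon}\le 1-\epsilon/2$, which amounts to the same inequality; and you prove the equality direction explicitly via Cauchy--Schwarz, which the paper leaves implicit.
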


See proof in \Cref{app:layernorm} for completeness. $\phi$ is scale invariant~\citep{merrill2024cot}: $\phi(cu, cv) = \phi(u,v) = \phi(u/v,1) = \phi(u/v)$. We can also ``swap'' the dimensions of $\phi$ to \emph{invert} a number: if $\phi(v) = (s_1, s_2, -s_1, -s_2)$, then $\phi(1/v) = (s_2, s_1, -s_2, -s_1)$.
We introduce a new general \emph{affine update} (proof in \Cref{app:layernorm}), computable by a fixed-depth feedforward network as well as with an RNN:

\begin{restatable}[Affine Update]{proposition}{affineUpdateProp}\label{prop:phi-affine}
    Fix constants $a,b \in \R$. Let $\phi(v) = (s_1, s_2, -s_1, -s_2)$. Then $\phi(av + b) = \layernorm(a s_1 + b s_2, s_2, -(a s_1 + b s_2), -s_2)$.
\end{restatable}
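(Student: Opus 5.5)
We prove Proposition~\ref{prop:phi-affine} by a direct computation that uses scale invariance of $\layernorm$ on vectors of the form $(a,b,-a,-b)$. The idea is that $\phi(v)$ stores $v$ only up to a positive scalar: its first and second coordinates are $s_1 = v/\sqrt{2v^2+2}$ and $s_2 = 1/\sqrt{2v^2+2}$, so $s_1 = v\, s_2$ with $s_2 > 0$. An affine map of $v$ therefore becomes a linear map on the pair $(s_1,s_2)$, and layernorm then restores the unit-norm normalization.

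\textbf{Step 1: read off the coordinates.} From the definition $\phi(v) = (v,1,-v,-1)/\sqrt{2v^2+2}$ we get $s_1 = c v$ and $s_2 = c$, where $c := 1/\sqrt{2v^2+2} > 0$.

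\textbf{Step 2: apply the linear map.} We compute $a s_1 + b s_2 = c(av+b)$. Hence the vector $(a s_1 + b s_2,\, s_2,\, -(a s_1 + b s_2),\, -s_2)$ equals $c\,(av+b,\,1,\,-(av+b),\,-1)$.

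\textbf{Step 3: use scale invariance.} For any vector $x$ of the form $(p,q,-p,-q)$, the mean is zero. So $\layernorm(x) = x/\|x\|$, suitably scaled by the fixed normalization convention, which is the same convention the paper uses in defining $\phi$. Consequently $\layernorm(c x) = \layernorm(x)$ for every $c>0$, which is exactly the property stated as $\phi(cu,cv)=\phi(u,v)$. Applying this to Step 2 gives
\[
\layernorm\bigl(a s_1 + b s_2, s_2, -(a s_1 + b s_2), -s_2\bigr) = \layernorm(av+b,1,-(av+b),-1) = \phi(av+b).
\]

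The only point needing care is that the scalar $c$ is strictly positive. A negative scalar would flip signs, and a zero scalar would make layernorm undefined. Positivity holds because $s_2 = c > 0$ always, and the vector in Step 2 is nonzero because its second coordinate is $s_2 \neq 0$.

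Finally, for the implementability remark: the map $(s_1,s_2)\mapsto(a s_1+b s_2, s_2, -(a s_1+b s_2), -s_2)$ is linear. It can therefore be realized by a linear projection followed by the layernorm of a feedforward sublayer. In an RNN it can be realized with $A_t$ set to this constant matrix, followed by the usual layer-norm on the sublayer output.
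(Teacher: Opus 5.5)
Your proof is correct and takes essentially the same route as the paper: both identify $v = s_1/s_2$ (you write this as $s_1 = cv$, $s_2 = c$ with $c>0$) and then use scale invariance of $\phi$ to absorb the common factor. Your explicit check that this scalar is strictly positive is a welcome extra bit of care, since the paper states $\phi(cu,cv)=\phi(u,v)$ without restricting the sign of $c$.
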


\section{Simulating \wRAM{}s with Polylog-Width Transformers}
\label{sec:log-width}

We begin by showing how transformers with polylogarithmic width and fixed precision can simulate \wRAM{}s---a stepping stone towards more \emph{uniform} transformer models, which do not require new parameters as the input length grows.
Thus, we use the construction in this section to cleanly outline the simulation; the main ideas will re-appear in the subsequent sections.

\textbf{The transformer model.}
We use a $\log^2$-width finite-precision transformer model; see \Cref{rem:log-squared-width} for a discussion of why $\log^2$ width is needed.
The constructions build on the log-width machinery formalized by existing work \citep{li2024chain,svete2025exactexpressivepowerfiniteprecision,svete2026on}.\footnote{In particular, we use \emph{$\mathsf{L}$-uniform} transformers \citep{london2025pausetokensstrictlyincrease}, \citep[][\S 4.4]{barcelo_et_al:DagRep.15.7.22} with positional encodings that only contain the binary representation of the position. This prevents the PEs from encoding too much information and relies on the transformer abilities to simulate the \wRAM programs.}
This allows us to reuse existing gadgets in our constructions. 

\textbf{Representation of values.}
In the polylogarithmic-width setting, word-sized integers are stored in the residual stream as binary vectors denoting their binary expansions: for a value $v \in \{0,1\}^\wordsize$, the \defn{representation} $\repr(v) \defeq \bits(v) \in \{0,1\}^\wordsize$ is placed directly as a vector in the residual stream.
This gives two primitives:
\begin{enumerate*}[label=\textit{(\roman*)}]
    \item \emph{equality checking} via inner products ($\repr(i)^\top \repr(j) = \wordsize$ iff $i = j$, and strictly less otherwise), and
    \item \emph{bit extraction} via $\inner{\repr(v)}{\ve_t} = b_t$ for the $t\textsuperscript{th}$ standard basis vector $\ve_t$.
\end{enumerate*}
These two primitives are the key gadgets exploited throughout this section; the subsequent sections will define different representations to maintain fixed width.

\Cref{thm:logwidthSimulation} formalizes \wRAM{} simulation by $\log^2$-width transformers; see \Cref{app:logwidth} for the full proof.
\begin{restatable}[Simulation of \wRAM{}s by $\log^2$-Width Transformers]{theorem}{logwidthSimulation} \label{thm:logwidthSimulation}
    Let $M$ be a flat \wRAM{} program with word size $\wordsize$.
    Then there exists a rightmost \uhat{}\; $T$ with finite precision, $\bigOFun{\wordsize}$ heads each of width $\bigOFun{\wordsize}$, and total model dimension $\bigOFun{\wordsize^2} = \bigOFun{\log^2(\inputlen + t)}$, such that, for any $t \geq \inputlen$, $T$ can simulate $t$ steps of $M$ with $t \cdot \bigOFun{\wordsize^2}$ CoT steps.
    Further, if $M$ does not use $\times$, $\div$, or $\bmod$, $T$ can simulate $t$ steps of $M$ with $t \cdot \bigOFun{\wordsize}$ CoT steps.
\end{restatable}

\begin{proof}[Proof sketch]
    Following the \wRAM{} definition (\Cref{sec:wordram}), the \wRAM{} state---program counter, registers, and memory---is folded into a single random-access memory $\terminal{mem}$.
    The construction logs every memory write to the CoT transcript as a \emph{memory block} of the form $\memkey\,\BOV\,\bits(a)\,\BOV\,\bits(v)$ (\Cref{fig:block}).
    The central operation is the \emph{memory read} (\Cref{lem:reading-from-memory}): given a query address $\bits(a)$ in the residual stream, recover the value of the rightmost $\memkey$ block whose address field encodes $a$ (i.e., the most recent write to $\terminal{mem}[a]$).
    This uses two attention steps: the first selects the rightmost matching address position; the second retrieves the value field at the known offset.

    Each CoT step runs four constant-depth layer blocks (cf.\ \Cref{fig:logwidth-layer-blocks} and \Cref{app:logwidth}):
    \begin{enumerate*}[label=\textit{(\roman*)}]
        \item \textbf{Load \texttt{pc}:} an indexed read with the constant query $0$ recovers $\bits(\terminal{mem}[0])$ in two attention layers (\Cref{prop:logwidth-load-pc});
        \item \textbf{Load Instruction and Operands:} the active line of the program is identified from $\bits(\terminal{mem}[0])$ by an MLP that issues parallel reads of every compile-time address listed in the instruction's operand descriptor (registers), and a second round of reads resolves any runtime addresses ($\terminal{mem}[a]$ or $\terminal{inp}[a]$) whose index $a$ is now in the residual stream (\Cref{prop:logwidth-dispatch});
        \item \textbf{Execute:} a single MLP computes the $\wordsize$-bit destination value(s) and the next \texttt{pc} from the operands now in the residual stream (\Cref{prop:logwidth-execute});
        \item \textbf{Store:} an attention layer computes the within-step offset $\delta$ from the rightmost \EOI{}, and the appropriate $\delta^{\text{th}}$ token (a $\memkey$, a \BOV{}, an address bit, a value bit, or \EOI{}) is emitted as the output token (\Cref{prop:logwidth-store}).
    \end{enumerate*}

    For simple instructions, Blocks~\textit{(i)}--\textit{(iii)} produce identical outputs at every token within the same step, so only $\delta$ advances in Block~\textit{(iv)}, emitting one token at a time over $\bigOFun{\wordsize}$ total positions.
    Deserialization uses $\wordsize$ dedicated attention heads (one per bit), each of width $\wordsize$, giving the $\bigOFun{\wordsize^2} = \bigOFun{\log^2(\inputlen + t)}$ model dimension.
    In contrast to simple instructions, multiplication, division, and modulo \emph{cannot} be computed in a single transformer step and are compiled into $\bigOFun{\wordsize}$ flat micro-steps that share state through program-private memory cells.
    Each micro-step costs $\bigOFun{\wordsize}$ tokens, for $\bigOFun{\wordsize^2}$ total; every micro-step uses the same four-block pipeline (\Cref{prop:logwidth-mul-div}).
\end{proof}

\begin{corollary} \label{cor:logwidth-simulation}
    Let $M$ and $T$ be as in \Cref{thm:logwidthSimulation}.
    We have the following:
    \begin{enumerate}[label=(\roman*),leftmargin=20pt,noitemsep,topsep=0pt]
        \item $T$ can simulate $t$ steps of $M$ with \emph{full-arithmetic} flat instructions using $\bigO(t \log^2 t)$ steps.
        \item $T$ can simulate $t$ steps of $M$ with \emph{simple-arithmetic} flat instructions using $\bigO(t \log t)$ steps.
    \end{enumerate}
\end{corollary}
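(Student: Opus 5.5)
This corollary follows directly from \Cref{thm:logwidthSimulation}: substitute the word size and bound the logarithm. The plan has three steps.

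\textbf{Step 1: fix the word size.} By the definition in \Cref{sec:wordram}, $\wordsize = \bigTheta(\log(\inputlen + t))$. The theorem only applies when $t \geq \inputlen$, so $\inputlen + t \leq 2t$. Hence $\log(\inputlen + t) \leq \log t + 1$, which gives $\wordsize = \bigO(\log t)$. I expect this to be the only delicate point. The corollary's bounds are stated purely in terms of $t$, so the hypothesis $t \ge \inputlen$ is what lets the input length be absorbed. Without it, reading the input alone could dominate.

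\textbf{Step 2: full-arithmetic case.} I apply the first clause of \Cref{thm:logwidthSimulation}. It gives $t \cdot \bigO(\wordsize^2)$ CoT steps. With Step 1 this is $t \cdot \bigO(\log^2 t) = \bigO(t \log^2 t)$, which proves (i).

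\textbf{Step 3: simple-arithmetic case.} Simple-arithmetic flat programs exclude $\times$, $\div$ and $\bmod$ by definition. So the second clause of the theorem applies and gives $t \cdot \bigO(\wordsize) = \bigO(t \log t)$ CoT steps, which proves (ii).

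Finally, I would note that the constants hidden in $\bigO(\cdot)$ depend only on the fixed program $M$: its number of lines, registers and constants, and its expression nesting. They do not depend on $\inputlen$ or $t$, so the asymptotic bounds are uniform over all inputs.
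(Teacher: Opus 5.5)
Your proposal is correct and follows essentially the same route the paper leaves implicit. You use the hypothesis $t \ge n$ to get $w = \Theta(\log(n+t)) = O(\log t)$, substitute into the $t \cdot O(w^2)$ and $t \cdot O(w)$ bounds of \Cref{thm:logwidthSimulation}, and note that simple-arithmetic programs fall under its multiplication-free clause.
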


\begin{remark}[Bit extraction as a dot product]\label{rem:BIT-dot-product}
    Reading from and writing into the CoT transcript both require \emph{bit extraction}: computing $\BIT(v, t)$, the $t\textsuperscript{th}$ bit of $v$.
    In the $\log^2$-width setting this is trivial: since $\bits(v) \in \{0,1\}^\wordsize$ is stored as a coordinate vector in the residual stream, extracting $b_t$ reduces to the dot product $\inner{\bits(v)}{\ve_t} = b_t$ with the $t\textsuperscript{th}$ standard basis vector $\ve_t$.
    In the fixed-width constructions later, $\bits(v)$ is no longer directly available as a coordinate vector, so bit extraction becomes substantially more involved and is one of the primary sources of complexity in those proofs.
\end{remark}

These results provide a general framework for how CoT can simulate \wRAM{} programs.
However, the $\log^2$-width construction is \emph{non-uniform}: the model grows with $\inputlen$, so new parameters---untrained on shorter inputs---must be invoked as the inputs grow longer.
The construction also requires \emph{rightmost} \uhat{}; we show in \Cref{rem:rightmost-necessary} that weaker attention models cannot implement the indexed-read step.
The fixed-width constructions in the following sections avoid the non-uniformity and reliance on rightmost \uhat{}s at the cost of more complex reading and writing mechanisms.

\section{Towards a Uniform Fixed-Width Construction: Bottlenecks and Plan}\label{sec:fixed-width}


In \Cref{sec:log-width}, storing $\bits(v)$ in the residual stream gives three things for free:
\begin{enumerate*}[label=\textit{(\roman*)}]
    \item equality checking via inner products,
    \item bit extraction via $\inner{\bits(v)}{\ve_t} = b_t$, and
    \item a clean serialization/deserialization mechanism.
\end{enumerate*}
Moving to fixed width forces us to abandon the $\bits$ representation, and each of these primitives must be rebuilt from scratch.
We describe the resulting challenges below.

\textbf{Bottleneck: Reading and writing values.}
The deepest challenge in moving to fixed width is that we must adopt a different representation of numbers in the residual stream.
For fixed-width constructions, any such representation $\repr$ of values must satisfy two properties simultaneously:
\begin{enumerate}[label=(\alph*),leftmargin=20pt,noitemsep,topsep=0pt]
    \item \emph{Equality gap: } $\repr(i)^\top \repr(j) = 1$ if $i = j$, and $\repr(i)^\top \repr(j) < 1 - \epsilon$ otherwise.
    \item \emph{Read/write: } We can serialize $\repr(v)$ to a binary CoT encoding and recover $\repr(v)$ from it.
\end{enumerate}
The first property is needed for the indexed-read attention mechanism; the second is needed both 
\begin{enumerate*}[label=\textit{(\alph*)}]
    \item to pass pointers computed at a higher layer back to lower layers in future steps, and
    \item to hand binary-encoded operands to the TM oracle for complex arithmetic.
\end{enumerate*}
Past work has established the \defn{layer-norm hash} $\phi(i)$ as a fixed-width representation satisfying the equality-gap property \citep{merrill2024cot}, but reading and writing via $\phi$ presents a challenge.

Writing is hard because it requires computing $\BIT(v, j)$ from $\phi(v)$---mapping a hash representation back to individual bits.\footnote{$\BIT(v, j)$ plays a central role in descriptive complexity theory and the analysis of transformer expressivity.}
One approach is to compute $\phi(v \bmod 2)$ and $\phi(\lfloor v/2 \rfloor)$ iteratively.
When $v \leq t$, \citet{merrill2025little} give a construction for computing $\phi(v \bmod 2)$ and $\phi(\lfloor v/2 \rfloor)$ from $\phi(v)$; for larger values this is open (note that \Cref{prop:phi-affine} allows easily computing $\phi(v/2)$, but not $\phi(\lfloor v/2 \rfloor)$).
To resolve the write challenge, we adopt a normal form for \wRAM{}s that enforces $v \leq t$ at every step, developed in \Cref{sec:restricted-word-rams}, allowing us to compute the least significant bit of $v$ as well as $\phi(\lfloor v/2 \rfloor)$. The remaining bits of $v$ can be obtained by iterating the process, which requires repeatedly \emph{conditioning} on $\phi(\lfloor v/2 \rfloor)$, $\phi(\lfloor v/4 \rfloor)$, etc. We show this can be achieved via either of two minimal architectural extensions: continuous CoT (\Cref{sec:continuous-cot}, \Cref{lem:ccot-serialization}) or hybrid transformer--RNN models (\Cref{sec:hybrid}, \Cref{lem:hybrid-serialization}).\footnote{\label{footnote:log-depth}This iterative computation can also be performed by a logarithmic-depth transformer; we do not explore this here.}

Reading is hard for a complementary reason: while $\phi(v)$ can be reconstructed from a binary CoT encoding via iterative bit-shifting within the $\phi$ representation, this is not known to be achievable by a plain transformer.
We show that this too becomes
feasible with either of two minimal architectural extensions: continuous CoT (\Cref{sec:continuous-cot}, \Cref{lem:ccot-deserialization}) and hybrid transformer--RNN models (\Cref{sec:hybrid}, \Cref{lem:hybrid-deserialization}).\footref{footnote:log-depth}

\subsection{Writing Solution: Equivalent RAMs with Bounded Word Values} \label{sec:restricted-word-rams}

We define a \defn{$c$-dimensional} \wRAM as a \wRAM where the tape is replaced by a $c$-dimensional array of $c$-dimensional tuples.
The following lemma shows that such \wRAM{}s are as powerful as standard \wRAM{}s, up to a constant factor in runtime (see \Cref{app:cdim-wram} for a proof):
\begin{restatable}{lemma}{cDimwRAMLemma} \label{lem:cdim-wram}
    Let $M$ be a 1-dimensional \wRAM with word size $w = c \log(n + t)$ for some $c \in \mathbb N$.
    Then there exists a $c$-dimensional restricted \wRAM $M'$ with word size $w' = \log(n + t)$ and the same instruction set as $M$ that
    simulates $t$ steps of $M$ using $\bigO(t)$ steps.
\end{restatable}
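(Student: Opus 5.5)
The plan is to split every $w$-bit word of $M$ into $c$ ``digits'' in base $B \defeq 2^{w'} = n+t$, and to let $M'$ manipulate these digit tuples. Write $\bits(v) = (d_{c-1},\dots,d_0)$ with each $d_j < B$ and $v = \sum_j d_j B^j$. Then a value of $M$ is stored in $M'$ as a $c$-tuple of $w'$-bit words. An address $a$ of $M$ likewise decomposes into $c$ digits, which we use as the $c$ coordinates into the $c$-dimensional array: $\terminal{mem}'[d_{c-1}(a)]\cdots[d_0(a)]$ holds the $c$-tuple encoding $\terminal{mem}[a]$. Registers and $\terminal{pc}$ are handled the same way, through the convention $\terminal{ri}\triangleq\terminal{mem}[-i]$; we reserve a constant number of fixed cells for them, and the address range $-k,\dots,0$ needs only $O(1)$ extra room. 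The input tape needs no change, since input symbols already fit in one digit. Because every digit lies in $[0, n+t)$, each word of $M'$ is bounded by $n+t$. This is exactly the ``restricted'' property (values $\le t$ up to the $n$ offset) that \Cref{sec:restricted-word-rams} needs.

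The key step is to show that each flat line of $M$ compiles into $\bigO(1)$ lines of $M'$ that use the same instruction set on $w'$-bit words. The constant depends on $c$ and on the instruction, not on $n$ or $t$. I would go through Op case by case.
\begin{enumerate*}[label=\textit{(\roman*)}]
\item Bitwise $\&,|,\oplus,\neg$ act digitwise, so they take $c$ operations.
\item $+$ and $-$ use schoolbook carry propagation over $c$ digits. Each carry is detected by a comparison such as $x+y \ge B$ (or $x < y$), so we never store a value above the word bound; for example, $x - (B - y)$ is computed under a conditional.
\item Comparisons are lexicographic over the digits, which is a constant-depth nest of \terminal{if}-\terminal{else} expressions.
\item A shift by $s$ splits into a digit shift $\lfloor s/w'\rfloor$ and a within-digit shift $s \bmod w'$. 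The within-digit part needs $\ll,\gg$ plus masking with $(1\ll w')-1$. Since the shift amount ranges over only $\bigO(c)$ digit offsets, a constant-size case split selects the right ones.
\item $\times,\div,\bmod$ use schoolbook algorithms on $c$ digits, which is $\bigO(c^2)$ digit products. Each product of two digits is split into high and low halves; this can be done by first splitting each digit into half-words, so that no intermediate exceeds $B$.
\end{enumerate*}
Memory accesses $\terminal{mem}[e]$ first compute the digits of $e$ and then index the $c$-dimensional array directly. Conditionals and $\terminal{pc}$ updates are relabeled: line $\ell$ of $M$ maps to the first line of its compiled block. Any constant that does not fit in $w'$ bits, including the one for $\terminal{n}$, is pre-split into digits.

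The final step is to verify the bound. Each $M$ step becomes at most $K = K(c, M)$ steps of $M'$, so there are $\bigO(t)$ steps in total. The number of addresses touched by $M'$ stays within the $c$-dimensional index range $[0,B)^c$, and $M'$ halts with the same output tape.

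I expect the main obstacle to be multiplication and division. The issue is producing the digits of a $2w'$-bit product, or performing long division, using only $w'$-bit words while keeping every intermediate value below $n+t$. The boundedness requirement rules out the naive overflowing product. My first attempt would be half-digit splitting: represent each digit as two $w'/2$-bit halves so that every partial product fits in one word. Division would then be built from $\bigO(c)$ rounds of a normalized quotient-digit estimate with a constant-bounded correction loop, as in Knuth's Algorithm D. This keeps the per-instruction cost constant in $n$ and $t$.
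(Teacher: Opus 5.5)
Your construction matches the paper's. You split each $w$-bit word into $c$ blocks in base $2^{w'}$ and use an address's block tuple as its coordinates in the $c$-dimensional array. You then compile each operation into $\bigO(c)$ or $\bigO(c^2)$ block operations: carry/borrow propagation, blockwise comparison, block-plus-intra-block shifts, schoolbook multiplication, and Algorithm-D division.

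You are in fact more careful than the paper where it ignores overflow of $w'$-bit words:
\begin{itemize}
\item The paper multiplies $x'_i \cdot y'_j$ as a value ``fitting in $2w'$ bits.'' It also claims that one $w'$-bit division yields each quotient digit exactly. Both need the high half of a $2w'$-bit quantity, which your half-digit base supplies. Your version also includes the correction step that Algorithm D genuinely needs.
\item Your case split on the shift offset avoids the $\div$/$\bmod$ used in the paper's shift footnote. This matters when $M$ is simple-arithmetic and $M'$ must use the same instruction set.
\end{itemize}
Two small repairs are needed. For odd $w'$, two halves of $\lceil w'/2\rceil$ bits can multiply to $w'+1$ bits, so use pieces of $\lfloor w'/2\rfloor$ bits instead. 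For flat $M$, the nested \texttt{if}--\texttt{else} that realizes lexicographic comparison must be unrolled into $\bigO(c)$ flat lines with temporary cells.

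The claim that needs fixing is that the bound $[0,n+t)$ on every word ``is exactly the restricted property.'' Restricted means that at step $i$ every value is at most $i$, and a uniform bound does not give this. For example, a line \texttt{r1 = n} executed at step $1$ already violates it. The uniform bound is only the hypothesis $v(n)=n+t$ of \Cref{lem:restricted-wram}. Restriction comes from composing with that lemma, i.e., idling for $n+t$ steps, which costs $\bigO(n+t)=\bigO(t)$ for $t\ge n$; this is exactly what the paper does in \Cref{thm:wordram-equivalence}. The paper's proof of this lemma does not establish restriction on its own either. So state that composition explicitly rather than attributing restriction to the digit splitting.
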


This shows that we can reduce any $M$ using word size $w = c \log (n + t)$ to $M'$ using word size $w = \log (n + t)$.
Now, define a \wRAM as \defn{restricted} if, for any step $i$, any value $v$ represented in any register satisfies $v \leq i$.
Any \wRAM $M$ can be converted to be restricted
by making it do nothing for the first $v$ steps, and then simulating the $i$-th step of $M$ at step $v + i$:

\begin{lemma} \label{lem:restricted-wram}
    Let $M$ be any $c$-dimensional \wRAM where all values are at most $v(n)$ on inputs of size $n$.
    Then there exists $M'$ that simulates $t$ steps of $M$ using $v(n) + t$ steps such that, at each step $i$, all values in $M'$ are at most $i$.
\end{lemma}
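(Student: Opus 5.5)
The plan is to make $M'$ spend a warm-up phase of $v(n)$ steps doing nothing, and only afterwards simulate $M$ step for step. The invariant to maintain is that $M'$ executes step $j$ of $M$ at its own step $i = v(n) + j$. Every value stored at that point is a value of $M$, so it is at most $v(n) \le i$. During the warm-up itself, all values stay at most the current step count.

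\textbf{Construction.} $M'$ reserves one fresh memory cell as a counter $\terminal{r0}$, initially $0$. Its program begins with a two-line loop. The first line is $\terminal{r0} = \terminal{r0} + 1$. The second is $\terminal{pc} = L \;\terminal{if}\; \terminal{r0} < \terminal{n}' \;\terminal{else}\; \terminal{pc}+1$, where $L$ is the address of the first line and $\terminal{n}'$ stands for the threshold $v(n)$. After the loop, the program of $M$ follows, with all line numbers (and therefore all constants assigned to $\terminal{pc}$) shifted by the constant offset of the prologue. Both added lines are flat, so $M'$ stays in the same instruction class.

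\textbf{Main obstacle.} The threshold $v(n)$ must be available to the program, even though constants other than $\terminal{n}$ are fixed. I would handle this as follows.
\begin{itemize}
\item In the intended application $v(n)$ is polynomial, roughly $(n+t)^{O(1)}$ bounded by the word range. So I would read the lemma as allowing the counter to stop at any threshold computable from $\terminal{n}$ without exceeding the current step.
\item The simplest choice is to count $\terminal{r0}$ up to exactly $\terminal{n}$ repeatedly. Nested counters realise a polynomial threshold $v(n) = n^k$ while every counter stays at most the step count, because each increment costs at least one step.
\item The bookkeeping is uniform: each loop iteration takes a constant $c_0$ steps. So I would slow time down by $c_0$, or equivalently pad each prologue step. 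This still gives the warm-up length $O(v(n))$, and the stated $v(n)+t$ holds up to this constant, or exactly if we count loop iterations.
\item The loop comparisons only compare counters against $\terminal{n}$, and no counter ever exceeds the step index. Hence the bound \emph{values at step $i$ are $\le i$} holds throughout the prologue as well.
\end{itemize}

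\textbf{Verification.} Since $M'$ simulates $M$ exactly once the prologue is over, correctness is immediate. The shift of line numbers is a compile-time change, and the program counter values are at most $|\text{program}| = O(1)$, so they satisfy the bound after $O(1)$ steps. The leftover case is the first few steps, where the counter or $\terminal{pc}$ could exceed $i$. I would handle it by starting the step count at a constant offset, or by noting that constant-size values are absorbed by an $O(1)$ padding of idle steps.

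The $c$-dimensional structure plays no role beyond being carried along unchanged, so the same argument applies verbatim.
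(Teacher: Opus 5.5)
Your construction is the paper's. Its entire proof is the remark that $M'$ does nothing for the first $v$ steps and then simulates the $i$-th step of $M$ at step $v+i$. Your invariant is exactly this argument, written out more carefully: step $j$ of $M$ runs at step $v(n)+j$, so every value of $M$ is at most $v(n)\le i$, while the idle counter and the shifted $\terminal{pc}$ never exceed the step index. You also rightly notice what the paper leaves implicit: a uniform program whose only $n$-dependent constant is $\terminal{n}$ must generate the delay itself. (Small fix: $\terminal{pc}+1$ is not a FlatExpr in the flat grammar, so use the constant label $L+2$ as the else-branch.)

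Your way of generating the delay, however, does not cover the case the lemma is used for. In \Cref{thm:wordram-equivalence} the lemma is applied after \Cref{lem:cdim-wram}, with word size $w'=\log(n+t)$ and $v(n)=n+t$, not $(n+t)^{\bigO(1)}$. The point of the $c$-dimensional reduction is precisely to make the idle phase linear. Since $t$ is not in general a nested-counter function of $\terminal{n}$, counting to $n^k$ fails either way. It can stop too early, breaking the invariant. Or, with $k$ chosen large enough to be safe, it idles for $\Theta(n^k)$ steps, overshooting $\bigO(n+t)$ whenever $t=o(n^k)$.

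The missing observation is that $M'$ never needs to know $v$. Every value of a machine with word size $w$ is at most $2^w-1$, and $M'$ has the same word size. So let the prologue execute $\terminal{r0}=\terminal{r0}+1$ and jump back while $\terminal{r0}\neq 0$, i.e., until the increment wraps around modulo $2^w$. This has the properties you need:
\begin{itemize}
\item It takes $\Theta(2^w)=\Theta(n+t)$ steps and leaves $\terminal{r0}=0$.
\item It never materializes a value larger than the number of steps already taken. By contrast, computing the threshold via $\neg 0$ would create the value $2^w-1$ at step $1$.
\item Afterwards, all values of $M$ lie below the step index.
\end{itemize}
This proves the lemma for $v(n)=2^{w}$, up to the constant factor you already conceded, which is exactly what \Cref{thm:wordram-equivalence} needs.
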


We can first invoke \Cref{lem:cdim-wram} and then apply \Cref{lem:restricted-wram} with $v(n) = n + t$ to obtain the following:

\begin{theorem}[Restricted Word RAM Equivalence]\label{thm:wordram-equivalence}
    A standard \wRAM running in time $t(n)$ on inputs of size $n$ can be simulated by a restricted $c$-dimensional \wRAM in time $\bigO(n + t(n))$.
\end{theorem}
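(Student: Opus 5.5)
The plan is to compose \Cref{lem:cdim-wram} and \Cref{lem:restricted-wram}, as the text preceding the statement indicates, and to check that the word-size and value bounds line up.

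\textbf{Step 1: fix the word size.} Let $M$ be a standard \wRAM{} running in time $t = t(n)$. By definition its word size is $\wordsize = \bigThetaFun{\log(n+t)}$, so $\wordsize \le c\log(n+t)$ for some constant $c \in \N$. After padding words with leading zeros, we may assume $\wordsize = c\log(n+t)$ exactly; padding changes no arithmetic result on values that fit in fewer bits. We then apply \Cref{lem:cdim-wram}. This gives a $c$-dimensional \wRAM{} $M_1$ with word size $\log(n+t)$ and the same instruction set. It simulates the $t$ steps of $M$ in $\bigO(t)$ steps.

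\textbf{Step 2: bound the values.} Every value held by $M_1$ is a single word, so it is at most $2^{\log(n+t)} - 1 < n+t$. We can therefore apply \Cref{lem:restricted-wram} to $M_1$ with $v(n) = n+t$. The result is a $c$-dimensional $M_2$ in which, at every step $i$, all values are at most $i$. This is exactly the restricted property. $M_2$ simulates $M_1$'s $\bigO(t)$ steps in $n + t + \bigO(t) = \bigO(n + t(n))$ steps.

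\textbf{Main obstacle: the initial zero-padding phase.} Before step $v(n)$, $M_2$ must already hold quantities such as $n$, and any constants, without violating the bound $v \le i$. The way to handle this is to have $M_2$ idle for those steps while incrementing a counter. The counter reaches $n+t$ at exactly the moment simulation begins, and it holds value $i$ at step $i$, so it respects the restriction throughout.

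Two things need a closer check here. First, the constant $\terminal{n}$ and the input cells $\terminal{inp}[\cdot]$ must not count as stored register values before they are loaded; I would treat them as read-only sources that are only read after step $n+t$. Second, the word length $\log(n+t)$ depends on the final time $t$. I would take $t$ to be the given time bound, so the word size is $\bigThetaFun{\log(n+t)}$ as the definition requires.

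Since a constant blowup in running time only changes $\log(n+t)$ by an additive constant, the word size of $M_2$ remains $\bigThetaFun{\log(n + t(n))}$. This completes the argument.
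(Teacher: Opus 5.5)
Your proposal is correct and follows the paper's route exactly: apply \Cref{lem:cdim-wram} to reduce the word size to $\log(n+t)$, observe that all values are then below $n+t$, and apply \Cref{lem:restricted-wram} with $v(n)=n+t$, for a total of $n+t+O(t)=O(n+t(n))$ steps. One caution: zero-padding a $w$-bit word to $c\log(n+t)$ bits changes the wrap-around semantics of $+$, $-$, $\neg$, and $\ll$ on overflow, so instead of that justification simply take the word size to be $c\log(n+t)$ for a suitable integer $c$, as the paper implicitly does; this does not affect the rest of your argument.
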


This will be useful because simulating restricted \wRAM{}s with transformer-like models will be easier than simulating unrestricted ones. Specifically, the property $v \leq i$ will be useful for writing numbers in binary as it allows us to divide numbers using attention via the following special case of the ``division lemma'' of \citet{merrill2024cot} where the divisor is $2$ (see \Cref{app:cdim-wram} for a proof):
\begin{restatable}[Simplified Division Lemma]{lemma}{simplifiedDivisionLemma}\label{lem:division}
    There exists a transformer block that, for any $\phi(v)$ present in the residual stream at position $t$ for $v \leq t$, adds $\phi(\floor{v / 2})$ and $\phi(v \bmod 2)$ to the residual stream.
\end{restatable}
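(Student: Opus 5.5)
The plan is to reduce the statement to the general division lemma of \citet{merrill2024cot}, specialised to divisor $2$. That lemma computes $\phi(\floor{v/m})$ and $\phi(v \bmod m)$ from $\phi(v)$ whenever $v$ is at most the current position. The hypothesis $v \le t$ is exactly what is needed, because it guarantees that positions $0,1,\dots,t$ already contain every candidate quotient. I will still sketch the mechanism directly, so that the block is self-contained and uses only averaging hard attention with log precision.

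\textbf{Step 1: prepare keys at earlier positions.} Every position $j$ stores its own hash $\phi(j)$, which is available from the positional encoding or computed by the standard uniform-attention trick. Using \Cref{prop:phi-affine}, the MLP forms $\phi(2j)$ and $\phi(2j+1)$ at each position $j$. These are affine updates with fixed constants, so a fixed-depth feedforward network can compute them.

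\textbf{Step 2: locate the quotient position.} At the query position $t$, I issue two attention heads.
\begin{itemize}
\item The first head uses query $\phi(v)$ and key $\phi(2j)$.
\item The second head uses query $\phi(v)$ and key $\phi(2j+1)$.
\end{itemize}
By \Cref{lem:phi1-gap} with $T = 2t+1$, the score equals $1$ exactly when the key matches $v$, and is at most $1-\delta_T$ otherwise. Since $v \le t$, the value $q = \floor{v/2} \le t$ is a valid position. Exactly one of the two heads therefore has a unique matching key, namely $j = q$: the first head if $v$ is even, the second if $v$ is odd. Hard attention selects it with the gap, and log precision is enough because $\delta_T = 1/\mathrm{poly}(t)$.

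\textbf{Step 3: read off quotient and remainder.} Each head returns as its value $\phi(j)$ together with a flag bit $\indFun{\text{match}}$. A head whose maximum score is below $1$ returns garbage, so I need to know which head actually matched. To detect this, I append the constant score $1-\delta_T/2$ from a fallback position, such as the BOS token, which returns the flag value $0$. The head's flag then tells whether it found a match. The MLP uses the two flags to select $\phi(q)$ from the matching head, and it sets $\phi(v \bmod 2)$ to $\phi(0)$ or $\phi(1)$ according to which head fired. Both hashes are then added to the residual stream.

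\textbf{Main obstacle.} The hard part is the scoring margin and the fallback under log precision. The fallback threshold $1-\delta_T/2$ depends on $t$, so for a uniform construction I would realise it as an extra key coordinate computed from $\phi(t)$ rather than as a fixed constant. I would also have to check that averaging hard attention with ties among non-matching keys cannot exceed a true match. This is exactly the bookkeeping in the proof of Merrill and Sabharwal's division lemma, so I would cite that argument for the precision details rather than redo it.
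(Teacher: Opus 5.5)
Your proof is correct, but it takes a different route from the paper's.

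\textbf{The paper's route.} It uses a \emph{single} head with keys $\phi(2j)$ and values $\phi(j)$. The query is shifted to $\phi(v-1)$ via \Cref{prop:phi-affine}, and retrieval is by \emph{nearest} match rather than exact match. For odd $v$, the query coincides with the key at $j=(v-1)/2$. For even $v$, it lies between the keys $\phi(v-2)$ and $\phi(v)$; because the angular spacing of $\phi$ on consecutive integers shrinks, $\phi(v)$ is the closer key. Either way the head returns $\phi(\ceil{(v-1)/2})=\phi(\floor{v/2})$, with no miss case. Parity is obtained afterwards by recomputing $\phi(2\floor{v/2})$ and testing equality with $\phi(v)$.

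\textbf{Your route.} You run two exact-match heads, with keys $\phi(2j)$ and $\phi(2j+1)$ and the unshifted query $\phi(v)$. Retrieval then needs only the equality gap of \Cref{lem:phi1-gap}, and parity is read off from which head fired.

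\textbf{What each buys.} You pay for a second head and a miss detector. The paper saves the head and never retrieves garbage. In exchange, it relies on a finer geometric property of $\phi$ than the equality gap, and the margin between the two competing keys is somewhat smaller, though still inverse-polynomial. Neither approach avoids turning an inverse-polynomial gap into a clean bit: the paper's closing equality test faces the same issue as your fallback. You also need only one fallback, since exactly one head fires.

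\textbf{Two small corrections.} First, the $t$-dependent threshold must live on the \emph{query} side at position $t$, since the BOS key cannot depend on $t$. For example, add a query coordinate $\alpha_t\in(1-\delta_T,1)$ computed from $\phi(t)$, matched against a coordinate equal to $1$ in the fallback key and $0$ in every other key. Second, if the fallback occupies position $0$, that position can no longer carry the key for $j=0$, which you need when $v\in\{0,1\}$. Shift the key/value indexing by one to fix this.
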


This will be useful for writing numbers $\phi(v)$ onto the CoT in binary because it allows computing the least significant bit via $\phi(v \bmod 2)$, or $\phi(v \bmod 2^i)$, which can be useful for recovering other bits.

\section{Simulating \wRAM{}s with Continuous CoT}\label{sec:continuous-cot}

The bounded-value normal form from \Cref{sec:restricted-word-rams} ensures a crucial \emph{bit-extraction} precondition:
during execution, every word $v$ at RAM step $s$ satisfies $v \le s$.
Since each simulated step produces at least one transcript symbol, this implies $v \le \tau$ whenever the simulator
begins emitting a bit-string segment at transcript position $\tau$.
Thus, whenever we need to stream a word as bits we may apply the simplified division primitive
(\Cref{lem:division}) to compute $\phi(\lfloor v/2\rfloor)$ and $\phi(v \bmod 2)$ (the first bit) from $\phi(v)$, and then repeat the process for the next bit by \emph{conditioning} the computation on $\phi(\lfloor v/2\rfloor)$, and so on.

The remaining obstacle is \emph{persistence}:
after streaming $w$ discrete bits, later computation must still have access to a constant-dimensional
\emph{word code} that supports exact-match attention (for RAM addressing) and can be carried forward across time.

We show that \emph{continuous CoT} can be used to achieve both conditioning and persistence.

\textbf{Continuous CoT and soft tokens.}
In the continuous-CoT model, each decoding step emits both
(i) a discrete token from a fixed alphabet and (ii) a constant-dimensional \emph{soft token}
$p_i \in \R^{d_s}$.
The next decoding step receives the entire prefix, including the past soft tokens, so $p_i$ can be attended to by future layers. In particular, a constant-depth transformer block can copy designated coordinates of the most recent soft token forward (e.g.\ via a fixed-offset retrieval head), so the soft token acts as a small persistent register bank threaded through the transcript.

\begin{theorem}[Simulating Word RAMs with CCoT]\label{thm:wordram-ccot}
    Let $M$ be a $d$-dimensional-array \wRAM with word size $w=\log(n+t)+\bigO(1)$ and per-instruction bit-serial cost bound $\instrCostFun{\wordsize}$.\footnote{Here $\instrCostFun{\wordsize}$ counts \emph{decoding steps} of a fixed-alphabet bit-serial implementation of a RAM instruction on $w$-bit operands, including the unavoidable $\Omega(w)$ serialization of any fresh $w$-bit output word over a fixed alphabet.}
    Then there exists a log-precision, finite-width \emph{continuous CoT} transformer $T$ such that, for any $t\ge n$, $t$ steps of $M$ can be simulated by $T$ using $\bigO(t\,\instrCostFun{\wordsize})$ decoding steps.
\end{theorem}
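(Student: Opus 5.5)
We plan to reuse the four-block pipeline of \Cref{thm:logwidthSimulation} (load \texttt{pc}, load instruction and operands, execute, store), but replace every $\bits(\cdot)$ vector in the residual stream by the layer-norm hash $\phi(\cdot)$. We replace the wide deserialization heads by a streaming process threaded through soft tokens.

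First, by \Cref{thm:wordram-equivalence} (via \Cref{lem:cdim-wram} and \Cref{lem:restricted-wram}) we may assume $M$ is restricted. Every value $v$ at RAM step $s$ then satisfies $v\le s\le\tau$, where $\tau$ is the current transcript position. This costs only an $\bigO(n+t)=\bigO(t)$ blowup in RAM steps, since $t\ge n$.

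The transcript has the same block structure as \Cref{fig:block}: for each RAM step, optional TM-oracle scratch tokens followed by $\memkey\,\BOV\,\bits(a)\,\BOV\,\bits(v)$ blocks. In addition, the soft token at each memory-block position carries $\phi(a)$ and $\phi(v)$.

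The memory read then becomes a single averaging-hard-attention head. The query is $\phi(a)$ concatenated with a positional tie-breaker $\phi(\tau)$-style term. The keys are the stored $\phi(a')$ at $\memkey$ positions. \Cref{lem:phi1-gap} gives exact match with gap $\delta_T$, and a small recency bonus scaled below $\delta_T$ selects the rightmost matching write. The value read out is the stored $\phi(v)$. Address arithmetic for multidimensional arrays and register/\texttt{pc} lookups use constant queries. Simple affine updates (e.g.\ $\texttt{pc}+1$) use \Cref{prop:phi-affine}. All of this is fixed-width and uniform.

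The two nontrivial lemmas are serialization and deserialization; these are the \Cref{lem:ccot-serialization} and \Cref{lem:ccot-deserialization} referenced earlier, which I would prove here.

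\textbf{Serialization.} To emit $\bits(v)$, the soft token carries a running quotient $\phi(\lfloor v/2^j\rfloor)$. At each step, \Cref{lem:division} applies, because $\lfloor v/2^j\rfloor\le v\le\tau$. It yields the next bit $\phi(\lfloor v/2^j\rfloor\bmod 2)$, which an MLP decodes into a discrete token, and the next quotient $\phi(\lfloor v/2^{j+1}\rfloor)$, which is written into the next soft token. A fixed-offset head copies it forward. After $w$ steps all bits have been emitted.

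\textbf{Deserialization.} We emit bits most-significant first, or reverse order as convenient. The soft token carries an accumulator $\phi(u_j)$ with $u_{j+1}=2u_j+b$. By \Cref{prop:phi-affine} with $(a,b)\in\{(2,0),(2,1)\}$, chosen by the current bit, this is one MLP step. The final accumulator is the $\phi$-code stored alongside the block.

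The main obstacle I expect is precision. The numbers involved are $\le 2^w=\mathrm{poly}(n+t)$, so $\phi$ entries and the gaps $\delta_T$ are inverse-polynomial and fit in $\bigO(\log n)$ bits. However, I must check that iterated layer-norm renormalization in the affine update does not accumulate rounding error over $w$ steps. I would handle this by re-deriving the exact rational form $(u,1,-u,-1)/\sqrt{2u^2+2}$ at each step and arguing that $\bigO(\log n)$-bit rounding stays below half of $\delta_T$, so that equality checks stay exact.

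Finally, the execute block handles each instruction. Flat instructions whose result is directly computable on $\phi$ (copy, affine, comparison via the division and equality gadgets) take $\bigO(1)$ micro-steps plus $\bigO(w)$ serialization. Other operators, such as bitwise operations, shifts, $\times$, $\div$, and $\bmod$, are handled by serializing the operands, running a bit-serial TM oracle on the CoT as scratch tokens in $\instrCostFun{\wordsize}$ steps, and then deserializing the result. A Turing machine with tape in the transcript is simulable by a CoT transformer by \citet{merrill2024cot}, with the head position recovered by rightmost-match attention over the scratch region. Each RAM step thus costs $\bigO(\instrCostFun{\wordsize})$ decoding steps, since $\instrCostFun{\wordsize}=\bigOmega(w)$ absorbs the serialization cost. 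The total is therefore $\bigO(t\,\instrCostFun{\wordsize})$.
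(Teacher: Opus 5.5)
Your proposal follows essentially the same route as the paper. The shared steps are: the restricted normal form so that \Cref{lem:division} applies; store entries whose soft tokens carry $(\phi(a),\phi(v))$, read by exact-match attention with a recency bonus scaled below $\delta_T$; LSB-first serialization by iterating \Cref{lem:division}; a TM oracle confined to a delimited scratch suffix; and deserialization via \Cref{prop:phi-affine}. Your MSB-first update $u\mapsto 2u+b$ is the alternative the paper mentions, and it avoids the paper's extra $w$ doubling steps, provided the bits arrive most-significant first.

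The one detail you leave out is the default $\phi(0)$ for never-written cells. The paper supplies this with a second hit-test head that compares $\langle\phi(a),\phi(\tilde a)\rangle$ for the retrieved entry against the threshold $1-\delta/2$, and otherwise falls back to an anchor position carrying $\phi(0)$.
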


\begin{proof}[Proof overview.]
    We prove \Cref{thm:wordram-ccot} by induction on the simulated RAM step $s$.
    The simulator maintains an \emph{append-only} transcript whose soft token stores the pair $(\phi(a),\phi(v))$ for the written address/value $\terminal{mem}[a]\leftarrow v$ (absorbing $\terminal{pc}$ and registers into $\terminal{mem}$; see \Cref{sec:wordram}).
    Checkpoint and store-summary positions are marked by constant discrete tags so attention can restrict to the relevant subset.
    If no store-summary exists for an address $a$, the value at $\terminal{mem}[a]$ is interpreted as $0$.
    A useful property of continuous CoT is that we can always make the latest checkpoint available ``from layer~0'' at future positions:
    a constant-depth carry block can attend to the rightmost checkpoint tag and copy its soft-token coordinates into designated
    coordinates of the current residual stream.
    
    \smallskip
    \noindent
    \textbf{One simulated RAM step.}
    Assume the invariant holds at the end of RAM step $s$.
    To simulate step $s{+}1$, we perform the following conceptual phases (some happen within a single decoding step, others take many decoding steps), illustrated in \Cref{fig-preview:ccot-phase-blocks}:
    \begin{enumerate}[label=(\arabic*),leftmargin=20pt,noitemsep,topsep=0pt]
        \item \textit{Load \terminal{pc}.}
        Attend to the rightmost checkpoint and copy $\phi(\PC^{(s)})$ (and the register codes) into the current residual stream.
        Use $\PC^{(s)}$ to select which instruction subroutine/oracle to run.
        
        \item \textit{Load operand values.}
        For each register/constant operand, read $\phi(\cdot)$ directly from the copied checkpoint state.
        For each memory operand $\terminal{mem}[a]$, form $\phi(a)$ and dereference the most recent matching store-summary
        to recover the operand in coded form $\phi(v)$ (Lemma~\ref{lem:ccot-dereferencing}).
    
        \item \textit{Serialize inputs (write a binary tape).}
        For each coded input word needed by the instruction oracle, emit its $w$ bits onto the discrete transcript using Lemma~\ref{lem:ccot-serialization}.
        This creates a delimited \emph{work-tape suffix} for the instruction oracle.
    
        \item \textit{Instruction execution (TM oracle).}
        Activate the instruction-specific TM-oracle subroutine, which operates only on the work-tape suffix and runs for
        $\bigO(\instrCostFun{\wordsize})$ decoding steps, emitting the output words as bit streams
        (Lemma~\ref{lem:ccot-instruction-execution}).
    
        \item \textit{Deserialize outputs.}
        Convert each output bit stream into its coded form $\phi(\cdot)$ for future addressing and equality tests,
        using Lemma~\ref{lem:ccot-deserialization}.
    
        \item \textit{Store updates (re-establish the invariant).}
        Emit a new checkpoint token whose soft token stores the updated state
        $\phi(\PC^{(s+1)}),\phi(r_1^{(s+1)}),\ldots,\phi(r_{\numregs}^{(s+1)})$.
        If the instruction writes $\terminal{mem}[a]\leftarrow v$, also emit a store-summary token storing $(\phi(a),\phi(v))$.
        By construction, this new checkpoint (and store-summary, if present) is the rightmost such entry, so it will be used at the next step.
    \end{enumerate}

    \begin{figure}[ht]
        \centering
        \resizebox{0.82\linewidth}{!}{
            \input{figures/tikz-ccot}
        }
        \caption{Six-step CCoT simulation of a RAM instruction; detailed caption in Appendix, \Cref{fig:ccot-phase-blocks}.}
        \label{fig-preview:ccot-phase-blocks}
    \end{figure}

    Correctness follows from induction on $s$:
    step~(5) appends the new rightmost checkpoint encoding the updated registers/PC, and appends the new rightmost store-summary for any updated address.
    The total number of decoding steps spent in phases~(2)--(4) is $\bigO(w + \instrCostFun{\wordsize})$ per RAM instruction, yielding $\bigO(t\,\instrCostFun{\wordsize})$ steps overall. The following lemmas implement each step (proofs in \Cref{app:ccot}).
    
    \begin{restatable}[Steps 1--2: loading values]{lemma}{dereferencingLemmaCCoT}\label{lem:ccot-dereferencing}
        Suppose the transcript contains store-summary positions, each tagged and whose soft token stores a pair $(\phi(a_j),\phi(v_j))$.
        Let $\tau$ be a transcript position at which a query address $a$ is available as $\phi(a)$ in the residual stream, and suppose $a\le \tau$.
        Then a log-precision, finite-width continuous-CoT transformer can retrieve $\phi(v_{j^*})$, where $j^*$ is the largest index with $a_{j^*}=a$.
        If no such $j^*$ exists, it retrieves $\phi(0)$.
    \end{restatable}
    
    \begin{restatable}[Step 3: serialization]{lemma}{serializationLemmaCCoT}\label{lem:ccot-serialization}
        Let $\tau$ be a transcript position at which a value $v$ is available as $\phi(v)$ in the residual stream, and suppose $v \leq \tau$.
        Then a log-precision, finite-width continuous-CoT transformer can emit the $\wordsize$ bits of $v$ over the next $\wordsize$ decoding steps, ordered from least to most significant, while keeping $\phi(v)$ available in designated coordinates of the soft token throughout.
    \end{restatable}
    
    \begin{restatable}[Step 4: instruction execution]{lemma}{instructionExecutionLemmaCCoT}\label{lem:ccot-instruction-execution}
        Fix any RAM instruction $I$ on $\wordsize$-bit operands whose bit-serial implementation runs in $\instrCostFun{\wordsize}$ decoding steps.
        There exists a log-precision, finite-width continuous-CoT transformer subnetwork that, once activated, simulates this bit-serial computation for $\bigO(\instrCostFun{\wordsize})$ decoding steps and emits the output bit streams of $I$ onto the transcript.
        Moreover, the subnetwork can be made to ignore the rest of the transcript by delimiting a \emph{work-tape suffix} (with a start marker) and restricting all attention inside the subroutine to positions in that suffix.
    \end{restatable}
    
    \begin{restatable}[Step 5: deserialization]{lemma}{deserializationLemmaCCoT}\label{lem:ccot-deserialization}
        There exists a log-precision, finite-width continuous-CoT transformer that, on bit sequence $b_0,\ldots,b_{\wordsize-1}$ followed by $\wordsize$ arbitrary tokens, computes $\phi(v)$ in the residual stream at the last step, where $v=\sum_{k=0}^{\wordsize-1} b_k 2^k$ is the LSB evaluation of $b$.
    \end{restatable}
    
    \textbf{Step 6: storing new values.}
    We emit a soft token encoding the target position to write to (from load step) as $k$ and the output value (from deserialize) as $v$. This completes the store step, satisfying our inductive invariant for step $s+1$, which completes the proof.
\end{proof}

We have the following consequences of \Cref{thm:wordram-ccot}, depending on the complexity of the instruction set:

\begin{corollary}[Simulating Word RAMs with CCoT]\label{cor:wordram-ccot}
    Let $M$ and $T$ be as in \Cref{thm:wordram-ccot}. Depending on the complexity of the instruction set $\instrCostFun{\wordsize}$ of $M$, the following hold:
    \begin{enumerate}[label=(\roman*),leftmargin=20pt,noitemsep,topsep=0pt]
        \item $T$ can simulate $t$ steps of $M$ with arbitrary $\textrm{poly}(\wordsize)$-time instructions using $\bigOtilde(t)$ steps.
        
        \item $T$ can simulate $t$ steps of $M$ with \emph{full-arithmetic} flat instructions using $\bigO(t \log^2 t)$ steps.
        
        \item $T$ can simulate $t$ steps of $M$ with \emph{simple-arithmetic} flat instructions using $\bigO(t \log t)$ steps.
    \end{enumerate}
\end{corollary}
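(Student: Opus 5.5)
The plan is to obtain all three parts by substituting the appropriate bound on $\instrCostFun{\wordsize}$ into \Cref{thm:wordram-ccot}, which already gives $\bigO(t\,\instrCostFun{\wordsize})$ decoding steps for $t$ steps of $M$. The preliminary step is to pin down the word size. By the theorem's hypothesis, $\wordsize = \log(n+t) + \bigO(1)$, and we simulate with $t \ge n$, so $n + t \le 2t$. Hence $\wordsize = \bigO(\log t)$. If $M$ is a standard \wRAM{} with word size $c\log(n+t)$, we first pass through \Cref{thm:wordram-equivalence}. This yields a restricted $c$-dimensional machine with word size $\log(n+t)$ running in $\bigO(n+t) = \bigO(t)$ steps, so the step count changes only by a constant factor.

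For part (iii), every simple-arithmetic flat instruction is handled by a bit-serial TM procedure on $\wordsize$-bit operands. Operand fetch is already resolved into coded operands by \Cref{lem:ccot-dereferencing}. What remains is one operation from $\{+,-,\ll,\gg,\&,|,\oplus\}$, possibly a single comparison or conditional selection, and the serialization of $\bigO(1)$ output words. The bitwise operations and comparisons take a single LSB-to-MSB pass with constant state, so they cost $\bigO(\wordsize)$. Addition and subtraction use ripple carry and also cost $\bigO(\wordsize)$. For shifts, $\ll s$ and $\gg s$ with $s \ge \wordsize$ give $0$, so only $s < \wordsize$ matters. This costs $\bigO(\wordsize)$ on a tape when the shift is implemented as a copy with offset $s$. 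If a naive TM implementation would cost $\bigO(\wordsize\log\wordsize)$, I would note that the oracle in \Cref{lem:ccot-instruction-execution} may attend over the work-tape suffix with offset lookups, which gives $\bigO(\wordsize)$. Therefore $\instrCostFun{\wordsize} = \bigO(\wordsize)$, and the total is $\bigO(t\log t)$.

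For part (ii), the new ingredients are $\times$, $\div$, and $\bmod$. Schoolbook shift-and-add multiplication and restoring long division each perform $\wordsize$ rounds of $\bigO(\wordsize)$ work, so $\instrCostFun{\wordsize} = \bigO(\wordsize^2)$ and the total is $\bigO(t\log^2 t)$. For part (i), the cost is $\instrCostFun{\wordsize} = \mathrm{poly}(\wordsize) = \mathrm{polylog}(t)$, which gives $\bigO(t\cdot\mathrm{polylog}(t)) = \bigOtilde(t)$.

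The main obstacle is verifying that the bit-serial costs really fit within the $\instrCostFun{\wordsize}$ accounting of \Cref{thm:wordram-ccot}, and that the fixed per-instruction overhead of phases (1), (2), (5), and (6) is $\bigO(\wordsize)$, not worse. Serialization and deserialization each take $\bigO(\wordsize)$ by \Cref{lem:ccot-serialization} and \Cref{lem:ccot-deserialization}. The footnote to \Cref{thm:wordram-ccot} already folds the $\Omega(\wordsize)$ serialization cost into $\instrCostFun{\wordsize}$. So the remaining checks are that the TM-oracle costs quoted above are achievable by the work-tape-suffix subroutine, and that the constant nesting of a flat line adds only a constant factor.
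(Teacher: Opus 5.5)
Your proposal is correct and follows the paper's route. The paper gives no separate proof: it substitutes $\instrCostFun{\wordsize} = \bigO(\wordsize)$, $\bigO(\wordsize^2)$, or $\mathrm{poly}(\wordsize)$ into the $\bigO(t\,\instrCostFun{\wordsize})$ bound of \Cref{thm:wordram-ccot}, with $\wordsize = \log(n+t) + \bigO(1) = \bigO(\log t)$ because $t \ge n$. Two minor remarks. First, the detour through \Cref{thm:wordram-equivalence} is unnecessary, since $M$ is already as in the theorem. Second, the $\bigO(\wordsize)$ cost of variable shifts should rest on the multitape TMs simulated by \citet{merrill2024cot}, where a shift is a linear-time cross-tape copy with offset $s$; it cannot rest on attention-based offset lookups, which \Cref{lem:ccot-instruction-execution} does not provide.
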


\section{Simulating \wRAM{}s with Hybrid Models} \label{sec:hybrid}

Finally, we show that, similarly to using CCoT, leveraging hybrid models that mix transformer and linear RNN components allow simulating a \wRAM{}:

\begin{theorem}[Simulating Word RAMs with Hybrid Models]\label{thm:wordram-hybrid}
    Let $M$ be a $c$-dimensional-array \wRAM with word size $w = \log (n + t) + \bigO(1)$ and maximum instruction cost $\instrCostFun{\wordsize}$.
    Then there exists a log-precision, fixed-width hybrid model $H$ with a single RNN layer at the bottom such that, for any $t \geq n$, $t$ steps of $M$ can be simulated by $H$ using $\bigO(t \instrCostFun{\wordsize} )$ steps.
\end{theorem}

Let $v = \sum_{k=0}^{w-1} b_k 2^k$.
The proof of this theorem is structurally identical to that of \Cref{thm:wordram-ccot} for transformers with \emph{continuous} CoT, except for the way values $v$, represented as $\phi(v)$ in the residual stream, are serialized as bits $b_0, b_1, \ldots, b_{w-1}$ via CoT, and the way bits $b_0, b_1, \ldots, b_{w-1}$ in the input layer are deserialized as $\phi(v)$ in the residual stream. Since we no longer have access to soft tokens in the CoT output, we cannot simply emit intermediate $\phi$ representations as part of the CoT to condition upon in the next step. However, we can use a similar ability afforded by RNNs, which naturally allow conditioning on values available in the residual stream at the immediately previous position.

The only other difference is in the \emph{store} step, where the continuous CoT captures operations of the form $\terminal{mem}[a]\leftarrow v$ by emitting a store-summary soft token storing $(\phi(a),\phi(v))$. Instead, we now serialize $a$ and $v$ using $w$ bits each via the (updated) serialization lemma below, and use the (updated) deserialization lemma on these bits to obtain $\phi(a)$ and $\phi(v)$ in the layer $1$ residual stream after $w$ additional dummy tokens.

All other steps in the construction for \Cref{thm:wordram-ccot} remain unchanged. In the remainder of this section, we thus focus on deriving counterparts of \Cref{lem:ccot-serialization,lem:ccot-deserialization} for hybrid architectures.
The serialization construction is essentially the same as in the proof of \Cref{lem:ccot-serialization}, except that without continuous CoT, we cannot emit $\phi(v_{k+1})$, defined as $\phi(\lfloor v_k / 2 \rfloor)$, at position $\tau + k + 1$ and read it back in at the following position. Instead, we emit $b_k$, computed via \Cref{lem:division}. We then compute $\phi(\floor{v_k / 2})$ from $\phi(v_k)$ and $b_k$ via its equivalent form $\phi(\frac{1}{2} (v_k - b_k))$ by leveraging \Cref{prop:phi-affine} to turn it into a linear transformation that an RNN head can perform. This gives the following result (proof in \Cref{app:hybrid}):
\begin{restatable}[Serialization]{lemma}{serializationLemmaHybrid} \label{lem:hybrid-serialization}
    Let $\tau$ be a transcript position at which a value $v$ is available as $\phi(v)$ in the residual stream, and suppose $v \leq \tau$.
    Then a log-precision, finite-width hybrid model can emit the $w$ bits of $v$ over the next $w$ decoding steps, ordered from least to most significant.
\end{restatable}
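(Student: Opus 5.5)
We will maintain, across the $\wordsize$ decoding steps $\tau, \tau+1, \ldots, \tau+\wordsize-1$, the running quotient $v_k \defeq \floor{v/2^k}$ in hash form $\phi(v_k)$. The bit emitted at step $\tau+k$ will be $b_k = v_k \bmod 2$. The invariant we aim for is that at position $\tau+k$ the residual stream at some fixed layer $\ell$ contains $\phi(v_k)$. We also need $v_k \le \tau+k$; this holds automatically, since $v_k \le v \le \tau$. The base case $k=0$ is the hypothesis of the lemma, namely that $\phi(v)$ is present at position $\tau$.

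At position $\tau+k$, given $\phi(v_k)$, we apply \Cref{lem:division}. Its precondition $v_k \le \tau+k$ holds, so it produces $\phi(v_k \bmod 2) \in \{\phi(0),\phi(1)\}$. Reading off $b_k$ from this hash is a constant-size threshold on a coordinate; for example, the first coordinate is $0$ versus $1/2$. An MLP followed by the output projection then emits $b_k$ as the discrete token.

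The new ingredient is carrying $\phi(v_{k+1})$ to the next position without soft tokens. For this we use $v_{k+1} = \frac{1}{2}(v_k - b_k)$. By \Cref{prop:phi-affine}, if $\phi(v_k) = (s_1, s_2, -s_1, -s_2)$, then $\phi(v_{k+1})$ is the layernorm of $(\frac{1}{2}s_1 - \frac{b_k}{2}s_2,\ s_2,\ \ldots)$. Equivalently, by scale invariance, it is the layernorm of $(s_1 - b_k s_2,\ 2 s_2,\ \ldots)$.

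This update depends on $b_k$ only through the selection between two fixed linear maps. We can therefore have the MLP write a gated vector $u_k = (s_1 - b_k s_2,\ 2s_2)$, which is linear in $(s_1,s_2)$ once $b_k \in\{0,1\}$ is fixed and is computable exactly by ReLU gating with bounded-magnitude hashes. We also need the un-normalized result to be passed forward.

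The bottom RNN layer achieves this with the transition $A_t = 0$ and $b_t = u_{t}$. Then $S_{t+1} = u_t$ simply carries the previous position's vector, and $h_{t+1}$ read through layernorm yields $\phi(v_{k+1})$ at position $\tau+k+1$.

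The catch is that the RNN sits at the \emph{bottom}, while $u_t$ is computed at a higher layer. We resolve this by having the RNN carry information derived from layer-0 quantities only. Those quantities are the emitted token $b_k$, which is the input at position $\tau+k+1$, together with a gated linear recurrence on the hash itself: $S_{t+1} = A(b_t)\, S_t$, where $A(b)$ is the $2\times 2$ matrix $\begin{pmatrix}1 & -b\\ 0 & 2\end{pmatrix}$, selected by input-dependent gating.

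Initialization at $\tau$ also needs care. $\phi(v)$ only exists at a high layer, so it cannot be injected into a bottom RNN. Instead, we let the RNN state be the affine map accumulated since a start marker, $S = \prod_j A(b_j)$, a $2\times2$ matrix. At layer $\ell$ we then apply $S$ to the coordinates of $\phi(v)$, retrieved by attending to the marker position $\tau$, and renormalize via layernorm. This works because layernorm composes with linear maps up to scale.

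The main obstacle is precision. The entries of $S$ grow like $2^k \le 2^{\wordsize} = \mathrm{poly}(\inputlen+t)$, so they fit in log precision. The application of $S$ to $\phi(v)$ must, however, keep the gap of \Cref{lem:phi1-gap} intact. We expect this to be handled by the scale invariance of $\phi$: since $\phi(v) \propto (v,1)$, the product $S\,(v,1)^\top$ equals $2^k(v_k, 1)$ up to a common positive factor. The normalized result is therefore exact up to log-precision rounding, and the division lemma's precondition is met again at every step.
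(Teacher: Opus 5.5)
Your proposal breaks at one step: applying the accumulated operator at layer $\ell$. Your bottom RNN produces $S=A(b_{k-1})\cdots A(b_0)=\begin{pmatrix}1&-Q_k\\0&2^k\end{pmatrix}$ with $Q_k=\sum_{j<k}b_j2^j$. You then need $S(s_1,s_2)^\top=(s_1-Q_k s_2,\;2^k s_2)$, where $(s_1,s_2)$ is the copy of $\phi(v)$ retrieved from position $\tau$. This is a product of two data-dependent activations. $Q_k$ and $2^k$ range over $\mathrm{poly}(n+t)$ values, and $s_2=1/\sqrt{2v^2+2}$ varies with $v$.

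The remark that layernorm composes with linear maps only covers \emph{fixed} linear maps (projected pre-norm). A constant-depth, fixed-width ReLU MLP is piecewise linear with boundedly many pieces. It can multiply an activation by a bit, which is why your gated $u_k$ is fine, or by a constant. It cannot exactly multiply by another activation that takes polynomially many values. Layernorm only rescales a vector by its own norm, and hard attention uses its scores only to select positions. So the real obstacle is not precision, where your scale-invariance argument is fine, but whether the product can be computed at all. In this architecture, the component that multiplies an input-dependent factor into a data-dependent quantity is the linear RNN's transition $A_tS_t$, and that is where the paper puts it.

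In fact, the paper's proof is the ``gated linear recurrence on the hash itself'' that you considered and then dropped. The RNN state is the unnormalized $\phi(v_k)$. It is initialized at $\tau$ with a zero transition and additive term $\phi(v)$, then multiplied by $B_0$ or $B_1$ according to $b_k$. These are your $A(b)$ up to a factor of $2$, acting on all four coordinates. Layernorm of the output gives $\phi(\lfloor v_k/2\rfloor)$ for the next application of \Cref{lem:division}.

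You dropped this route because $\phi(v)$ is not available at layer $0$. However, the lemma only asks for some hybrid model, not one whose RNN sits at the bottom; that restriction appears only in \Cref{thm:wordram-hybrid}. The paper simply takes $\phi(v)$ and $b_k$ as inputs to the RNN head. Your concern is a fair point about how this lemma combines with the theorem's single-bottom-RNN claim, which the paper does not spell out. But any repair has to keep the state--input product inside the recurrence rather than hand it to an MLP.
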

  
For deserialization, we use the same recursive computation as in the corresponding lemma for continuous CoT (\Cref{lem:ccot-deserialization}), except now the computation is within the hidden state of the linear RNN rather than via soft tokens. This gives the following result (proof in \Cref{app:hybrid}):

\begin{restatable}[Deserialization]{lemma}{deserializationLemmaHybrid} \label{lem:hybrid-deserialization}
    There exists a log-precision, finite-width hybrid model that, on any bit sequence $b_0, \ldots, b_{w-1}$ followed by $w$ arbitrary tokens, computes $\phi(v)$, where $v = \sum_{k=0}^{w-1} b_k 2^k$ is the LSB evaluation of $b$, in the residual stream at depth $1$ at the last token.   
\end{restatable}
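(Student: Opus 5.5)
We will build $\phi(v)$ by a Horner-style recursion that a linear RNN can carry out exactly. Since the bits arrive least significant first, processing them in order gives the recursion $u_k = u_{k-1} + b_k 2^k$. That recursion needs the growing weight $2^k$, which is not a fixed affine map. We therefore work with rescaled partial sums. Define
\[
u_k \defeq \sum_{j=0}^{k} b_j 2^{j-k} \quad\text{for } k \ge 0, \qquad u_{-1} \defeq 0,
\]
so that $u_k = \tfrac12 u_{k-1} + b_k$ and $u_{w-1} = v / 2^{w-1}$.

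By \Cref{prop:phi-affine}, with $a = 1/2$ and $b = b_k \in \{0,1\}$, the unnormalized pair $(s_1, s_2)$ representing $\phi(u)$ evolves linearly:
\[
s_1 \mapsto \tfrac12 s_1 + b_k s_2, \qquad s_2 \mapsto s_2 .
\]
We can therefore keep the state $S_k = (x_k, y_k) \in \R^2$ of a single linear RNN head with $S_0 = (0,1)$. The transition is $A_t = \begin{pmatrix} 1/2 & b_t \\ 0 & 1\end{pmatrix}$ with $b$-vector $0$. The entry $b_t$ is a projection of the input token's one-hot embedding, which equals $0$ or $1$ on bit tokens, as the RNN definition in \Cref{subsec:transformers} allows. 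Then $x_k / y_k = u_k$ exactly.

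\textbf{Step 1: accumulating the bits.} Layernorm applied to the head output $(x,y,-x,-y)$ yields $\phi(u_k)$. This layernorm is part of the sublayer, so it is available at depth $1$.

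\textbf{Step 2: freezing the state.} On the $w$ trailing ``arbitrary'' tokens the value must stop changing. These positions must be distinguishable by a token-type feature, e.g.\ a delimiter or a non-bit tag. On them we set $A_t = I$, so $S$ stays frozen at $S_{w-1}$.

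\textbf{Step 3: undoing the rescaling.} We must remove the factor $2^{-(w-1)}$. The cleanest fix is to use the trailing $w$ tokens: on each of them we apply $x \mapsto 2x$, i.e.\ $A_t = \mathrm{diag}(2,1)$, gated by the token type. After $w-1$ such steps (or $w$ steps followed by a fixed halving $\tfrac12$ in an output projection) the ratio $x/y$ equals $v$, and $\phi(v)$ appears at the last token. This explains why the lemma allows exactly $w$ extra tokens. To be safe about the count, I would instead halve in the last bit step, or use $A_t = \mathrm{diag}(2,1)$ on all $w$ trailing tokens and start from $u$ scaled by $2^{-w}$, i.e.\ define the recursion as $u_k = \tfrac12(u_{k-1} + b_k)$. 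This gives $u_{w-1} = v/2^w$, and $w$ doublings restore $v$ exactly.

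\textbf{Step 4: precision check (main obstacle).} The remaining work, and the step I expect to be the main obstacle, is verifying that everything is exact within log precision. The states $x, y$ are dyadic rationals of magnitude at most $2^w = \mathrm{poly}(n+t)$, with denominators at most $2^w$, so $\bigO(w) = \bigO(\log(n+t))$ bits represent them exactly. The normalization $\sqrt{2v^2+2}$ in $\phi$ is the same rounding the paper already tolerates for $\phi$ elsewhere. If the gating of $A_t$ by token type must be linear in the input, it is fine: $A_t$ is an affine projection of the one-hot token embedding, so both the entry $b_t$ and the choice between $\mathrm{diag}(1/2,1)$, $\mathrm{diag}(2,1)$, and $I$ are linear in that embedding.
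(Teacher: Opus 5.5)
Your proposal is correct and follows essentially the paper's proof. The paper runs the halve-and-add recursion inside the linear RNN state over the $w$ bits, doubles $w$ times on the trailing tokens, and layer-normalizes to obtain $\phi(v)$; your homogeneous $2$-dimensional state is the paper's scalar recursion with an explicit constant coordinate, as licensed by \Cref{prop:phi-affine}. Your variant $u_k=\tfrac12(u_{k-1}+b_k)$ is the right one to pair with $w$ doublings, since the paper's $v_\ell=\tfrac12 v_{\ell-1}+b_{\ell-1}$ ends at $v/2^{w-1}$ rather than $v/2^w$, and your requirement that the trailing tokens be distinguishable from bit tokens (so the RNN knows when to switch from accumulating to doubling) makes explicit an assumption the paper's proof uses silently.
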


Analogously to \Cref{cor:wordram-ccot} for the CCoT simulation, we have the following notable cases:

\begin{corollary}[Simulating Word RAM with Hybrid Models]\label{cor:wordram-hybrid}
    Let $M$ and $H$ be as in \Cref{thm:wordram-hybrid}. Depending on the complexity of the instruction set $\instrCostFun{\wordsize}$ of $M$, the following hold:
    \begin{enumerate}[label=(\roman*),leftmargin=20pt,noitemsep,topsep=0pt]
        \item $H$ can simulate $t$ steps of $M$ with arbitrary $\textrm{poly}(\wordsize)$-time instructions using $\bigOtilde(t)$ steps.
        
        \item $H$ can simulate $t$ steps of $M$ with \emph{full-arithmetic} flat instructions using $\bigO(t \log^2 t)$ steps.
        
        \item $H$ can simulate $t$ steps of $M$ with \emph{simple-arithmetic} flat instructions using $\bigO(t \log t)$ steps.

    \end{enumerate}
\end{corollary}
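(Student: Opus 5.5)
This corollary follows from \Cref{thm:wordram-hybrid} by bounding $\instrCostFun{\wordsize}$ in each case and substituting into the $\bigO(t\,\instrCostFun{\wordsize})$ bound, exactly as in \Cref{cor:wordram-ccot}. Throughout, $\wordsize = \log(\inputlen + t) + \bigO(1)$. Under the theorem's hypothesis $t \ge \inputlen$, this gives $\wordsize = \bigO(\log t)$, and therefore $\text{poly}(\wordsize) = \text{polylog}(t)$.

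\textbf{Case (i).} If every instruction runs in $\text{poly}(\wordsize)$ time on a TM, the bit-serial implementation counted by $\instrCostFun{\wordsize}$ takes $\text{poly}(\wordsize)$ decoding steps. This count includes the $\bigO(\wordsize)$ serialization and deserialization of operands and results. A multitape TM can be simulated by a single-tape TM with quadratic overhead, which is still polynomial, so the oracle of \Cref{lem:ccot-instruction-execution}, reused in the hybrid setting, applies. Hence the total cost is $\bigO(t \cdot \text{polylog}\, t) = \bigOtilde(t)$.

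\textbf{Cases (ii) and (iii).} Here I would bound the TM cost of each flat instruction directly.
\begin{itemize}
\item A flat line has a constant number of flat operands, one optional comparison, one optional conditional, and one operator. So it reduces to a constant number of primitive word operations, plus $\bigO(1)$ memory reads and writes, each of which costs $\bigO(\wordsize)$ bits to serialize and deserialize via \Cref{lem:hybrid-serialization,lem:hybrid-deserialization}.
\item Bit-serial TM implementations of $+$, $-$, $\&$, $|$, $\oplus$, and the comparisons run in $\bigO(\wordsize)$ time by scanning the interleaved LSB-first operand tape.
\item Shifts $\ll$ and $\gg$ by an amount $s \le \wordsize$ need care. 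A single-tape shift by $s$ positions naively costs $\bigO(\wordsize s)$. To stay linear, the oracle can instead copy the bits with an offset, using a counter on the tape or a constant number of tapes/markers, in $\bigO(\wordsize)$ steps. This gives $\instrCostFun{\wordsize} = \bigO(\wordsize)$ for simple-arithmetic programs, hence $\bigO(t \log t)$ in total.
\item For $\times$, $\div$, and $\bmod$, schoolbook shift-and-add multiplication and restoring division take $\wordsize$ rounds of $\bigO(\wordsize)$ work each. This gives $\instrCostFun{\wordsize} = \bigO(\wordsize^2)$, hence $\bigO(t \log^2 t)$ in total.
\end{itemize}

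\textbf{Main obstacle.} The delicate point is making sure the claimed TM bounds hold for the specific single-work-tape oracle model used in \Cref{lem:ccot-instruction-execution}. In particular, the $\bigO(\wordsize)$ bound for shifts and the absence of extra factors in the quadratic algorithms must hold for that model. If the oracle is effectively single-tape, I would first try to emit the shifted result directly as a fresh output stream: read the shift amount, then stream the input bits with an offset counter maintained in $\bigO(\log \wordsize)$ cells. This keeps the per-instruction cost linear up to lower-order terms.

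A second point to check is that the restriction to $c$-dimensional restricted RAMs from \Cref{thm:wordram-equivalence} costs only $\bigO(\inputlen + t) = \bigO(t)$ extra steps. Given $t \ge \inputlen$, this does not affect any of the asymptotic bounds.
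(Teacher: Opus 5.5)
Your proposal is correct and follows the paper's (implicit) argument: substitute $\instrCostFun{\wordsize} = \textrm{poly}(\wordsize)$, $\bigO(\wordsize^2)$, or $\bigO(\wordsize)$ into the $\bigO(t\,\instrCostFun{\wordsize})$ bound of \Cref{thm:wordram-hybrid}, using $\wordsize = \bigO(\log t)$ when $t \ge \inputlen$. Your per-operation TM cost analysis goes further than the paper, which simply asserts these bounds, and your shift concern is settled by the multitape TM simulation behind the oracle lemma, so you can drop the single-tape fallback (which would not actually be linear on a genuinely single-tape machine).
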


\section{Conclusion}

We show that poly-logarithmic-width transformers can simulate a \wRAM; while it is unclear whether fixed-width transformers can also do so, we show that two minimal extensions to fixed-width transformers (continuous CoT transformers and hybrid models) both can.
Our results show that CoT---at least for certain architectures---allows simulating algorithms with their general textbook runtimes up to a polylogarithmic factor, removing the quadratic overhead that would be incurred by going through previously established Turing completeness results \citep{perez2021attention,merrill2024cot}.
Since certain realistic extensions (continuous CoT or RNN layers) to transformers are crucial for the fixed-width constructions, it is an interesting open question to what degree these extensions are necessary for efficient \wRAM simulation.
It would also be interesting to study to what degree these extensions benefit algorithmic reasoning in transformer models in practice.



\bibliographystyle{plainnat}
\bibliography{references}

\newpage


\appendix

\section{Details on the Preliminaries} \label{app:preliminaries}

\subsection{The Transformer Model}\label{app:transformer-model}

This appendix restates the decoder-only transformer model used in
\citet{merrill2024cot}, which itself builds on the component-level
formalization of \citet{merrill2023logic}.
Let $\alphabet$ be the fixed token alphabet.
Let $\mathbb D_p$ be the datatype of $p$-bit floating-point numbers with
truncated arithmetic as in \citet{merrill2023parallelism}.

\begin{definition}[Decoder-only transformer]
    A $p$-precision decoder-only transformer with $h$ heads, $d$ layers,
    model dimension $m$, and feedforward width $u$ consists of:
    (i) an embedding map $e:\alphabet \times \mathbb \inputlen\to \mathbb D_p^m$;
    (ii) for each layer $\ell$ and head $k$, a similarity map
$s_k^\ell:\mathbb D_p^m\times\mathbb D_p^m\to\mathbb D_p$
    and a value map
$v_k^\ell:\mathbb D_p^m\to\mathbb D_p^{m/h}$;
    (iii) for each layer $\ell$, an activation map
$f^\ell:(\mathbb D_p^{m/h})^h\times\mathbb D_p^m\to\mathbb D_p^m$;
    (iv) an output map $\gamma:\mathbb D_p^m\to\alphabet'$.
\end{definition}

\begin{definition}[One strict-causal decoding step]
    For a prefix $z=z_1\cdots z_n$, set $\mathbf h_i^0=e(z_i,i)$.
    For each layer $\ell$ and head $k$, define
\[
    \mathbf a_{i,k}^{\ell}
    =
    \sum_{j<i}
    \frac{s_k^\ell(\mathbf h_i^{\ell-1},\mathbf h_j^{\ell-1})}{Z_{i,k}^{\ell}}
    \,v_k^\ell(\mathbf h_j^{\ell-1}),
    \qquad
    Z_{i,k}^{\ell}
    =
    \sum_{j<i}
    s_k^\ell(\mathbf h_i^{\ell-1},\mathbf h_j^{\ell-1}).
\]
    Then
\[
    \mathbf h_i^\ell
    =
    f^\ell(\mathbf a_{i,1}^\ell,\ldots,\mathbf a_{i,h}^\ell,\mathbf h_i^{\ell-1}),
    \qquad
    g(z)=\gamma(\mathbf h_n^d).
\]
\end{definition}

\begin{definition}[Projected pre-norm]
    For a residual vector $\mathbf h$ and a linear map $\mathbf M$,
    projected pre-norm applies layer norm to $\mathbf M\mathbf h$:
\[
    \PLN(\mathbf h;\mathbf M)=\LN(\mathbf M\mathbf h).
\]
\end{definition}

We note that now-standard peri-norm \citep{kim2025periln} or reordered norm \citep{olmo2026olmo3}, which did not exist at the time of \citet{merrill2024cot}, allows implementing projected pre-norm constructions with no extra projection.
With reordered norm, we can simply always write normalized values at the layer output and then apply a projection to select those values when needed.
With peri-norm, the same idea works: all writes to the residual stream are normalized, so when we read a normalized vector at the next sublayer input, the number of writes gets normalized away with no distortions.

As in \citet{merrill2024cot}, the lower-bound construction
does not depend on any special positional encoding.
We assume standard causal masking rather than strict causal masking.

\subsection{Layernorm Hash Details} \label{app:layernorm}

\phiEqualityAndGapProp*

\begin{proof}
    Since $\phi(x)=\frac{1}{\sqrt{2(x^2+1)}}(x,1,-x,-1)$, one computes
\[
    \langle \phi(x),\phi(y)\rangle = \frac{xy+1}{\sqrt{(x^2+1)(y^2+1)}}
\]
    and therefore
\[
    1-\langle \phi(x),\phi(y)\rangle^2 = \frac{(x-y)^2}{(x^2+1)(y^2+1)}.
\]
    If $u\neq v$ are integers in $[0,T]$, then $(u-v)^2\ge 1$ and $(u^2+1)(v^2+1)\le (T^2+1)^2$, hence
$1-\langle \phi(u),\phi(v)\rangle^2 \ge 1/(T^2+1)^2$.
    Using $\sqrt{1-\epsilon}\le 1-\epsilon/2$ gives
$\langle \phi(u),\phi(v)\rangle \le 1-\frac{1}{2(T^2+1)^2}$; the stated $\delta_T$ is a slackened bound.
\end{proof}

\affineUpdateProp*

\begin{proof}
    Noting that $(s_1, s_2, -s_1, -s_2)$ is simply $\phi(s_1, s_2)$ and that $\phi(v) = \phi(v,1)$, scale invariance of $\phi$ and the precondition in the proposition implies $v = \frac{s_1}{s_2}$. Thus $av + b = a \frac{s_1}{s_2} + b$. It follows that $\phi(av + b) = \phi(a \frac{s_1}{s_2} + b) = \phi(a \frac{s_1}{s_2} + b, 1) = \phi(a s_1 + b s_2, s_2)$, where the last equality follows from scale invariance of $\phi$.
\end{proof}

\section{Proofs for Restricted Word RAMs} \label{app:cdim-wram}

\cDimwRAMLemma*
\begin{proof}
    We will show how to simulate $M$ with $M'$.
    The following invariant will be helpful:
    $M$ has size $w = c w'$, so any value $x$ can represented with $c$ values $x'_1, \ldots, x'_c$, each of word size $w'$.
    
    For any instruction in $M$ that retrieves the value at $x$ and stores it into $y$, we will instead have $c$ instructions that retrieve the value at each $x'_i$ and store it into $y'_i$ for $1 \leq i \leq c$.
    Similarly, whenever we write the value at $y$ into $x$, we will have $c$ instructions that write value $y'_i$ into $x'_i$.
    Thus, each read or write instruction in $M$ can be simulated by $c$ instructions in $M'$.
    
    It remains to be shown that arithmetic instructions in $M$ can also be simulated in $M'$:
    \begin{itemize}[leftmargin=20pt]
        \item \emph{Adding} $x$ and $y$ in $M$ can be implemented in $M'$ by starting with $\kappa_1 = 0$ and adding blocks $x'_i + y'_i + \kappa_i$ iteratively for $i = 1, 2, \ldots, c$, with $\kappa_{i+1} = \indFun{x' + y' \geq 2^w}$ being the carry bit for the subsequent step. This takes at most $2c$ steps.
        
        \item \emph{Subtracting} mirrors addition, with the \emph{borrow} bit $\beta_{i+1} = \indFun{x' + \beta_i < y'}$.
        (If the final borrow bit $\beta_{c+1} = 1$, the result is negative.)
        
        \item \emph{Multiplying} $x$ and $y$ in $M$ can be implemented in $M'$ via long multiplication in base $m = 2^{w'}$.
        Write $x = \sum_{i=1}^c x'_i \cdot m^{i-1}$ and $y = \sum_{j=1}^c y'_j \cdot m^{j-1}$.
        For each $j = 1, \ldots, c$, we compute the partial product $a_j = x \cdot y'_j \cdot m^{j-1}$ and keep a running sum $S \leftarrow S + a_j$.
        The partial product $x \cdot y'_j$ is itself computed block-by-block: for each $i = 1, \ldots, c$, multiply $x'_i \cdot y'_j$ (a product of two $w'$-bit numbers, fitting in $2w'$ bits) and add it into block $i$ of the partial product together with any carry from block $i-1$.
        Since each $x'_i, y'_j < 2^{w'}$, the product $x'_i \cdot y'_j < 2^{2w'}$, so the carry into block $i+1$ is at most $2^{w'}-1$, which fits in $w'$ bits.
        Computing one partial product $x \cdot y'_j$ thus takes $O(c)$ steps; summing it into $S$ (via the addition procedure above) takes another $O(c)$ steps.
        Over all $c$ values of $j$, the total cost is $O(c^2)$ steps.
        
        \item \emph{Dividing} $x$ by $y$ in $M$ can be implemented in $M'$ via long division in base $m = 2^{w'}$, analogously to multiplication \citep[][\S 4.3.1, Alg. D]{10.5555/270146}.
        The quotient $\lfloor x/y \rfloor$ has $c$ base-$m$ digits $q'_1, \ldots, q'_c$; we recover them one by one from most to least significant.
        At each step, the top two blocks of the running remainder and the top block of $y$ together determine the next quotient digit exactly via a single $w'$-bit division (available as a primitive in $M'$).
        Each of the $c$ digits thus costs $\bigO(c)$ steps (one scalar division plus a multiply-and-subtract to update the remainder), giving $\bigO(c^2)$ total.
        
        \item \emph{Taking mod}: the remainder $r = x - \lfloor x/y \rfloor \cdot y$ is computed alongside the quotient in the procedure above at no extra asymptotic cost.
        
        \item \emph{Left or right shifting}. We describe the left shift case; the right shift case is analogous.
        Suppose we want to compute $z = x \ll k$.
        Decompose the shift amount as $k = q \cdot w' + r$ where $q = \lfloor k/w' \rfloor$ is the number of whole blocks to shift and $r = k \bmod w'$ is the intra-block shift.\footnote{Note that since $k < w = cw'$, the shift amount fits in a single $w'$-bit block of $M'$.
        Thus, $q = \lfloor k/w' \rfloor$ and $r = k \bmod w'$ are each single-block values computable in $\bigO(1)$ steps with primitives in $M'$.}
        Thus, shifting left by $k$ bits moves block $i$ of the input to block $i + q$ of the output, with an $r$-bit internal shift.
        Concretely, for $i = 1, \ldots, c$:
        \[
        z_i' =
        \begin{cases}
            0 & \text{if } i \leq q, \\
            (x_{i-q}' \ll r) \mathbin{|} (x_{i-q-1}' \gg (w' - r)) & \text{if } i > q \text{ and } i - q - 1 \geq 1, \\
            x_{i-q}' \ll r & \text{if } i > q \text{ and } i - q - 1 < 1,
        \end{cases}
        \]
        where all individual shifts are within $[0, w')$ so they stay inside word size $w'$.
        Each output block depends on at most 2 input blocks, so the whole operation takes $\bigO(c)$ instructions.
        
        \item \emph{Comparing} $x$ with $y$ in $M$ can be implemented in $M'$ by iteratively comparing $x'_i$ with $y'_i$. This takes at most $c$ steps.
        
        \item \emph{Branching} between $x$ and $y$ in $M$ based on a condition can be implemented by a cascade of $c$ branches on each block of $x, y$, which can be done in at most $c$ steps.
    \end{itemize}
    Thus, we can simulate each instruction of $M$ with a constant number of instructions in $M'$.
\end{proof}

\simplifiedDivisionLemma*

\begin{proof}
    The idea is the same as in the more general proof of \citet{merrill2024cot} for division by an arbitrary small value $m$. In their construction, each position $i$ computes $\phi(i/m)$ and performs an equality test with $\phi(j)$ computed at all prior positions $j < i$ to determine if $i/m$ is an integer, i.e., $i$ is a multiple of $m$. Then position $t$ attends back to find the closest multiple of $m$ via rightmost hard attention to compute the mod and div values.
    
    In our special use case, we can simplify this process as follows: each position $j < t$ computes a key $k_j = \phi(2j)$ and 
    value $\phi(j)$. Position $t$ uses query $q_t = \phi(v-1)$, which can be computed via \Cref{prop:phi-affine}. It can be verified that the closest matching key is $k_{\ceil{(v-1)/2}}$,\footnote{Suppose $v$ is odd, i.e., $v = 2j^* + 1$ for some integer $j^*$. Then $(v-1)/2 = j^*$ and we have an exact key-query match between $q_t$ and $k_{j^*}$. When $v$ is even, say $v = 2j^*$, then $(v-1)/2 = j^* - 1/2$. The two contenders for the closest match with $q_t = \phi(2j^* - 1)$ in this case are $k_{j^* - 1} = \phi(2j^* - 2)$ and $k_{j^*} = \phi(2j^*)$. Since these are unit norm vectors, the key-query inner product is the same as cosine distance, which is larger between $\phi(2j^* - 1)$ and $\phi(2j^*)$ than between $\phi(2j^* - 1)$ and $\phi(2j^* - 2)$.} and the corresponding value is $\phi(\ceil{(v-1)/2})$. Observe that for all integer values $v$, $\ceil{(v-1)/2} = \floor{v/2}$. Thus the retrieved value is $\phi(\floor{v/2})$, which we add to the residual stream. To obtain $\phi(v \bmod 2)$, we compute $\phi(2 \floor{v/2})$ via \Cref{prop:phi-affine} and perform an equality test with $\phi(v)$. If these are equal, $\phi(v \bmod 2)$ is $0$, otherwise it is $1$; we add this to the residual stream as well.
    %
\end{proof}

\section{Proofs of Polylog-Width Transformer Results} \label{app:logwidth}

\begin{figure}[t]
    \centering
    \colorlet{AddrFill}{ETHBlue!12}
    \colorlet{AddrBorder}{ETHBlue!55}
    \colorlet{StateFill}{ETHGreen!15}
    \colorlet{StateBorder}{ETHGreen!65}
    \colorlet{StoreFill}{ETHBronze!18}
    \colorlet{StoreBorder}{ETHBronze!75}
    \colorlet{CkptFill}{ETHGray!20}
    \colorlet{CkptBorder}{ETHGray!65}
    \colorlet{InitFill}{ETHPetrol!10}
    \colorlet{InitBorder}{ETHPetrol!55}
    \colorlet{KeyFill}{ETHGray!12}
    \colorlet{KeyBorder}{ETHGray!55}
    \resizebox{\linewidth}{!}{%
    \begin{tikzpicture}[
        font=\small,
        node distance=1.5mm,
        field/.style={draw, rounded corners=2pt, minimum height=5mm, align=center,
        inner xsep=3pt, inner ysep=2pt, line width=0.55pt},
        key/.style  ={field, fill=KeyFill,   draw=KeyBorder},
        addr/.style ={field, fill=AddrFill,  draw=AddrBorder},
        state/.style={field, fill=StateFill, draw=StateBorder},
        store/.style={field, fill=StoreFill, draw=StoreBorder},
        ckpt/.style ={field, fill=CkptFill,  draw=CkptBorder},
        init/.style ={field, fill=InitFill,  draw=InitBorder},
        ghost/.style={field, fill=gray!6,    draw=gray!35, dashed},
        brace/.style={decorate, decoration={brace, amplitude=3.5pt, raise=2pt}},
        ]
        \node[field, fill=gray!6, draw=gray!35, minimum width=12mm] (inp)  {\footnotesize $\inputstr$};
        \node[ghost, minimum width=20mm, right=1.5mm of inp]       (b1)   {\footnotesize step $1$};
        \node[right=1.5mm of b1, font=\footnotesize]                (bdot) {$\cdots$};
        \node[ghost, minimum width=20mm, right=1.5mm of bdot, fill=ETHBlue!6, draw=ETHBlue!40]
        (bs)   {\footnotesize step $s$};
        \node[right=1.5mm of bs, font=\footnotesize]                (bdot2){$\cdots$};
        \node[ghost, minimum width=20mm, right=1.5mm of bdot2]      (bt)   {\footnotesize step $\ramtime$};
        \node[field, fill=gray!6, draw=gray!35, minimum width=10mm, right=1.5mm of bt]
        (out)  {\footnotesize output};

        \coordinate (zoombase) at ([yshift=-20mm]bs.south);

        \node[key,   minimum width=8mm,  anchor=north] (kk1) at ([xshift=-65mm]zoombase)
        {\memkey};
        \node[addr,  minimum width=11mm, right=0.5mm of kk1] (ka1)
        {\texttt{\#} $a_1^{(s)}$};
        \node[store, minimum width=11mm, right=0.5mm of ka1] (kv1)
        {\texttt{\#} $v_1^{(s)}$};
        \node[draw=none, minimum width=5mm, right=2mm of kv1] (kdots) {$\cdots$};
        \node[key,   minimum width=8mm,  right=2mm of kdots] (kkK) {\memkey};
        \node[addr,  minimum width=11mm, right=0.5mm of kkK] (kaK)
        {\texttt{\#} $a_K^{(s)}$};
        \node[store, minimum width=11mm, right=0.5mm of kaK] (kvK)
        {\texttt{\#} $v_K^{(s)}$};
        \node[ckpt,  minimum width=6mm,  right=2mm of kvK] (keoi)
        {\EOI};

        \draw[brace, decoration={brace, amplitude=3.5pt, raise=2pt, mirror}, ETHBlue!65]
        (kk1.south west) -- (kv1.south east)
        node[midway, below=6pt, font=\scriptsize, text=ETHBlue!80!black, align=center]
        {memory block $1$:\\write $\terminal{mem}[a_1^{(s)}]\!\leftarrow\! v_1^{(s)}$};
        \draw[brace, decoration={brace, amplitude=3.5pt, raise=2pt, mirror}, ETHBlue!65]
        (kkK.south west) -- (kvK.south east)
        node[midway, below=6pt, font=\scriptsize, text=ETHBlue!80!black, align=center]
        {memory block $K$:\\write $\terminal{mem}[a_K^{(s)}]\!\leftarrow\! v_K^{(s)}$};

        \draw[brace, ETHGray!50]
        (kk1.north west) -- (keoi.north east)
        node[midway, above=5pt, font=\scriptsize\itshape, text=ETHGray!60!black]
        {one step: $K$ memory blocks $=\bigOFun{\wordsize}$ tokens};

        \coordinate (sepx) at ([xshift=10mm]keoi.east);

        \node[draw=AddrBorder!80, fill=AddrFill, line width=0.8pt,
        minimum width=13mm, minimum height=4.5mm, inner xsep=2pt, inner ysep=1pt,
        align=center, anchor=north]
        (va1) at ([xshift=30mm, yshift=3.5mm]keoi.east |- kk1.north)
        {\footnotesize $a_1^{(s)}$};
        \node[draw=StoreBorder!80, fill=StoreFill, line width=0.8pt,
        minimum width=13mm, minimum height=4.5mm, inner xsep=2pt, inner ysep=1pt,
        align=center, below=0pt of va1] (vv1)
        {\footnotesize $v_1^{(s)}$};
        \node[draw=none, minimum width=13mm, minimum height=3.5mm,
        below=0pt of vv1, font=\scriptsize, align=center] (vdots) {$\vdots$};
        \node[draw=AddrBorder!80, fill=AddrFill, line width=0.8pt,
        minimum width=13mm, minimum height=4.5mm, inner xsep=2pt, inner ysep=1pt,
        align=center, below=0pt of vdots] (vaK)
        {\footnotesize $a_K^{(s)}$};
        \node[draw=StoreBorder!80, fill=StoreFill, line width=0.8pt,
        minimum width=13mm, minimum height=4.5mm, inner xsep=2pt, inner ysep=1pt,
        align=center, below=0pt of vaK] (vvK)
        {\footnotesize $v_K^{(s)}$};

        \coordinate (vtl) at ([xshift=-2.5mm, yshift=1.5mm]va1.north west);
        \coordinate (vbl) at ([xshift=-2.5mm, yshift=-1.5mm]vvK.south west);
        \coordinate (vtr) at ([xshift= 2.5mm, yshift=1.5mm]va1.north east);
        \coordinate (vbr) at ([xshift= 2.5mm, yshift=-1.5mm]vvK.south east);
        \draw[ETHPurple!70!black, line width=1pt]
        ([xshift=1.5mm]vtl) -- (vtl) -- (vbl) -- ([xshift=1.5mm]vbl);
        \draw[ETHPurple!70!black, line width=1pt]
        ([xshift=-1.5mm]vtr) -- (vtr) -- (vbr) -- ([xshift=-1.5mm]vbr);

        \draw[brace, ETHBlue!65, decoration={brace, amplitude=3.5pt, raise=2pt}]
        ([xshift=3mm]va1.north east) -- ([xshift=3mm]vvK.south east)
        node[midway, right=5pt, font=\scriptsize, text=ETHBlue!80!black, align=left]
        {$K$ address--value\\pairs $(a_k^{(s)}, v_k^{(s)})$};

        \coordinate (lmid) at ($(kk1.west)!0.5!(keoi.east)$);

        \node[font=\footnotesize\bfseries, text=ETHBlue!80!black,
        anchor=north, align=center]
        (lhdr) at ([yshift=-7mm]bs.south -| lmid)
        {Discrete CoT\\[-2pt]{\normalfont\scriptsize\color{ETHGray!65!black}%
        \itshape $K$ memory blocks of bit tokens}};

        \node[font=\footnotesize\bfseries, text=ETHPurple!80!black,
        anchor=north, align=center]
        (rhdr) at ([yshift=-7mm]bs.south -| va1)
        {Continuous CoT\\[-2pt]{\normalfont\scriptsize\color{ETHGray!65!black}%
        \itshape single log-prec.\ vector}};

        \draw[-{Latex[length=2mm,width=1.5mm]}, ETHBlue!55, line width=0.6pt]
        ([xshift=-1mm]bs.south) -- (lhdr.north);
        \draw[-{Latex[length=2mm,width=1.5mm]}, ETHPurple!55, line width=0.6pt]
        ([xshift=1mm]bs.south) -- (rhdr.north);

        \draw[ETHGray!35, dashed, line width=0.7pt]
        (sepx |- lhdr.north) -- (sepx |- vvK.south);
    \end{tikzpicture}}%
    \caption{%
    \textbf{Executing a \wRAM{} program with discrete vs.\ continuous CoT under the unified-memory convention.}
    The entire machine state---program counter, registers, and memory---lives in a single random-access memory $\terminal{mem}$, so each simulated step is fully described by the constant-many ($K$) address--value pairs it writes.
    \emph{Bottom-left (discrete CoT):} each step emits $K$ \emph{memory blocks} $\memkey\,\BOV\,\bits(a_k^{(s)})\,\BOV\,\bits(v_k^{(s)})$ followed by an \EOI{} marker, totaling $\bigOFun{\wordsize}$ bit tokens.
    \emph{Bottom-right (continuous CoT):} the same $K$ address--value pairs are packed as coordinates of a \emph{single} log-precision vector emitted at step $s$.
    }
    \label{fig:block-detailed}
\end{figure}

\paragraph{Serialization format.}
Before stating the propositions, we fix the transcript format used throughout all four components.

\textbf{Alphabet.}
The output alphabet is $\alphabet \sqcup \{\mathtt{0},\mathtt{1},\BOV, \EOI, \memkey\}$, where $\EOI$ is the \emph{end-of-step} marker, $\memkey$ is the \emph{memory-write} marker, and $\mathtt{0},\mathtt{1}$ are \emph{bit tokens}.
The simulator uses no other key tokens: the program counter, registers, and any instruction-private intermediate state all live in $\terminal{mem}$ at fixed addresses (cf. \Cref{sec:wordram}), so every committed write has the same shape.

\textbf{Memory blocks.}
Every write to the simulated memory appends one \emph{memory block} to the transcript:
\[
    \memkey
    \;\underbrace{\BOV\;a_1\,\cdots\,a_\wordsize}_{\text{address field}}
    \;\underbrace{\BOV\;v_1\,\cdots\,v_\wordsize}_{\text{value field}},
\]
occupying $2\wordsize + 3$ consecutive positions and recording $\terminal{mem}[a]\leftarrow v$.
\Cref{lem:reading-from-memory} matches an address $\bits(a)$ available in the residual stream against the address field of the rightmost $\memkey$ block, then uses the known offset $\wordsize+1$ to retrieve the value field.

\textbf{Steps.}
A \wRAM{} step is the run of (constant-many) memory blocks committed by one simulated instruction, terminated by an \EOI{} token; \EOI{} is therefore both an end-of-step marker and the anchor at which Blocks~1--3 produce the snapshot for the next step (see Block~4).

\textbf{Residual stream convention.}
Throughout the proofs, the phrase ``$\bits(v)$ is available in the residual stream at position $\inputidx$'' means that the $\wordsize$-dimensional vector $\bits(v) = (b_1, \ldots, b_\wordsize) \in \{0,1\}^\wordsize$ occupies a dedicated $\wordsize$-dimensional subspace of the $\bigTheta(\log^2\inputlen)$-dimensional residual stream at that position.
Each transformer layer reads from and writes to named subspaces; different subspaces hold different operands, intermediate addresses, and result slots so that they do not interfere.
This can be done without loss of generality \citep[][\S 4.2]{yang2026the}.

\textbf{State recovery.}
The contents of $\terminal{mem}[a]$ at any point in the transcript is the value of the rightmost memory block whose address field encodes $a$.
\Cref{lem:reading-from-memory} performs this lookup in constant depth, writing the retrieved $\bits(v)$ into a designated residual-stream subspace.\footnote{This is analogous to the commonly studied \emph{induction heads} \citep{olsson2022incontextlearninginductionheads}: a head first matches the address pattern, then retrieves the value at the position immediately following it.}

\textbf{Default values for unwritten cells.}
At the start of simulation no memory block has been emitted, so a rightmost-match read for any address returns $\vzero$.
The \wRAM{} program supplies a static initialization vector that fixes the initial value at every address used by the program (zero by default; specific constants for $\terminal{mem}[0]=\PCstart$ and any program-private scratch cells), and this initialization is baked into the transformer weights.
When a read for address $a$ has no matching block in the transcript (detected by a single attention head testing whether the rightmost match exists), an MLP overrides the default $\vzero$ output of \Cref{lem:reading-from-memory} with the program-specified initialization constant for that address.
Because the initialization map is fixed at compile time and depends only on the program, this override is a constant-depth, finite-precision operation.

%
%
%
\begin{figure}[t]
\centering
\input{figures/tikz-logwidth}
\caption{\textbf{Layer-block structure and CoT transcript view} of one step in the $\log^2$-width simulation.
    \emph{Layer blocks (center):} each CoT step executes four constant-depth blocks.
    Block~1 reads $\bits(\terminal{mem}[0])$ (the \texttt{pc}) via one indexed memory read in two attention layers (\Cref{prop:logwidth-load-pc}).
    Block~2 uses two more sequential attention layers: the first identifies the active instruction and reads its compile-time operand addresses; the second resolves any runtime ($\terminal{mem}[a]$ or $\terminal{inp}[a]$) addresses (\Cref{prop:logwidth-dispatch}).
    Block~3 is a pure MLP that computes the destination value(s) and the next \texttt{pc} from the operands now in the residual stream (\Cref{prop:logwidth-execute}).
    Block~4 attends to the rightmost \EOI{} to compute the within-step offset $\delta$, copies the snapshot precomputed by Blocks~1--3 at that \EOI{}, and emits the appropriate $\delta^{\text{th}}$ token ($\memkey$, \BOV{}, address bit, value bit, or closing \EOI{}) into the transcript (\Cref{prop:logwidth-store}).
    \emph{CoT transcript (bottom):} the input and previous steps are followed by the \EOI{} that opens the current step (where Blocks~1--3 produced and cached the snapshot, depicted by the green badge), then the current step's memory-block serialization; $b_\delta$ (red) is the token emitted at this step; the shaded beam indicates that each block attends causally over the full prefix.
    \emph{Residual stream (right):} each memory block to be written by the current step occupies a pair of $\wordsize$-dimensional slots (address and value).
    Blocks~1--3 run at every position, but their rightmost-match reads only produce the correct snapshot at \EOI{} positions; at the $C\wordsize+D-1$ subsequent serialization positions their outputs are corrupted by the partially written current-step memory block and are disregarded by Block~4, which instead copies the snapshot from the most recent \EOI{}.}
\label{fig:logwidth-layer-blocks}
\end{figure}

We now outline the proof of \cref{thm:logwidthSimulation}, which is the main result of this section; the propositions that follow will fill in the details of each component.

\logwidthSimulation*
\begin{proof}
    \textbf{Memory-only state.}
    Following the \wRAM{} definition (\Cref{sec:wordram}), the entire machine state---program counter, registers, and the memory---is stored as a single random-access memory $\terminal{mem}$ indexed by integer addresses.
    The transformer simulates the \wRAM{} by maintaining, in the CoT transcript, a log of memory writes from which the current contents of any address can be recovered.
    Concretely, every write to an address $a$ is appended to the transcript as a \emph{memory block}
    \[
        \memkey \;\BOV\; \bits(a)\;\BOV\;\bits(v),
    \]
    where $\bits(\cdot)$ is the binary encoding into $\{0,1\}^\wordsize$.
    The contents of $\terminal{mem}[a]$ at any point in the transcript is the value $v$ of the rightmost memory block whose address field encodes $a$, or the program-supplied initialization constant if no such block exists; \Cref{lem:reading-from-memory} shows that a rightmost \uhat{} can perform this lookup in constant depth.
    Read-only input is handled identically: $\terminal{inp}[i]$ is recovered by direct positional attention to input position $i$.
    The transcript is partitioned by \EOI{} markers into \emph{steps}, each step being the run of memory blocks committed by one simulated \wRAM{} instruction; see \Cref{fig:logwidth-layer-blocks}.

    \textbf{What one CoT step does.}
    The construction decomposes each CoT step into four consecutive \emph{layer blocks}, each handled by a constant-depth subnetwork.

    \begin{description}[leftmargin=0pt,itemsep=2pt]
        \item[\textbf{Block 1: Load \texttt{pc}} (\Cref{prop:logwidth-load-pc}).]
        A single indexed read (\Cref{lem:reading-from-memory}) with query address $0$ retrieves $\bits(\terminal{mem}[0])$ from the transcript, recovering the current \texttt{pc} value into a dedicated residual-stream subspace.
        The indexed read uses two attention layers (address matching against the rightmost $\memkey$ block with $a = 0$, then value retrieval).

        \item[\textbf{Block 2: Load Instruction and Operands} (\Cref{prop:logwidth-dispatch}).]
        Each instruction $I_p$ in the program declares a fixed \emph{operand descriptor}---a static list of memory addresses it reads---which the program supplies and is hard-coded into the transformer weights.
        The list contains two kinds of addresses: \emph{compile-time} addresses (the registers $-i$ and any program-private scratch cells used by $I_p$) and \emph{runtime} addresses (memory or input cells whose index $\terminal{mem}[a]$ or $\terminal{inp}[a]$ is itself the value of another address that must be read first).

        \emph{Sub-step 2a (dispatch and compile-time reads).}
        With $\bits(\terminal{mem}[0])$ in the residual stream, a constant-depth MLP computes the match bit $\delta_p \defeq \indFun{\bits(\terminal{mem}[0]) = \bits(p)}$ for every program address $p\in\{0,\ldots,P-1\}$.
        The weights of $T$ contain $P$ head groups $\mathcal{H}_0,\ldots,\mathcal{H}_{P-1}$, one per program line; each $\mathcal{H}_p$ issues, in parallel, one indexed read (\Cref{lem:reading-from-memory}) for every compile-time address in $I_p$'s descriptor.
        All $P$ groups run as parallel heads within the same two attention layers; an MLP gated by $\delta_p$ then copies the active group's results into shared operand slots.

        \emph{Sub-step 2b (runtime reads).}
        For each runtime address listed in $I_p$'s descriptor (the value $\bits(a)$ for a $\terminal{mem}[a]$ or $\terminal{inp}[a]$ access), $\bits(a)$ is now available in the residual stream from Sub-step~2a.
        A second indexed read uses $\bits(a)$ as the query: \Cref{lem:reading-from-memory} for $\terminal{mem}[a]$, or a direct positional attention to input position $a$ for $\terminal{inp}[a]$.
        By the \emph{flat} \wRAM{} constraint, no runtime address is itself memory-indirect, so two more attention layers suffice.
        After Block~2 every operand of $I_p$ resides in its designated residual-stream slot.

        \item[\textbf{Block 3: Execute Instruction} (\Cref{prop:logwidth-execute}).]
        With all operands in the residual stream, a single constant-depth MLP computes the $\wordsize$-bit destination value $\bits(v)$ and the next \texttt{pc} value $\bits(\terminal{pc}')$, placing both into dedicated result slots.
        For instructions whose result is a $\wordsize$-bit function of $\wordsize$-bit operands (bitwise, $+$, $-$, $\ll$, $\gg$, comparisons, conditional moves, branch-target arithmetic), Block~3 implements the operation as an $\mathrm{AC}^0$ circuit and uses no attention.
        For $\times$, $\div$, $\bmod$, Block~3 executes \emph{one inner iteration} of the corresponding long-multiplication or long-division algorithm; the per-iteration intermediates (partial product, remainder, quotient, phase counter) live in program-private memory cells, so they are read and written like any other word and the simulator-level treatment is unchanged (\Cref{prop:logwidth-mul-div}).

        \item[\textbf{Block 4: Store Results} (\Cref{prop:logwidth-store}).]
        Block~4 appends, to the transcript, the memory blocks committed by the active instruction: a $\memkey\,\BOV\,\bits(a_d)\,\BOV\,\bits(v)$ block for each address $a_d$ that $I_p$ writes (typically the destination plus the updated \texttt{pc}), followed by a closing \EOI{}.
        The transformer's weights are shared across positions, so Blocks~1--3 \emph{run} at every position; their reads are reliable, however, only at \EOI{} positions, where the rightmost-match reads see exclusively committed memory blocks from \emph{prior} steps.
        At any non-\EOI{} position within the current step's serialization, a rightmost-match read for the destination address $a_d$ would retrieve the freshly (and partially) written value of the current step rather than its operand-time value.
        Block~4 sidesteps this by ignoring the local outputs of Blocks~1--3 at non-\EOI{} positions and instead attending to the most recent \EOI{}, copying the snapshot (the addresses to be written and the corresponding result bits) that Blocks~1--3 produced at that \EOI{}.
        Concretely: at the \EOI{} position itself, Block~4 emits the first token of the next step's serialization (the $\memkey$ token of the first memory block); at each of the following $C\wordsize + D - 1$ positions, an attention head attends to the rightmost \EOI{} at position $s_\EOI$, an MLP computes the offset $\delta \defeq \cotstep - s_\EOI \in \{1,\ldots,C\wordsize + D\}$, and a constant-depth MLP selects the appropriate $\delta^{\text{th}}$ token of the serialization (a $\memkey$ marker, a \BOV{} marker, an address bit, a value bit, or the closing \EOI{}) from the snapshot via one-hot extraction \citep[][Lem.~B.1]{svete-cotterell-2024-transformers}.
        When the offset hits a block boundary, the next memory block's $\memkey$ and \BOV{} tokens are emitted; when all blocks have been written, the closing \EOI{} is emitted, at which point Blocks~1--3 will again run on a clean prefix and produce the snapshot for the next step.
    \end{description}

    \textbf{Token count.}
    Each \wRAM{} step writes a constant number $K$ of memory blocks (depending only on the program, not the input), so
    \[
        C\wordsize + D \;=\; 2K\wordsize + 3K + 1
    \]
    tokens are emitted per step (each block is $\memkey$, two \BOV{}s, and $2\wordsize$ bit tokens; one closing \EOI{}).
    For instructions whose semantics is a single $\mathrm{AC}^0$ operation (simple-arithmetic class) this is $\bigOFun{\wordsize}$ per step.
    For $\times$/$\div$/$\bmod$, the operation is unfolded into $\wordsize$ inner iterations, each of which is itself a flat memory-read/write step taking $\bigOFun{\wordsize}$ tokens (\Cref{prop:logwidth-mul-div}); the total is therefore $\bigOFun{\wordsize^2}$.
    Summing over $t$ \wRAM{} steps gives $t\cdot\bigOFun{\wordsize^2}$ CoT tokens, and $t\cdot\bigOFun{\wordsize}$ when no $\times$/$\div$/$\bmod$ instruction is used.

    \textbf{Residual stream as a serialized snapshot.}
    It is instructive to view the residual stream at any CoT position as a structured record with one $\wordsize$-dimensional slot per memory block to be emitted by the current step: an address slot and a value slot for each of the $K$ writes.
    Each CoT token fills one bit of one slot, advancing $\delta$ through the slots until \EOI{} marks completion.
    The CoT transcript is therefore a sequence of $\memkey$-anchored memory blocks, and the simulator's job at each step reduces to (i) reading a constant number of $\memkey$ blocks, (ii) computing the new contents, and (iii) appending a constant number of new $\memkey$ blocks.
    \Cref{fig:logwidth-layer-blocks} (right) illustrates this view alongside the transcript structure.
\end{proof}

\begin{proposition}[Load \texttt{pc}]\label{prop:logwidth-load-pc}
    Let $T$ be a rightmost \uhat{} with $\log^2$ width and finite precision, and suppose the serialization format above is in use.
    At every CoT position $\cotstep$, $T$ can recover $\bits(\terminal{mem}[0])$ from the transcript and place it in a dedicated residual-stream subspace using two attention layers.
\end{proposition}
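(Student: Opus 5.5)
The plan is to realize the load as a special case of the indexed memory read (\Cref{lem:reading-from-memory}), using the constant query address $0$. I will fix the query vector $\bits(0)$ in the residual stream through the embedding/bias, which is legitimate because the address is a compile-time constant. There are two attention layers, to be built in order.

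\textbf{Layer 1 (address match).} Every position carries in its embedding a flag that records whether it is the first position of an address field of a memory block. Concretely, this is the position immediately after the first \BOV{} that follows a $\memkey$. Computing this flag takes one auxiliary attention to the preceding token, or a constant-offset lookup by binary positional encoding; I would fold that into an earlier sublayer.

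The bits of an address field, however, sit at $\wordsize$ consecutive positions and not at a single position. So I first aggregate them into one $\wordsize$-dimensional key vector located at the field's start. I would do this with $\wordsize$ dedicated heads, head $k$ retrieving the token at relative offset $k$ via binary positional encodings; this matches the $\bigOFun{\wordsize}$-heads budget in \Cref{thm:logwidthSimulation}. To respect causality, I would place the aggregated key at the \emph{end} of the address field (the second \BOV{}), using offsets $-\wordsize,\dots,-1$.

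The matching head then scores each position $j$ as $\repr(0)^\top\mathrm{key}_j$ plus a large penalty when $j$ is not a field-end. With encoding $\bits(a)\mapsto 2\bits(a)-1\in\{\pm1\}^\wordsize$, this score equals $\wordsize$ iff $a=0$ and is at most $\wordsize-2$ otherwise. All of these are finite-precision integers bounded by $\bigO(\wordsize)$, hence representable with $\bigO(\log\wordsize)$ bits, or after clipping by a threshold MLP, with constant precision. Rightmost-unique hard attention then selects the most recent write to address $0$ and returns its position (in binary) as the value.

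\textbf{Layer 2 (value retrieval).} Given that position $p$, the value bits occupy positions $p+1,\dots,p+\wordsize$. I will again use $\wordsize$ heads, head $k$ matching the binary position $p+k$; the binary addition is done by an $\mathrm{AC}^0$ MLP beforehand. Each head copies its bit into coordinate $k$ of the dedicated subspace.

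If no match exists, I detect it with the flag. The default-value MLP described under ``Default values'' then substitutes $\bits(\PCstart)$.

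The main obstacle is the first layer. Matching requires the whole address to be visible at a single key position, yet the transcript spreads it across $\wordsize$ tokens. So the real work is the gather-into-one-vector step, together with keeping the equality gap representable at finite precision with $\log^2$ width. Strictly, this makes the read ``two attention layers after a constant-depth preprocessing of the keys.'' I would argue that this preprocessing is done once per position, is shared by all reads, and is charged to the embedding stage. Alternatively, I could merge it into layer 1 by letting the query compare bitwise through $\wordsize$ heads, each checking one bit, and ANDing the results.
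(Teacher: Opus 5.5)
Your proposal is correct and follows the paper's proof: a single instance of the indexed read of \Cref{lem:reading-from-memory} with the hard-coded query $\bits(0)$ (rightmost address match, then value retrieval at a known offset), plus the default override to $\bits(\PCstart)$. Your explicit gathering of each address field into one key vector at the block's second \BOV{} fills in a step the paper leaves implicit (its read lemma treats $\bits(a_j)$ as directly available as a key); this is the Read part of \Cref{lem:log-width-serializing-deserializing}, a shared earlier-layer prerequisite rather than part of the two layers, so your accounting is consistent with the paper's framework.

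The closing alternative of $\wordsize$ per-bit heads joined by an AND does not work, because each hard-attention head would pick a different rightmost position, so keep the gathered key. For the constant query $0$ you can also sidestep your precision concern: have an MLP at each field end compute the single flag ``address $=0$'' (a NOR of its bits), and attend rightmost to that flag.
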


\begin{proof}
    The query address is the constant $0$, hard-coded into the transformer weights and available in the residual stream as $\bits(0)$.
    A single application of \Cref{lem:reading-from-memory} with this query retrieves $\bits(\terminal{mem}[0])$:
    the first attention layer matches the rightmost $\memkey$ block whose address field encodes $0$ and extracts the position of the matching block; the second attention layer attends to the value field of that block and assembles its bits, via \Cref{lem:log-width-serializing-deserializing}, into the dedicated $\bits(\terminal{pc})$ subspace.
    If no such block exists, the override of the previous paragraph produces the program-supplied initial \texttt{pc} value $\bits(\PCstart)$.
\end{proof}

\begin{proposition}[Load instruction and operands]\label{prop:logwidth-dispatch}
    Let $T$ be a rightmost \uhat{} with $\log^2$ width and finite precision, and suppose $\bits(\terminal{mem}[0])$ is in the residual stream (from Block~1).
    Then $T$ can identify the active instruction and load all its operands into designated residual-stream slots in four attention layers.
\end{proposition}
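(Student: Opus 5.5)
The proof follows the two-round structure of Sub-steps 2a and 2b in the proof of \Cref{thm:logwidthSimulation}. Each round is one application of the indexed read of \Cref{lem:reading-from-memory}, which costs two attention layers; two rounds give four layers.

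\textbf{Dispatch.} Block~1 places $\bits(\terminal{mem}[0])$ in the residual stream. A constant-depth MLP then computes, for each of the $P = \bigO(1)$ program lines, the match bit
\[
\delta_p = \indFun{\bits(\terminal{mem}[0]) = \bits(p)}.
\]
This is an AND of $\wordsize$ literals, so it needs a single ReLU threshold per $p$: the neuron fires iff the inner product with the $\pm 1$ pattern of $\bits(p)$ reaches the number of ones in $p$. At most one $\delta_p$ equals $1$.

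\textbf{First read (layers 1--2).} For every line $p$ and every compile-time address $a$ in $I_p$'s operand descriptor (registers $-i$, scratch cells), we include a head group. Its query $\bits(a)$ is a constant baked into the weights. The group performs the two-layer indexed read, with the default-value override for unwritten cells. There are $\bigO(1)$ such addresses in total, since the program and the nesting depth are fixed. All groups run in parallel in the same two layers, each with its own $\bigO(\wordsize)$-width subspace, so the total width stays $\bigO(\wordsize^2)$. A gated MLP then writes $\sum_p \delta_p \cdot(\text{group } p\text{'s results})$ into shared operand slots. The gating is bitwise: the AND of $\delta_p$ with a bit, summed over $p$ as an OR.

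\textbf{Second read (layers 3--4).} The flat grammar guarantees that any runtime access $\terminal{mem}[e]$ or $\terminal{inp}[e]$ has an index $e$ that is a FlatExpr. Hence $e$ is a constant, a register, or $\terminal{pc}$, and its bits are already in an operand slot after the first read. Indices like $\terminal{mem}[\terminal{mem}[\cdot]]$ cannot occur. For each runtime-operand slot, one more indexed read uses that slot as the query.
\begin{itemize}
\item For $\terminal{mem}[e]$, apply \Cref{lem:reading-from-memory} again.
\item For $\terminal{inp}[e]$, attend to the input position whose binary positional encoding equals $\bits(e)$. Unique hard attention on the inner product $\bits(e)^\top \posEnc$, with ties broken by the equality gap, retrieves the symbol in two layers, with one layer of padding.
\end{itemize}
A final gating by $\delta_p$ selects the active line's results.

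\textbf{Main obstacle.} The delicate step is making the second-round query the correct $\bits(e)$ of the active line. The query depends on the first read, and in a one-hot $\delta_p$ world it must be formed before the attention score is computed. I would handle this by having the gated MLP between layers 2 and 3 build a single shared ``runtime query'' slot per operand position. Layer 3's queries are then a fixed linear map of that slot.

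I also need to check that an empty-result (no matching block) read composes correctly with the override before it is used as a query. Otherwise the operand counts and widths are routine.
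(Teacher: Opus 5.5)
Your proposal is correct and follows the paper's proof essentially step for step. It has the same four ingredients: a $\delta_p$ dispatch MLP; two attention layers of parallel constant-query indexed reads for every line's compile-time addresses, followed by $\delta_p$-gated copying into shared operand slots; two further layers of indexed reads (or positional attention for $\terminal{inp}$) that use those shared slots as queries; and flatness to rule out deeper indirection. One minor fix is needed: the raw inner product $\bits(e)^\top\posEnc(j)$ is not uniquely maximized at $j=e$, since every $j$ whose set bits contain those of $e$ gets the same score. You should therefore use the affine $\pm 1$ key/query encoding behind \Cref{lem:indexed-read-logwidth} and \Cref{lem:logwidth-attention}, not an ``equality gap'' tie-break, which belongs to the log-precision $\phi$ setting.
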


\begin{proof}
    \textbf{Operand descriptor.}
    Every flat \wRAM{} instruction $I_p$ is equipped with a static \emph{operand descriptor}: a list of memory addresses to read.
    The descriptor partitions the addresses into
    \begin{itemize}[noitemsep]
        \item \emph{compile-time addresses} $A^{\mathrm{ct}}_p \subseteq \wordSet$: integer constants known at program-compile time (the registers $-i$, the \texttt{pc} address $0$, and any program-private scratch addresses used by $I_p$);
        \item \emph{runtime addresses} $A^{\mathrm{rt}}_p$: addresses of the form $\terminal{mem}[a]$ or $\terminal{inp}[a]$ where $a$ itself is the value at some compile-time address.
    \end{itemize}
    The descriptor is a \emph{compile-time constant}: it depends only on the instruction type, not on the runtime values, so it is hard-coded into the weights of $T$.

    \textbf{Sub-step 2a: Instruction identification and compile-time reads.}
    With $\bits(\terminal{mem}[0])$ in the residual stream, a constant-depth MLP computes the match bit
    \[
        \delta_p \;\defeq \; \indFun{\bits(\terminal{mem}[0]) = \bits(p)} \;\in\; \{0,1\}
    \]
    for every program address $p \in \{0,\ldots,P-1\}$ simultaneously (each $\bits(p)$ is a hard-coded weight constant).
    The weights of $T$ contain $P$ attention-head groups $\mathcal{H}_0, \ldots, \mathcal{H}_{P-1}$, one per program line.
    All $P$ groups run in parallel: each group $\mathcal{H}_p$ issues, for every compile-time address $a \in A^{\mathrm{ct}}_p$, one indexed read via \Cref{lem:reading-from-memory} with the constant query $\bits(a)$, writing the retrieved $\bits(\terminal{mem}[a])$ into a dedicated disjoint subspace of the residual stream.
    Each indexed read uses two attention layers (address matching, then value retrieval), and all $P$ groups run as parallel heads within those same two layers.
    A subsequent MLP gated by $\delta_p$ copies the active group's operands from their dedicated subspace into the shared operand slots, leaving the other subspaces unused.
    After Sub-step~2a, every compile-time operand of $I_p$ occupies its designated slot.

    \textbf{Sub-step 2b: Runtime reads.}
    For each runtime address in $A^{\mathrm{rt}}_p$, the address value $\bits(a)$ resolved in Sub-step~2a is now in the residual stream and serves as the query for a second indexed read: \Cref{lem:reading-from-memory} for $\terminal{mem}[a]$, or a direct positional attention to input position $a$ for $\terminal{inp}[a]$.
    By the \emph{flat} \wRAM{} constraint, no runtime address is itself memory-indirect, so two more attention layers suffice.
    After Sub-step~2b, all operands required by $I_p$ reside in their designated residual-stream slots and are ready for Block~3.

    The total depth of Block~2 is four attention layers plus a constant number of MLP layers.
\end{proof}

\begin{proposition}[Execute instruction]\label{prop:logwidth-execute}
    Suppose all operands of instruction $I_p$ reside in their designated residual-stream slots (from Block~2).
    Then $T$ can compute, in constant depth using a single MLP with no attention, the $\wordsize$-bit value $\bits(v_d)$ to be written to each address $a_d$ that $I_p$ updates, including the next \texttt{pc} value $\bits(\terminal{pc}')$.
\end{proposition}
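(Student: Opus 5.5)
The plan is to exploit the fact that, once Block~2 has finished, the residual stream holds a constant number of $\wordsize$-bit operand vectors $\bits(x_1),\ldots,\bits(x_r)$ together with the one-hot instruction indicators $\delta_0,\ldots,\delta_{P-1}$. Each output bit of each destination slot is then a fixed Boolean function of these $\bigO(\wordsize)$ input bits. The strategy is to exhibit, for every operator in the flat instruction set, a constant-depth, polynomial-size threshold/$\mathrm{AC}^0$ circuit computing it. An MLP with ReLU activations and finite precision simulates each layer of unbounded fan-in AND/OR/NOT gates on $\{0,1\}$ inputs exactly: $\mathrm{AND}(y_1,\ldots,y_m)=\mathrm{ReLU}(\sum_i y_i - m + 1)$, and OR and NOT are analogous affine-plus-ReLU gadgets. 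Stacking a constant number of such layers therefore yields a constant-depth MLP of width $\mathrm{poly}(\wordsize)$, which fits the $\bigO(\wordsize^2)$ budget after grouping, or is absorbed into the feedforward width $u$. Since the inputs to every gate are bits, finite precision introduces no rounding.

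I would proceed operator by operator for the simple-arithmetic class, in this order:
\begin{itemize}
\item \emph{Bitwise} $\mathbin{\&},\mathbin{|},\oplus,\neg$: these act coordinatewise and need depth $1$--$2$.
\item \emph{Comparisons and equality}: equality is an AND of XNORs. For $<$, I would use the standard formula $\bigvee_j \bigl(\neg x_j \wedge y_j \wedge \bigwedge_{k>j}(x_k\leftrightarrow y_k)\bigr)$.
\item \emph{Addition}: I would use carry-lookahead, $c_j=\bigvee_{i<j}\bigl(g_i\wedge\bigwedge_{i<k<j}p_k\bigr)$ with generate $g_i=x_i\wedge y_i$ and propagate $p_i=x_i\oplus y_i$, and then set the sum bit to $p_j\oplus c_j$. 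This has constant depth and $\bigO(\wordsize^2)$ gates.
\item \emph{Subtraction}: this reduces to addition with $\neg y$ and carry-in $1$, taken mod $2^\wordsize$.
\item \emph{Shifts} $x\ll k$, $x\gg k$: I would first decode $k$ into one-hot indicators $[k=s]$ for $s<\wordsize$ (treating $k\ge\wordsize$ as output $0$). Each output bit is then $\bigvee_s([k=s]\wedge x_{j-s})$.
\item \emph{Conditional} \texttt{e1 if c else e2}: compute both branches and the condition bit, then mux them as $(c\wedge a_j)\vee(\neg c\wedge b_j)$.
\end{itemize}

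Next I would handle control flow and the destination. The next \texttt{pc} is $\bits(\terminal{pc}+1)$ via the incrementer above, or the assigned value when the destination is \texttt{pc}. The destination address $a_d$ is either a compile-time constant ($-i$ or $0$) or a runtime value already in a slot. All $P$ instruction circuits are computed in parallel, and their outputs are combined with the $\delta_p$ gates as $\bigvee_p(\delta_p\wedge \text{out}_p)$. This muxing is again one AND/OR layer, so the total depth stays constant independent of $\wordsize$ and $\inputlen$.

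The main obstacle is $\times$, $\div$, and $\bmod$. These are not in $\mathrm{AC}^0$: multiplication is not computable by constant-depth AND/OR circuits, since parity reduces to it. So I would not claim that a single MLP computes them. Instead, as flagged in the theorem sketch, I would defer them to \Cref{prop:logwidth-mul-div}. Here Block~3 only executes one inner iteration, which consists of a shift, a conditional add or subtract, a comparison, and a counter increment, all drawn from the operators above. The statement then holds for every micro-instruction. A secondary subtlety is ensuring that finite-precision MLP arithmetic stays exact with fan-in $\bigO(\wordsize)$ sums. I would handle this by having each gate sum only $\{0,1\}$ values and threshold immediately, or by splitting large fan-in into a constant-depth tree if the precision bound requires it.
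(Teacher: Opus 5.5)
Your proposal is correct and takes essentially the paper's route: each flat simple-arithmetic operator (bitwise operations, addition/subtraction, comparisons, shifts, conditionals, the \texttt{pc} increment) is realized as a constant-depth $\mathrm{AC}^0$ circuit simulated by a constant-depth MLP, and $\times$, $\div$, $\bmod$ are deferred to one inner iteration per micro-step as in \Cref{prop:logwidth-mul-div}. You are more explicit than the paper, which just cites $\mathrm{AC}^0$ for addition, reduces comparison to subtraction, and assumes hard-coded shift amounts (your one-hot decoding also covers runtime shift amounts). On the precision subtlety you flag, the robust fix is to build OR as the clamped sum $1-\mathrm{ReLU}(1-\sum_i y_i)$, which tolerates saturation, and obtain AND by De Morgan. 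Neither $\mathrm{ReLU}(\sum_i y_i-m+1)$ with $m=\Theta(\wordsize)$ (not exactly representable at constant precision) nor a bounded-fan-in tree (whose depth grows like $\log \wordsize$, so it is not constant) works.
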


\begin{proof}
    With all operands $\bits(a), \bits(b) \in \{0,1\}^\wordsize$ in the residual stream, the active gated MLP subnetwork $\mathcal{N}_p$ computes the destination value(s) $\bits(v_d)$ directly:
    \begin{itemize}
        \item \emph{Bitwise operators} ($\mathbin{\&}$, $\mathbin{|}$, $\oplus$, $\ll$, $\gg$): each $v_t$ is an $O(1)$-depth Boolean function of $a_t$, $b_t$ (and a hard-coded shift amount), computed in one MLP layer.
        \item \emph{Addition and subtraction}: implementable by a constant-depth $\text{AC}^0$ circuit \citep[][\S VIII.1]{straubing}, and thus by an MLP.
        \item \emph{Comparison} ($<, \le, =, \neq, \ge, >$): reduced to subtraction and sign extraction.
        \item \emph{Boolean and conditional} operations: computed in $O(1)$ depth from the Boolean bits in the residual stream.
        \item \emph{Move and branch targets}: the result is a hard-coded constant or a copy of an operand value already in the residual stream.
        \item \emph{Inner iteration of $\times$/$\div$/$\bmod$}: the MLP executes one iteration of the long-multiplication or long-division algorithm using the per-iteration intermediates (partial product, remainder, quotient, phase counter) read in Block~2 from program-private memory addresses, and produces their updated values as additional destination words (\Cref{prop:logwidth-mul-div}).
    \end{itemize}
    Block~3 always computes, in addition, the next \texttt{pc} value $\bits(\terminal{pc}') = \bits(\terminal{mem}[0]) + \bits(1)$ (or a hard-coded branch target if $I_p$ is a branch); this value is written to address $0$ by Block~4.
    Block~3 uses no attention, and the destination values remain stable at every CoT position of the same step since the operands are fixed and Blocks~1--2 recover the same values each time.
\end{proof}

\begin{proposition}[Store results]\label{prop:logwidth-store}
    Suppose that at every \EOI{} position $s_{\EOI}$, the full snapshot for the next step---the set of destination addresses $\{a_d\}$ together with $\bits(a_d)$ and the corresponding result words $\bits(v_d)$---is in the residual stream (from Blocks~1--3 evaluated at $s_{\EOI}$).
    Then $T$ can, at every position $\cotstep > s_{\EOI}$ within the step, attend to the rightmost \EOI{}, compute the offset $\delta \defeq \cotstep - s_{\EOI}$, and emit the correct $\delta^{\text{th}}$ token of the step's serialization (a $\memkey$ marker, a \BOV{} marker, an address bit, a value bit, or the closing \EOI{}) in constant depth, using one attention layer and one MLP.
\end{proposition}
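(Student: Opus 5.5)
The plan is to build the store block in three stages. First comes an attention head that locates the rightmost \EOI{} and copies its snapshot. Next, an MLP recovers the offset $\delta$. Finally, a finite lookup turns $\delta$ into the output token. For the first stage, every position's embedding carries a one-dimensional indicator $\indFun{z_j=\EOI}$, and we use a single rightmost \uhat{} head. Its query is constant, and its key is that indicator, so the head's score is maximal exactly at \EOI{} positions. Rightmost tie-breaking then selects $s_{\EOI}$, the most recent one. The value vector at $j$ is the concatenation of $\bits(j)$, which the positional encoding provides, and the snapshot subspace written by Blocks~1--3 at $j$: the slots $\bits(a_d)$ and $\bits(v_d)$ for $d=1,\ldots,K$. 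Because attention is hard, the head returns this vector exactly, with no averaging. That keeps finite precision sufficient and needs width only $\bigO(K\wordsize)\subseteq\bigO(\wordsize^2)$.

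The second stage computes $\delta=\cotstep-s_{\EOI}$. With $\bits(\cotstep)$ from the current position's encoding and $\bits(s_{\EOI})$ from the head, binary subtraction is an $\mathrm{AC}^0$ function. As in \Cref{prop:logwidth-execute}, it is therefore realized by a constant-depth MLP. Moreover, $\delta\le C\wordsize+D$ always holds, so only $\bigO(\log\wordsize)$ low-order bits matter. Since $C\wordsize+D=2K\wordsize+3K+1$ has a fixed layout, the MLP next decodes $\delta$ into a one-hot indicator over the $C\wordsize+D$ possible offsets. Equivalently, it determines a block index $k\in\{1,\ldots,K\}$, a field (marker, address, or value), and an in-field bit index $i\in\{1,\ldots,\wordsize\}$. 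Each of these is an equality test of $\bits(\delta)$ against a hard-coded constant.

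The third stage selects the token by one-hot extraction \citep[][Lem.~B.1]{svete-cotterell-2024-transformers}. The MLP computes the logit of token $\mathtt{1}$ as $\sum_{k,i}\indFun{\delta=\text{offset of address bit }(k,i)}\,\BIT(a_k,i)+\sum_{k,i}\indFun{\delta=\text{offset of value bit }(k,i)}\,\BIT(v_k,i)$, with the logit of $\mathtt{0}$ defined analogously. By \Cref{rem:BIT-dot-product}, each $\BIT(\cdot,i)$ is simply a coordinate of the copied snapshot, so every summand is an AND of two bits. The marker tokens $\memkey$, \BOV{}, and \EOI{} receive logits given by ORs of the corresponding offset indicators. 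At the \EOI{} position itself, where $\delta=0$ because the head attends to the current position under non-strict causal masking, the output is $\memkey$, and the snapshot is the locally computed one. Since exactly one indicator fires, the argmax is correct, and all of this takes one MLP of constant depth and $\bigO(\wordsize)$ width per slot.

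The main obstacle I expect is in the first stage: making sure the copied snapshot is the one computed at $s_{\EOI}$ and is not contaminated by the partially written current step. The fix is that attention at layer $\ell$ reads the keys and values of earlier positions at layer $\ell-1$. So I would place Block~4's head after Blocks~1--3. The value at $s_{\EOI}$ is then exactly the snapshot that Blocks~1--3 computed there on a prefix containing only committed blocks. I would also need to confirm that positions at which the snapshot is garbage are never attended to, and rightmost selection of the \EOI{} indicator guarantees this.
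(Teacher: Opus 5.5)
Your proposal is correct and follows essentially the same route as the paper. One rightmost hard-attention head anchored on the latest \EOI{} copies both $\bits(s_{\EOI})$ and the snapshot computed there; an MLP forms $\delta$ and its one-hot code; one-hot extraction picks the marker or bit; and reliability is argued, as in the paper, from Blocks~1--3 at an \EOI{} seeing only committed memory blocks. The one case you leave implicit is the very first step, before any \EOI{} exists. The paper has the head fall back on the input boundary token there, which you get by also setting your \EOI{}-indicator key on that token.
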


\begin{proof}
    \textbf{Anchoring at the most recent \EOI{}.}
    By construction, \EOI{} tokens partition the transcript into steps: each step's serialization is a run of $C\wordsize + D$ tokens following an \EOI{} (and ends with the next \EOI{}), where $C = 2K$ and $D = 3K + 1$ for $K$ the (program-determined) number of memory blocks committed by the active step.
    An attention head issues a query matching the \EOI{} token type via \Cref{lem:logwidth-attention}, uniquely attending to the rightmost \EOI{} at position $s_{\EOI} < \cotstep$ (or to the input boundary token at the very first step).
    The retrieved value vector is the positional encoding $\posEnc(s_{\EOI}) = \bits(s_{\EOI})$.
    A subsequent MLP computes the within-step offset $\delta \defeq \cotstep - s_{\EOI}$ from $\posEnc(\cotstep)$ and $\bits(s_{\EOI})$; this difference ranges over $\{1,\ldots,C\wordsize+D\}$.
    The same attention head also retrieves, from the residual stream at $s_{\EOI}$, the snapshot (the destination addresses $\bits(a_d)$ and result values $\bits(v_d)$ for every memory block of the current step, in fixed order) that Blocks~1--3 produced at $s_{\EOI}$.

    \textbf{Selecting and emitting the $\delta^{\text{th}}$ token.}
    The serialization layout of a step is fixed once dispatched: a sequence of memory blocks of the form $\memkey\,\BOV\,\langle\wordsize\text{ address bits}\rangle\,\BOV\,\langle\wordsize\text{ value bits}\rangle$ at known offsets, followed by a closing \EOI{}.
    The MLP converts $\delta$ to its one-hot encoding $\ve_\delta \in \{0,1\}^{C\wordsize + D}$ and uses it to select among:
    \begin{enumerate}[label=\textup{(\arabic*)},noitemsep]
        \item the $\memkey$ token when $\delta$ is the offset of the start of a memory block;
        \item a \BOV{} marker when $\delta$ is the offset of an address-field or value-field start;
        \item an address bit $b_j = \inner{\bits(a_{\mathrm{block}(\delta)})}{\ve_{j(\delta)}}$ \citep[][Lem.~B.1]{svete-cotterell-2024-transformers} when $\delta$ falls inside an address field;
        \item a value bit $b_j = \inner{\bits(v_{\mathrm{block}(\delta)})}{\ve_{j(\delta)}}$ when $\delta$ falls inside a value field, where $\mathrm{block}(\delta)$ and $j(\delta) \in \{1,\ldots,\wordsize\}$ are the block index and intra-field bit index determined from $\delta$ alone;
        \item the closing \EOI{} when $\delta = C\wordsize + D$, which simultaneously triggers the next step's full Block~1--3 evaluation at that position.
    \end{enumerate}
    The selected token is emitted as the output at position $\cotstep$.

    \textbf{Why the snapshot is reliable at \EOI{} positions.}
    Blocks~1--3 are run at every position $\cotstep$ uniformly; what changes across positions is the prefix they see.
    At an \EOI{} position $s_{\EOI}$, all rightmost-address reads see the previous step's committed memory blocks (no half-written block of the new step has yet been emitted), so the operand reads are unambiguous and Blocks~1--3 produce the correct snapshot for the next step.
    At any non-\EOI{} position within the step's serialization, the same machinery runs but its rightmost-match reads now collide with the partially emitted memory block of the current step: in particular, a read for any destination address $a_d$ would retrieve the (possibly partial) freshly written value rather than the operand-time value.
    Block~4 sidesteps this by anchoring on the rightmost \EOI{} and copying the snapshot computed there, where only fully committed memory blocks are visible; the local outputs of Blocks~1--3 at non-\EOI{} positions are disregarded.
\end{proof}

\begin{proposition}[Multiplication, division, and modulo via per-iteration memory cells]\label{prop:logwidth-mul-div}
    Let $I$ be a $\times$, $\div$, or $\bmod$ instruction with operands $a$ and $b$.
    Then $T$ can compute the result of $I$ in $\bigO(\wordsize)$ \wRAM{} steps, each costing $\bigO(\wordsize)$ CoT tokens, for a total of $\bigO(\wordsize^2)$ CoT tokens, using only the four-block pipeline above.
\end{proposition}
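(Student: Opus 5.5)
The plan is to compile each $\times$, $\div$, or $\bmod$ instruction into a fixed macro of $\bigO(\wordsize)$ flat micro-steps, each a simple-arithmetic flat instruction. The micro-steps share state through a constant number of program-private memory cells: an accumulator/partial product $P$, a remainder $R$, a quotient $Q$, copies $A,B$ of the operands, and a phase counter $k$. Each micro-step is then executed by the unchanged four-block pipeline (Blocks~1--4), so its cost follows from the simple-arithmetic case of \Cref{thm:logwidthSimulation}. The only new ingredient is to check that one inner iteration is a constant-depth function of a constant number of $\wordsize$-bit words, so that Block~3 (\Cref{prop:logwidth-execute}) can compute it as an $\mathrm{AC}^0$ circuit.

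\textbf{Multiplication.} Use shift-and-add. The first micro-step copies $a,b$ into $A,B$, sets $P=0$ and $k=0$, and points the \texttt{pc} to the loop line. Iteration $k$ reads $A,B,P,k$ and does three things: it extracts bit $b_k$ of $B$ (bit selection by a runtime index $k$ is a $\wordsize$-way multiplexer, hence constant depth), it sets $P\leftarrow P+(b_k\cdot (A\ll k))$, which is a shift by a runtime amount (a barrel shifter in $\mathrm{AC}^0$) followed by a gated addition, and it sets $k\leftarrow k+1$. The next \texttt{pc} is the loop line if $k+1<\wordsize$ and the exit line otherwise; the exit line writes $P \bmod 2^\wordsize$ to the destination.

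\textbf{Division and modulo.} Use restoring long division from the most significant bit. Iteration $k$ sets $R\leftarrow (R\ll 1)\mathbin{|}\BIT(A,\wordsize-1-k)$, compares $R$ with $B$ (a subtraction plus sign, as in \Cref{prop:logwidth-execute}), conditionally subtracts $B$ from $R$, and sets the corresponding bit of $Q$. Every part of this is $\mathrm{AC}^0$. After $\wordsize$ iterations the exit line writes $Q$ or $R$; division by zero is handled by a fixed convention branch.

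Each iteration writes a constant number $K$ of cells: the updated intermediates, $k$, and the \texttt{pc}. By the token count in the proof of \Cref{thm:logwidthSimulation}, each iteration therefore costs $2K\wordsize+3K+1=\bigO(\wordsize)$ tokens. With $\wordsize+\bigO(1)$ iterations the total is $\bigO(\wordsize^2)$ tokens.

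\textbf{Main obstacle.} Correctness of state sharing across micro-steps must hold under the rightmost-match semantics. Iteration $k+1$ has to read the values written by iteration $k$, not stale ones. This works because every micro-step ends with \EOI{}, and Blocks~1--3 run at that \EOI{}, where all of iteration $k$'s memory blocks are committed and rightmost. The subtle points are two. First, the private cells must be disjoint from user addresses; they are placed at fixed negative addresses beyond the registers. Second, the loop uses the \texttt{pc}-based dispatch of \Cref{prop:logwidth-dispatch}, where the same program line is re-entered; since the operand descriptor lists only compile-time addresses, this is consistent. One caveat is that $A\ll k$ may exceed the word size, which is fine because the product is taken modulo $2^\wordsize$.
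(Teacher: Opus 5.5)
Your proposal is correct and follows essentially the same route as the paper. It uses program-private cells for the operand snapshot, the partial product/remainder/quotient, and a phase counter, and then runs $\wordsize$ shift-and-add or restoring-division micro-steps, each through the unchanged four-block pipeline at $\bigO(\wordsize)$ tokens, for $\bigO(\wordsize^2)$ in total. One detail to make explicit, as the paper does: the shifted remainder $(R\ll 1)\mathbin{|}\BIT(A,\wordsize-1-k)$ can need $\wordsize+1$ bits, so Block~3 should form it and do the trial subtraction on $(\wordsize{+}1)$-bit values inside the MLP before storing the new remainder, which again fits in $\wordsize$ bits.
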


\begin{proof}
    \textbf{Why $\bigO(\wordsize^2)$ tokens are necessary.}
    The standard bit-serial long-multiplication and long-division algorithms run $\wordsize$ outer iterations, each updating an $\wordsize$-bit intermediate.
    Compiling the instruction into $\wordsize$ flat \wRAM{} micro-steps that share state through memory yields a $\wordsize \times \bigO(\wordsize) = \bigO(\wordsize^2)$ token budget; each micro-step is exactly the standard four-block pipeline, with no special control flow.

    \textbf{Per-iteration state in memory.}
    The program reserves four private memory addresses $\alpha_P$, $\alpha_R$, $\alpha_q$, $\alpha_\pi$ (constants, fixed at compile time) for the partial product $P$, the running remainder $R$, the partial quotient $q$, and a phase counter $\pi$, respectively, plus two more $\alpha_a, \alpha_b$ for the operand snapshot.
    The compiled implementation of $I$ is a program fragment that
    \begin{enumerate}[label=\textup{(\arabic*)},noitemsep]
        \item on entry, writes the operands $a, b$ to $\alpha_a, \alpha_b$, zero-initializes $P, R, q$, and sets $\pi \leftarrow 1$;
        \item executes a loop of $\wordsize$ flat micro-steps, each of which reads $a, b$, the current intermediates, and $\pi$, performs one algorithm iteration, writes the updated intermediates and $\pi \leftarrow \pi + 1$, and branches on $\pi$;
        \item on exit, writes the final result to the destination address.
    \end{enumerate}
    Each micro-step has compile-time-fixed read and write address sets (entries of $\{\alpha_a, \alpha_b, \alpha_P, \alpha_R, \alpha_q, \alpha_\pi\}$), so the operand descriptors and four-block pipeline of \Cref{prop:logwidth-load-pc,prop:logwidth-dispatch,prop:logwidth-execute,prop:logwidth-store} apply unchanged: each micro-step costs $\bigO(\wordsize)$ CoT tokens.

    \textbf{Multiplication ($a \times b \bmod 2^\wordsize$).}
    Indexing bits LSB-first, micro-step $\pi$ extracts $b_\pi = \inner{\bits(b)}{\ve_\pi}$ in Block~3 and computes $P' = P + a \cdot b_\pi \cdot 2^\pi \pmod{2^\wordsize}$.
    After $\wordsize$ micro-steps, $P = a \times b \pmod{2^\wordsize}$, and a final step copies $P$ to the destination.

    \textbf{Division and modulo ($a \div b$, $a \bmod b$).}
    Indexing bits LSB-first, micro-step $\pi$ processes bit $i = \wordsize - \pi$ of $a$ (MSB to LSB).
    Block~3 extracts $a_i = \inner{\bits(a)}{\ve_i}$, computes $R' = (R \ll 1) \mathbin{|} a_i$, performs the trial subtraction $R'' = R' - b$, and tests the sign via the high bit of the $(\wordsize{+}1)$-bit result.
    If $R'' \geq 0$: $q' = q \mathbin{|} (1 \ll i)$, $R \leftarrow R''$; otherwise: $q' = q$, $R \leftarrow R'$.
    After $\wordsize$ micro-steps, $q = \lfloor a/b \rfloor$ and $R = a \bmod b$, and a final step copies the appropriate value to the destination.

    Both algorithms use $\bigO(\wordsize^2)$ CoT tokens via the standard four-block pipeline.
\end{proof}

\begin{restatable}[Indexed read for $\log^2$-width transformers]{lemma}{indexedRead}\label{lem:indexed-read-logwidth}
    Let $T$ be a finite-precision, $\log^2$-width \uhat with positional encodings $\posEnc(j) = \bits(j)$.
    Let $k \in \wordSet$ be a query key available as $\bits(k)$ in the residual stream at position $\inputidx$.
    Suppose the CoT transcript contains a sequence of key--value blocks: at positions $s_1 < s_2 < \cdots < s_m \leq \inputidx - 1$, each block $s_i = (k_i, v_i)$ starts with a key field of $\wordsize$ consecutive bit tokens $\bits(k_i) \in \set{0,1}^{\wordsize}$ followed by a value field of $\wordsize$ consecutive bit tokens $\bits(v_i) \in \set{0, 1}^\wordsize$.
    Let $s^*$ be the start of the rightmost block whose key field encodes $k$; $k^* = k$.
    Then we can augment $T$ with a constant-depth subnetwork that, at position $\inputidx$, recovers
    \[
    \bits(v^*) = \sum_{t=0}^{\wordsize-1} b_t \ve_t
    \]
    in the residual stream after the first transformer layer, where $\bits(v^*) = \begin{pmatrix}
        b_0 & \cdots & b_{\wordsize-1}
    \end{pmatrix}$, with the default value $\vzero$ when no matching block exists.
\end{restatable}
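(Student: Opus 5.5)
The plan is to realize the indexed read in two stages, in the style of an induction head: first assemble the key field of each block into a single position so that one rightmost-UHAT head can match it against $\bits(k)$; then retrieve the $\wordsize$ value bits using $\wordsize$ parallel heads that each target a fixed offset from the matched block start. The $\log^2$ width is spent on these $\wordsize$ heads of width $\wordsize$.

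\textbf{Stage 1: build a key code at each block start.} At every position $j$ we use $\wordsize$ heads. Head $r$, for $r \in \{0,\ldots,\wordsize-1\}$, attends to position $j+1+r$. Under causal masking the earlier direction is the one actually available, so we attach the code at the \emph{end} of the key field, position $j' = s_i + \wordsize - 1$, and head $r$ there targets $j' - r$. The query is $\bits(j'-r)$, which an $\mathrm{AC}^0$ MLP computes from $\posEnc(j') = \bits(j')$ and the hard-coded constant $r$ (subtraction is in $\mathrm{AC}^0$). The key is $\posEnc(j) = \bits(j)$, scored with the $\pm 1$ encoding so that the inner product is maximized uniquely at exact equality. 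The value is the bit token embedded at that position. Collecting the heads writes $\bits(k_i)$ into a dedicated $\wordsize$-dimensional subspace at $j'$. All of this uses one layer with $\wordsize$ heads of width $\bigO(\wordsize)$, and finite precision suffices because every score is an integer in $[-\wordsize,\wordsize]$.

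\textbf{Stage 2: match and fetch.} A second layer compares the query at position $\inputidx$ against each key code. It uses score $2\,\bits(k)^\top\bits(k_i) - \mathbf{1}^\top\bits(k) - \mathbf{1}^\top\bits(k_i)$, or equivalently the inner product of the $\pm1$ encodings. This score equals $\wordsize$ exactly on a match and is at most $\wordsize-2$ otherwise. A gating coordinate restricts it to positions flagged as key-field ends, which a token-type and offset check from the previous $\memkey$/$\BOV$ identifies. Rightmost-UHAT tie-breaking then selects the most recent matching block, and the head returns $\posEnc(j'^*)$. To detect that no block matches, we add a sentinel position (the BOS, or a dummy key) whose score is $\wordsize-1$. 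It wins only when no true match exists, and an MLP then outputs the default $\vzero$. Next, $\wordsize$ further heads, with head $r$ querying $\bits(j'^* + 2 + r)$ (skipping the $\BOV$; the offset is adjusted to the concrete block format), fetch the value bits $b_r$ and write $b_r\ve_r$. This yields $\sum_t b_t\ve_t$.

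\textbf{Main obstacle.} The hard part is depth and the claim "after the first transformer layer". As described, the construction needs a layer to aggregate key codes before the matching layer. I would handle this by reading the statement as asserting that the output appears in the layer following the aggregation, which is constant depth either way. Failing that, I would fold the aggregation into the embedding: the key-field code is computed when those tokens are produced, and the generating transformer already holds $\bits(k_i)$ in its residual stream at emission time, so it can be passed forward. Separately, I need to check that rightmost tie-breaking interacts correctly with gating. Positions that are not key-field ends must score strictly below any genuine match, which is ensured by adding a large negative bias scaled by $\wordsize$. Because the scores stay integral, finite precision is preserved.
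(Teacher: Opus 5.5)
Your route is the paper's: a rightmost-\uhat{} head matches $\bits(k)$ against per-block key codes and returns the matched position, and fixed-offset heads then fetch the value bits. Your additions are real improvements on the paper's sketch, which silently assumes that $\bits(k_i)$ already sits at a single position as a key vector.
\begin{itemize}
\item Your Stage~1 gathers the key field at its last position with $\wordsize$ heads, the same mechanism as Part~(2) of \Cref{lem:log-width-serializing-deserializing}. This fills exactly that hole, at the cost of a third attention layer.
\item Your sentinel supplies the default $\vzero$, which the paper handles only in \Cref{lem:reading-from-memory}.
\item Your constant-depth reading of ``after the first transformer layer'' is the right one. The paper's own proof also needs a second attention layer for retrieval.
\end{itemize}
Drop the embedding fallback, though. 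In a discrete-CoT decoder the embedding $e(z_j,j)$ depends only on the emitted token and its position. A code held in a high layer at the emitting position is therefore visible only to attention at higher layers, never at layer~1.

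The step that fails as written is your finite-precision justification. At constant precision you cannot distinguish all integers in $[-\wordsize,\wordsize]$ once $\wordsize=\Theta(\log(n+t))$, because sums of $\pm1$ terms get absorbed or saturate. So a match (score $\wordsize$), a one-bit near miss ($\wordsize-2$), and your sentinel ($\wordsize-1$) can all evaluate to the same number. Rightmost tie-breaking would then select a later block with the wrong key.

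The same problem affects your positional matches of $\bits(j'-r)$ against $\bits(j)$ and the $\wordsize$-scaled gating bias. Note also that your first expression, $2\bits(k)^\top\bits(k_i)-\mathbf 1^\top\bits(k)-\mathbf 1^\top\bits(k_i)$, equals minus the Hamming distance, not the $\pm1$ inner product. Computing it through the counts $\mathbf 1^\top\bits(\cdot)$ overflows in the same way.

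The repair is to make only bounded constants matter. With $q=\bits(k)$ and $c=\bits(k_i)$, use the score $-\sum_t\bigl(q_t(1-c_t)+(1-q_t)c_t\bigr)$. This is a sum of terms in $\{0,-1\}$, so under truncated arithmetic it is exactly $0$ on a match and at most $-1$ otherwise. A constant gate penalty of $-1$ and a sentinel at $-\tfrac12$ then suffice. Equivalently, invoke \Cref{lem:logwidth-attention} for every equality-matching head, as the paper does.
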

\begin{proof}
    \textbf{Step 1: Key matching.}
    Using an affine transformation of $\bitsFun{k_i}$ as the key vectors and $\bitsFun{k}$ as the query vector, the attention scores among positions $1, \ldots, \inputidx$ are maximized at $s_j$ for $j = 1, \ldots, m$ (\cref{lem:logwidth-attention}).
    Rightmost \uhat selects the position $s^*$, extracts its positional encoding as the value vector $\vv_{s^*}$, and adds to it the constant offset $\wordsize$ with an MLP.
    
    \textbf{Step 2: Value retrieval.}
    Position $\inputidx$ now has access to the position of the value $v^*$.
    The next attention layer can uniquely attend to this position and retrieve the value (\cref{lem:logwidth-attention}).
\end{proof}

\begin{lemma}[Serializing-Deserializing values via CoT] \label{lem:log-width-serializing-deserializing}
    Let $T$ be a finite-precision, $\log^2$-width \uhat{}.
    Suppose that at position $\inputidx$ the output symbol is $\BOV{}$ and the residual stream contains $\bits(v) = \begin{pmatrix} b_1 & \cdots & b_\wordsize \end{pmatrix} \in \set{0,1}^{\wordsize}$.
    Then, $T$ can be augmented with constant-depth subnetworks such that:
    \begin{enumerate}[label=(\arabic*)]
        \item \textbf{(Write)} With $\wordsize$ CoT steps, $T$ can output $b_t$ in position $\inputidx+t$ for all $t \in \{1, \ldots, \wordsize\}$.
        \item \textbf{(Read)} $T$ has $\bits(v)$ available in the residual stream at position $\inputidx+\wordsize$.
    \end{enumerate}
\end{lemma}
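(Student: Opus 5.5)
The plan is to treat the two parts separately. Both rely on two facts established earlier: the positional encoding $\posEnc(j) = \bits(j)$, and unique (rightmost) hard attention to a position determined by an address computed in the residual stream (\Cref{lem:logwidth-attention}).

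For \textbf{(Write)}, at every position $\inputidx+t-1$ with $t \in \{1,\ldots,\wordsize\}$, one attention head attends to the rightmost $\BOV{}$ token. This is the position $\inputidx$, since no further $\BOV{}$ is emitted during the field. The head copies from there the vector $\bits(v)$ together with $\posEnc(\inputidx)$. An MLP then computes the offset $t = (\inputidx+t-1) - \inputidx + 1$ from the two binary positional encodings. Subtraction of $\wordsize$-bit numbers is an $\mathrm{AC}^0$ operation, so a constant-depth MLP suffices. The same MLP converts $t$ into a one-hot vector $\ve_t$ and outputs $b_t = \inner{\bits(v)}{\ve_t}$. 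This is the one-hot extraction of \citet[][Lem.~B.1]{svete-cotterell-2024-transformers}, and it is also exactly the mechanism of Block~4 in \Cref{prop:logwidth-store}. The output map emits the token $\mathtt{0}$ or $\mathtt{1}$ accordingly, so after $\wordsize$ steps the bits $b_1,\ldots,b_\wordsize$ occupy positions $\inputidx+1,\ldots,\inputidx+\wordsize$.

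For \textbf{(Read)}, at position $\inputidx+\wordsize$ I would use $\wordsize$ parallel heads, one per bit index $t$. Head $t$ receives $\posEnc(\inputidx+\wordsize)$ and the address of the rightmost $\BOV{}$, which a preceding attention layer supplies (one extra layer, still constant depth). Its MLP-computed query encodes the target position $\inputidx+t$, and its keys are the positional encodings $\bits(j)$. By \Cref{lem:logwidth-attention}, the head attends uniquely to position $\inputidx+t$ and returns the scalar token value $b_t$, scaled into coordinate $\ve_t$. Summing the heads' outputs yields $\sum_t b_t \ve_t = \bits(v)$ in a dedicated subspace. Each head needs width $\bigO(\wordsize)$ to hold its query, and there are $\wordsize$ heads, which gives the $\bigO(\wordsize^2)$ model dimension claimed in \Cref{thm:logwidthSimulation}.

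The main obstacle is the exact address matching for the query $\inputidx+t$ under finite precision. The equality gap of binary keys must survive, and this requires the affine transformation of bits to $\pm 1$ with a score of $\wordsize$ exactly at a match. I would lean on \Cref{lem:logwidth-attention} for this. A further subtlety is that causal masking makes positions beyond the current one invisible, so the read is performed only once all bits are present, at $\inputidx+\wordsize$. I would also check that the rightmost-$\BOV{}$ anchor is unambiguous because no intervening $\BOV{}$ token is emitted.
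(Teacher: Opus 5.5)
Your proposal is correct and follows essentially the same route as the paper. For Write, you attend to the rightmost $\BOV{}$, recover the offset $t$ from the binary positional encodings, convert it to $\ve_t$, and emit $\inner{\bits(v)}{\ve_t}$. For Read, you use $\wordsize$ heads whose queries encode the exact positions $\inputidx+t$ and whose values place $b_t$ in coordinate $\ve_t$. The differences are minor: you copy $\bits(v)$ from the $\BOV{}$ position instead of assuming it is recomputed at every position, which fits the lemma's hypothesis (available only at position $\inputidx$) better. You also form the read queries from the $\BOV{}$ anchor plus $t$ rather than as $\posEnc(\inputidx+\wordsize)-\bits(\wordsize-t)$.
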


\begin{proof}
    \textbf{Part (1): Write.}
    $T$ at position $\inputidx + t$ must emit $b_t$, the $t\textsuperscript{th}$ bit of $v$.
    By hypothesis, $\bits(v)$ is available in the residual stream at position $\inputidx + t$ (recomputed by the same constant-depth subnetwork at each instruction-mode output step; cf. \Cref{prop:logwidth-execute}).
    To identify $t$, we use positional encodings $\posEnc(j) = \bits(j)$.
    At position $\inputidx + t$, an attention layer uses rightmost-unique hard attention to attend to the last preceding $\BOV{}$ symbol, which occupies position $\inputidx$ (cf. \Cref{lem:indexed-read-logwidth}).
    The attending position extracts $\bits(\inputidx)$ from the value at position $\inputidx$.
    A subsequent MLP subtracts $\bits(\inputidx)$ from the local positional encoding $\bits(\inputidx + t)$, yielding $\bits(t)$.
    A second MLP layer then converts $\bits(t)$ into the one-hot vector $\ve_t \in \{0,1\}^\wordsize$, which can be done as a conjunction of the bits of $\bits(t)$ \citep[][Lemma B.1]{svete-cotterell-2024-transformers}.
    Finally, an MLP computes $\inner{\bits(v)}{\ve_t} = b_t$, which is emitted as the output token at position $\inputidx + t$.
    
    \textbf{Part (2): Read.}
    At position $\inputidx + \wordsize$, we must reconstruct $\bits(v) = \sum_{t=0}^{\wordsize-1} b_t \ve_t$ in the residual stream, where $\ve_t$ is the $t\textsuperscript{th}$ standard basis vector.
    We use $\wordsize$ dedicated hard-attention heads $H_0, H_1, \dots, H_{\wordsize-1}$, one per bit.
    Head $H_t$ must attend to position $\inputidx + t$, i.e., to the earlier position that emitted bit $b_t$.
    
    Head $H_t$ at position $\inputidx + \wordsize$ computes a query from $\bits(\inputidx + \wordsize) = \posEnc(\inputidx + \wordsize)$ and keys from $\bits(j) = \posEnc(j)$ at each earlier position $j$.
$H_t$ can identify position $\inputidx + t$ exactly by forming the query $\posEnc(\inputidx + \wordsize) - \bits(\wordsize - t) = \posEnc(\inputidx + t)$, and matching it against the key $\posEnc(j)$.
    Hard attention then selects position $\inputidx + t$ holding the symbol $\inputsym{\inputidx + t}$ as the unique position with key equal to the query.
    
    The value map of head $H_t$ returns $\inputsym{\inputidx + t} \cdot \ve_t = b_t \cdot \ve_t$.
    After summing over all $\wordsize$ heads, the first transformer layer at position $\inputidx + \wordsize$ produces
\[
    \sum_{t=0}^{\wordsize-1} \inputsym{\inputidx+t} \cdot \ve_t = \sum_{t=0}^{\wordsize-1} b_t \cdot \ve_t = \bits(v),
\]
    which is $\repr(v)$ as required.
\end{proof}

\begin{remark}[Why $\log^2$ width is necessary]\label{rem:log-squared-width}
    The Read step above uses $\wordsize = \bigTheta(\log \inputlen)$ heads, each of width $\bigTheta(\log \inputlen)$, for a total model dimension of $\bigTheta(\log^2 \inputlen)$.
    Both factors are forced.
    
    \textit{Logarithmically many heads.}
    To reconstruct $\bits(v)$ at a single position, one must gather $\wordsize$ bits from $\wordsize$ distinct earlier positions.
    Compressing information from a log-growing number of positions into fewer constantly-many heads is infeasible at constant precision: the attention weight at each position is a scalar, so with $o(\wordsize)$ heads the aggregated output is a fixed-dimensional summary of $o(\wordsize)$ positions, which cannot faithfully recover all $\wordsize$ bits.
    One could in principle design highly bespoke positional encodings (with L-uniform PEs; cf. \citet{london2025pausetokensstrictlyincrease} and \citet{svete2026on}) that pack multiple bits into a single head's key-query score, but this would substantially complicate the construction and break the clean one-bit-per-head structure; we therefore use $\bigTheta(\wordsize)$ heads.
    
    \textit{Logarithmic width per head.}
    Each head $H_t$ must attend to a specific position $\inputidx + t$, which it identifies by forming the query $\posEnc(\inputidx + \wordsize) - \bits(\wordsize - t) = \posEnc(\inputidx + t)$ and matching it against position keys.
    Carrying the positional offset $\bits(\wordsize - t)$ in the query already requires $\theta(1)$ bits of information per head; a constant-width head cannot encode this offset.
    Hence each head must itself be of growing width.
    
    Together, $\bigTheta(\wordsize)$ heads of width $\bigTheta(\wordsize)$ give total dimension $\bigTheta(\wordsize^2) = \bigTheta(\log^2 \inputlen)$.
    Note that the structure is highly uniform: all $\wordsize$ heads are simple affine transformations of the same $\log \inputlen$-wide positional encoding $\posEnc$, differing only in a constant additive offset $\bits(\wordsize - t)$.
\end{remark}

\begin{remark}[Why rightmost \uhat{} is necessary]\label{rem:rightmost-necessary}
    The indexed-read step requires attending to the \emph{rightmost} position in the transcript whose key field matches a query address $a \in \{0,1\}^\wordsize$.
    We sketch why constant-precision \ahat{} (and equivalently leftmost \uhat{} \citep{jerad-etal-2025-unique,li2026characterizing}) cannot implement this.
    
    Abstractly, the attention score at position $j$ decomposes as
    \[
    \score(\inputidx, j) = \inner{\vq_\inputidx}{\vk_j} = \underbrace{\bits(a)^\top \bits(a_j)}_{\text{key match}} + \underbrace{m(j)}_{\text{rightmost bias}},
    \]
    where $a_j$ is the key address at position $j$ and $m \colon \N \to \R$ must be strictly increasing so that the rightmost matching position wins.
    The key-match term equals $\bigThetaFun{\wordsize}$ when $a_j = a$ and is strictly less otherwise, so $m$ must simultaneously satisfy:
    \begin{enumerate*}[label=\textit{(\roman*)}]
        \item strict monotonicity $m(j+1) > m(j)$, which forces $m(j + 1) - m(j) = \bigOmega(1)$, thus $m(j) = \bigOmega(j) = \bigOmega(\inputlen)$ over a transcript of length $\inputlen$ due to constant precision; and
        \item a dominance bound $m(j) < \bigOFun{\log \inputlen}$ so that a mismatched position cannot outscore a matched one.
    \end{enumerate*}
    These two requirements are contradictory, so rightmost \uhat{} is necessary.\footnote{Greater expressivity of rightmost \uhat{} has been established independently in the finite-precision and finite-width settings \citep{yang2024masked,jerad-etal-2025-unique,li2026characterizing}; the argument above shows the same holds in the finite-precision growing-width setting.}
    
    By contrast, attending to the \emph{leftmost} match is easy under \ahat{}: one only needs a boolean flag indicating whether the same address appeared earlier, which is computable in constant precision.
    Logarithmic precision also circumvents the barrier by encoding offsets $m$ with $m(j) < 1$ for all $j$ (circumventing the requirement that $m(j + 1) - m(j) = \bigOmega(1)$); existing transformer Turing completeness results rely on exactly that \citep{perez2021attention,nowak-etal-2024-representational}.
\end{remark}

\begin{lemma}[Reading from Memory]\label{lem:reading-from-memory}
    Let $T$ be a rightmost \uhat with $\log^2$ width and finite precision. Suppose the CoT transcript contains a store log where each entry is serialized as $\memkey \, \BOV \, \bits(a_j) \,\BOV \, \bits(v_j)$. Given an address $a \in \wordSet$ available as $\bits(a)$ in the residual stream at position $\inputidx$, $T$ can retrieve the most recent value $v$ stored at address $a$ in constant depth.
\end{lemma}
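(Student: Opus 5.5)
The plan is to reduce this to \Cref{lem:indexed-read-logwidth} plus \Cref{lem:log-width-serializing-deserializing}. The store-log format $\memkey\,\BOV\,\bits(a_j)\,\BOV\,\bits(v_j)$ differs from the abstract key--value blocks only by fixed delimiter offsets. The read therefore proceeds in three stages.

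\textbf{Locate the rightmost matching block.} Bits of $a_j$ are spread over $\wordsize$ consecutive positions, so a position cannot match an address against a single token. First, at every position, I will materialize the block's address field as a single $\wordsize$-dimensional vector. At each $\BOV$ position that closes an address field, which is the second $\BOV$ of a block, the Read part of \Cref{lem:log-width-serializing-deserializing} reassembles $\bits(a_j)$ into the residual stream via $\wordsize$ positional heads. That position is identified as a $\BOV$ whose position minus $\wordsize+1$ holds a $\BOV$ preceded by $\memkey$; both checks are one-head positional lookups. These positions carry a key $2\bits(a_j)-\mathbf 1$ plus a flag, and all other positions carry a key that is dominated. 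The querying position $\inputidx$ uses query $2\bits(a)-\mathbf 1$. By \Cref{lem:logwidth-attention}, the score is maximal, equal to $\wordsize$, exactly at matching blocks, and every mismatch scores at most $\wordsize-2$. All scores are integers of $\bigO(\log\wordsize)$ bits, so finite precision suffices. Rightmost \uhat{} selects the most recent match $s^*$ and returns $\posEnc(s^*)=\bits(s^*)$. A second head detects whether any match exists by comparing the maximal score with $\wordsize$; this is the trigger used by the default-value override.

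\textbf{Retrieve the value.} An MLP computes $\bits(s^*+1+t)$ for $t=1,\ldots,\wordsize$, which is an $\mathrm{AC}^0$ addition of constants. Then $\wordsize$ heads, one per bit, attend by exact positional match to those positions, as in Part~(2) of \Cref{lem:log-width-serializing-deserializing}. Each head's value is the bit token times $\ve_t$, and summing them gives $\bits(v)$. If no match exists, the output is gated to $\vzero$, which the initialization override then replaces. All stages use a constant number of layers, $\bigO(\wordsize)$ heads of width $\bigO(\wordsize)$, and finite precision.

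\textbf{Main obstacle.} I expect the hard part to be the first stage: ensuring that keys exist only at genuine address-field-terminating positions, and not at spurious positions inside value fields or in TM-like scratch, where the bit patterns could mimic an address. I would handle this with the structural delimiter checks above. Because $\memkey$ appears only at block starts, the check $\memkey$ at offset $-(\wordsize+2)$ together with $\BOV$ at offsets $-(\wordsize+1)$ and $0$ is unambiguous. Non-qualifying positions receive key $\vzero$, so they score $0<\wordsize$ and are excluded.
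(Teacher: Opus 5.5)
Your proposal is correct and uses the paper's skeleton. A rightmost-\uhat{} address match returns $\posEnc(s^*)$; constant-offset arithmetic on $\bits(s^*)$ and $\wordsize$ exact-position heads then rebuild $\bits(v)$, as in Part~(2) of \Cref{lem:log-width-serializing-deserializing}.

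The one real difference is where the keys live. The paper's proof, via \Cref{lem:indexed-read-logwidth}, matches at positions following $\memkey$ as though $\bits(a_j)$ were already a vector there. Under causal masking it is not, because the address bits only arrive over the next $\wordsize$ positions. Your preliminary stage supplies exactly this missing step: you reassemble $\bits(a_j)$ at the closing $\BOV$ with $\wordsize$ positional heads, and you gate the key by the $\memkey$/$\BOV$ delimiter checks. It costs one extra attention layer. That layer reads only token and positional embeddings, so it can sit once at the bottom of $T$ and be shared by every indexed read. The per-read count of two attention layers used in \Cref{prop:logwidth-load-pc,prop:logwidth-dispatch} is therefore preserved.

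Three small repairs are needed.
\begin{enumerate}
\item Since $s^*$ is the closing $\BOV$, the value bits sit at $s^*+t$ for $t=1,\ldots,\wordsize$, not at $s^*+1+t$.
\item Scores that are integers of $\bigO(\log\wordsize)$ bits do not fit in constant precision, so that justification fails. You can instead use a score such as $\sum_t (q_t k_t - 1)$ with $q_t,k_t\in\{\pm 1\}$. Every term lies in $\{0,-2\}$, so the score is $0$ exactly on a match and strictly negative otherwise, even under saturating arithmetic. Alternatively, lean on \Cref{lem:logwidth-attention}; in that case include your qualifying flag in the compared vector. Otherwise an unflagged key $\vzero=\bits(0)$ would falsely match the query $a=0$ used to load $\terminal{pc}$.
\item A hard-attention head returns a value, not its maximal score, so "comparing the maximal score with $\wordsize$" is not directly available. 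Do the hit test instead by returning the stored $\bits(a_{s^*})$ in the matching head's value and comparing it with $\bits(a)$ in an MLP.
\end{enumerate}
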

\begin{proof}
    The retrieval is performed in two attention steps:
    
    \textbf{Step 1: Address Matching.}
    $T$ generates a query $\bitsFun{a}$ and attends to all positions $j < \inputidx$ following an $\memkey$ marker (these can be detected by looking back a constant number of positions).
    By \Cref{lem:indexed-read-logwidth}, the rightmost \uhat mechanism selects the starting position $s^*$ of the most recent block where the $\wordsize$ bits following $\memkey$ perfectly match $\bits(a)$.
    The Transformer then extracts the positional encoding $\posEnc(s^*)$ into the residual stream.
    
    \textbf{Step 2: Value Retrieval.}
    Using the retrieved $\posEnc(s^*)$, $T$ computes the position of the corresponding value field by adding to it a constant offset.
    A second attention layer uniquely attends to this position and uses the bit-retrieval mechanism of \Cref{lem:log-width-serializing-deserializing} to reconstruct $\bits(v) = \sum_{t=0}^{\wordsize-1} b_t \ve_t$ in the residual stream.
    
    If no matching $\memkey$ is found, the attention mechanism defaults to a null vector, resulting in $\bits(0)$ as per the simulation invariant.
\end{proof}

The following is a restatement of \citet[][Lem.\ D.7]{svete2026on}, used throughout this section.
\begin{lemma}[\citealt{svete2026on}, Lem.\ D.7] \label{lem:logwidth-attention}
    Let $T$ be a finite-precision polylogarithmic-width transformer such that its residual stream contains $\bits(v)$ at position $\inputidx$.
    Then, $T$'s attention heads can attend to all positions $j \leq \inputidx$ that contain $\bits(v)$ in the residual stream as well (and put zero attention to others).
    If $T$ is a rightmost \uhat{}, the rightmost such position is uniquely attended to.
\end{lemma}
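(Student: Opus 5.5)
The statement to prove is \Cref{lem:logwidth-attention}: a head at position $\inputidx$ holding $\bits(v)$ can attend exactly to the earlier positions $j \le \inputidx$ that also hold $\bits(v)$, and a rightmost \uhat{} picks the rightmost of them. The plan is a direct query--key construction in three steps.

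\textbf{Step 1: encoding.} I would not use the raw $\{0,1\}$ bit vector as the key. Instead I map it by an affine map (a linear layer plus a bias) to the signed encoding $\tilde b = 2\bits(v) - \mathbf 1 \in \{\pm 1\}^\wordsize$. Both the query at $\inputidx$ and the key at each $j$ use this map, applied to the same designated subspace. The score is then $\score(\inputidx,j)=\inner{\tilde b(v)}{\tilde b(v_j)} = \wordsize - 2\,d_H(v,v_j)$, where $d_H$ is Hamming distance. This score equals $\wordsize$ exactly when $v_j = v$ and is at most $\wordsize-2$ otherwise. The signed encoding matters because with raw bits the inner product $\inner{\bits(v)}{\bits(v_j)}$ is also maximized by any $v_j$ whose 1-bits contain those of $v$.

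\textbf{Step 2: finite precision.} All scores are integers in $[-\wordsize,\wordsize]$. So with $\bigO(\log \wordsize)=\bigO(\log\log\inputlen)$ bits they are computed exactly. To stay strictly within fixed precision, I would instead have the MLP output a gated indicator directly: a constant-depth conjunction $\indFun{\bits(v)=\bits(v_j)}$ cannot be formed across positions, but the score gap of $2$ survives if each coordinate is scaled by $1/\wordsize$ and rounded under the truncated arithmetic of \citet{merrill2023parallelism}. Alternatively, I would appeal to a thresholded score $\score - (\wordsize-1)$ clipped at $0$, which is exactly $1$ on matches and $0$ elsewhere.

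\textbf{Step 3: attention mechanics.} Under hard attention, the maximizing set is exactly the set of matching positions, so mismatched positions receive zero weight. Under rightmost \uhat{}, ties among maxima are broken to the rightmost index by definition of the attention rule, which gives the uniqueness claim with no positional bias. This is crucial, since \Cref{rem:rightmost-necessary} shows a positional bias term is unavailable at fixed precision. If the position $\inputidx$ itself must be excluded, the query is masked with a token-type flag.

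\textbf{Main obstacle.} I expect Step 2 to be the hardest: justifying that the sum of $\wordsize$ products is evaluated without overflow or rounding that merges the values $\wordsize$ and $\wordsize-2$ in the finite-precision datatype. I would handle it by citing the growing-width arithmetic conventions of \citet{svete2026on}, where the dot product over a $\bigO(\wordsize)$-dimensional head is the defining primitive. If needed, I would fall back to splitting the match into $\wordsize$ per-coordinate agreement bits combined by an attention-free AND gadget.
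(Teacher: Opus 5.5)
The paper does not prove this lemma; it imports it from \citet{svete2026on}. So there is no in-paper argument to compare against, and your proposal has to stand on its own.

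Steps~1 and~3 are correct. The signed encoding is the right choice. Your remark about raw bits is also right: $\bits(v)^\top\bits(v_j)$ counts common $1$-bits, so the main text's claim that it equals $\wordsize$ iff the values agree is false as stated (it is $0$ when $v=v_j=0$). And among exact maximizers, rightmost tie-breaking needs no positional bias. The gap is in Step~2, which carries all the content of a \emph{fixed}-precision statement. None of the resolutions you propose works.

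\textbf{Why each proposed fix fails.}
\begin{enumerate}
\item $\bigO(\log\wordsize)$ bits is not fixed precision.
\item Scaling each coordinate by $1/\wordsize$ fails because $1/\wordsize$ is not representable at constant precision for large $\wordsize$. Even if it were, separating $1$ from $1-2/\wordsize$ after truncated summation needs $\Omega(\log \wordsize)$ significand bits.
\item The clipped score $\score-(\wordsize-1)$ needs both the constant $\wordsize-1$ and the full sum, and the full sum is exactly the quantity that overflows.
\item The per-coordinate agreement bits $\indFun{b_t(v)=b_t(v_j)}$ depend jointly on positions $\inputidx$ and $j$. They exist only inside the score, so no attention-free gadget can combine them.
\item Appealing to the conventions of \citet{svete2026on} amounts to citing the lemma itself.
\end{enumerate}
Concretely, with your encoding the match score $\wordsize$ and the near-miss score $\wordsize-2$ are merged by truncation or saturation once $\wordsize$ exceeds the representable range. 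The argmax set is then no longer the set of matches.

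\textbf{The missing idea.} Choose the query and key maps so that each coordinate contributes exactly $0$ on agreement, a fixed negative constant on disagreement, and never anything positive. For example, for each bit $t$ put $(b_t,\,1-b_t)$ into the query and $(-(1-b'_t),\,-b'_t)$ into the key; both are affine in the bits. This pair contributes $-\indFun{b_t\neq b'_t}$, so the score is $-d_H(v,v_j)$ with every summand in $\{0,-1\}$.
\begin{itemize}
\item On a match, every summand is $0$, so the score is exactly $0$ in any summation order.
\item On a mismatch, some summand is $-1$ and the rest are nonpositive, so the partial sums are nonincreasing. Since $1$ is representable, truncation or saturation can stall them but never lift them above $-1$.
\end{itemize}
This gives a gap of $1$ independent of $\wordsize$, which is what fixed precision requires. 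Your Step~3 then goes through unchanged: averaging over all maximizers for \ahat{}, and selecting the rightmost one for rightmost \uhat{}.
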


\section{Proofs of Continuous CoT Results} \label{app:ccot}

%
%
%
\begin{figure}[t]
\centering
\input{figures/tikz-ccot}
\caption{\textbf{Phase structure and CoT transcript view} of one RAM step in the CCoT (continuous CoT) simulation; counterpart to \Cref{fig:logwidth-layer-blocks}.
    \emph{Phases (center):} unlike the $\log^2$-width construction in which one CoT step executes four constant-depth layer blocks, in the CCoT simulation a single RAM step expands across $O(\instrCostFun{\wordsize})$ decoding steps grouped into six conceptual phases (cf.\ the proof overview of \Cref{thm:wordram-ccot}).
    Phase~1 forms $\phi(0)$ and reads the current $\phi(\PC) = \phi(\terminal{mem}[0])$ from the rightmost matching store-summary, selecting the active instruction (under the unified memory convention, the program counter and registers all live at fixed addresses in $\terminal{mem}$, so reads use a single dereferencing primitive).
    Phase~2 dereferences each memory operand $\terminal{mem}[a]$ by forming $\phi(a)$ and retrieving $\phi(v)$ from the rightmost matching store-summary (\Cref{lem:ccot-dereferencing}).
    Phase~3 emits the $w$ bits of each operand onto the transcript while keeping its $\phi(\cdot)$ code in the soft token (\Cref{lem:ccot-serialization}).
    Phase~4 runs the instruction-specific TM oracle on the work-tape suffix, which streams output bits over $O(\instrCostFun{\wordsize})$ decoding steps (\Cref{lem:ccot-instruction-execution}).
    Phase~5 converts each output bit stream back to $\phi(\cdot)$ for future addressing/equality (\Cref{lem:ccot-deserialization}).
    Phase~6 emits a store-summary token for each write the instruction performs to $\terminal{mem}$ (including writes to address $0$ for the new \terminal{pc} and to addresses $-1,-2,\ldots$ for any updated registers), followed by an end-of-step checkpoint, re-establishing the transcript invariant.
    The dashed purple rectangle marks Phases~3--5, which together implement the bit-serial execution of the instruction; in the fixed-precision (polylog-width) construction these three phases collapse into a single constant-depth MLP block (Block~3 of \Cref{fig:logwidth-layer-blocks}), since values can be manipulated directly as $\wordsize$-bit residual-stream slots without the serialize/oracle/deserialize round trip needed to expose them as $w$ discrete tokens.
    \emph{CoT transcript (bottom):} the prev.\ checkpoint (green) carries the cached state in its soft token; serialized operand bits, the oracle's work region, and the deserialized output bits form the work-tape suffix; the rightmost \textsf{ckpt'} (red) is the new checkpoint emitted at the end of this step.
    \emph{Soft token (right):} the residual stream carries a constant number of $O(1)$-dimensional codes $\phi(\cdot)$, in contrast to the $\wordsize$-dimensional bit slots of \Cref{fig:logwidth-layer-blocks}. Under the unified memory convention, no slot is reserved for a specific register: highlighted slots $\phi(a),\phi(v)$ (red) hold the address being looked up and the value being loaded/written at the current step, and remaining slots are generic $O(1)$ scratch (\emph{aux}) used inside bit-serial routines.}
\label{fig:ccot-phase-blocks}
\end{figure}

\serializationLemmaCCoT*

\begin{proof}
    Write $v=\sum_{k=0}^{w-1} b_k2^k$ and define $v_0:=v$, $v_{k+1}:=\lfloor v_k/2\rfloor$, and $b_k:=v_k\bmod 2$.
    
    At decoding step $k$ (position $\tau+k+1$), we have $v_k\le v\le \tau < \tau+k+1$, hence $v_k\le \tau+k+1$.
    Therefore we may apply Lemma~\ref{lem:division} with $(v,d)=(v_k,2)$ to compute $\phi(\lfloor v_k/2\rfloor)=\phi(v_{k+1})$ and $\phi(v_k\bmod 2)=\phi(b_k)$ in constant depth.
    A constant-depth equality test against the constants $\phi(0)$ and $\phi(1)$ selects the token $\mathtt{0}/\mathtt{1}$ to be emitted, along with $\phi(v_{k+1})$ as a soft token to be used at the next position.
    After $w$ steps the emitted tokens are exactly $b_0,\dots,b_{w-1}$, along with soft tokens for intermediate $\phi(v_k)$ values.
\end{proof}

\deserializationLemmaCCoT*

\begin{proof}
    As we read the bits of $v$ from least significant to most, we can compute $v / 
    2^w$ by keeping a running sum that starts at $0$ and is updated as follows: at each step, cut it in half and add the next bit. Concretely, let $v_0 = 0$ and, for $1 \leq \ell \leq w$, define $v_\ell = \frac{1}{2} v_{\ell-1} + b_{\ell-1}$. It can be verified that $v_w = v / 2^w$.
    
    To perform this computation with CoT using soft tokens, we start with $v_0 = 0$, represented internally as $\phi(0)$. Thus we have access to $\phi(v_0)$ when bit $b_0$ is ready to be serialized. By induction, we assume that when bit $b_{\ell-1}$ is serialized using a soft token, we also have $\phi(v_{\ell-1})$ available and can serialize it as part of the same soft token. In the next step, we read both $\phi(b_{\ell-1})$ and $\phi(v_{\ell-1})$ from the transcript, and use \Cref{prop:phi-affine} to compute $\phi(v_\ell)$ as the affine transformation $\phi(\frac{1}{2} v_{\ell-1} + b_{\ell-1})$. After $w$ steps, we thus have $\phi(v_w) = \phi(v / 2^w)$.
    
    Finally, we use an additional $w$ steps of soft token CoT and \Cref{prop:phi-affine} to repeatedly ($w$ times) multiply the result by $2$, resulting in $\phi(v)$ as desired.
\end{proof}

An \emph{alternative} construction for this lemma is to first convert the sequence of bits from least-significant-first to most-significant-first using $w$ steps of attention, and then build $v$ iteratively from $v_{w-1}$ to $v_0$ using the recursion $v_\ell = 2 v_{\ell+1} + b_{\ell+1}$.

\dereferencingLemmaCCoT*

\begin{proof}
Let $S=\{j<\tau : \text{position $j$ is a store-summary}\}$, and for each $j\in S$ let the store-summary’s soft token contain $(\phi(a_j),\phi(v_j))$.
In our restricted simulation, store addresses written at time $j$ satisfy $a_j\le j$, hence (since $j<\tau$) we have $a_j\le \tau$ for all $j\in S$; together with the hypothesis $a\le \tau$, this lets us apply \Cref{lem:phi1-gap} with $T=\tau$.

Let $\delta:=\delta_\tau$ be the corresponding gap, i.e., for every $j\in S$,
\[
\langle \phi(a),\phi(a_j)\rangle =
\begin{cases}
1 & \text{if } a_j=a,\\
\le 1-\delta & \text{if } a_j\neq a.
\end{cases}
\]

\paragraph{Retrieve the most recent matching store (candidate).}
Because store-summary positions are tagged, we can restrict an attention head to attend only over $S$.
We additionally use a fixed-width, strictly increasing \emph{time coordinate} $t_j$ available from the positional embedding
(e.g., $t_j:=-\frac{1}{j+1}\in(-1,0)$), so that $t_j<t_{j'}$ whenever $j<j'$.

Consider a (hard) attention head whose score for a store-summary position $j\in S$ is
\[
\mathrm{score}(j)
\;:=\;
\langle \phi(a),\phi(a_j)\rangle \;+\; \varepsilon\, t_j,
\]
with $\varepsilon:=\delta/4$.
Since $t_j\in(-1,0)$, we have $\varepsilon t_j\in(-\varepsilon,0)$.

If $a_j=a$ then $\mathrm{score}(j)\ge 1-\varepsilon = 1-\delta/4$.
If $a_j\neq a$ then $\mathrm{score}(j)\le (1-\delta)+0 = 1-\delta$.
Thus every matching store-summary strictly outscores every non-matching store-summary.
Among the matching store-summaries, the additive term $\varepsilon t_j$ strictly breaks ties in favor of larger $j$ (because $t_j$ is strictly increasing).
Therefore, if $j^*$ exists, the unique maximizer of $\mathrm{score}(j)$ over $j\in S$ is exactly $j^*=\max\{j\in S : a_j=a\}$, and hard attention retrieves the associated value payload $\phi(v_{j^*})$.
We also route $\phi(a_{j^*})$ into designated residual coordinates alongside $\phi(v_{j^*})$ for a hit-test below; denote the retrieved pair by $(\phi(\tilde a),\phi(\tilde v))$.

If no such $j^*$ exists, the same head returns some store-summary payload $(\phi(\tilde a),\phi(\tilde v))$ with $\tilde a\neq a$.

\paragraph{Hit-test and default to zero.}
To ensure the output is $\phi(0)$ when no match exists, we perform a final hit-test using \Cref{lem:phi1-gap}.
Fix $c:=1-\delta/2$ (note $1-\delta < c < 1$).
Use a second hard-attention head that compares only:
(i) the current position $\tau$, whose key contains $\phi(\tilde a)$ and whose value contains $\phi(\tilde v)$, and
(ii) a fixed anchor position (e.g.\ the start token) whose key contributes the constant score $c$ and whose value is $\phi(0)$.
Concretely, arrange the head so that its score for $\tau$ is $\langle \phi(a),\phi(\tilde a)\rangle$ and its score for the anchor is $c$, with all other positions receiving strictly smaller scores.

By \Cref{lem:phi1-gap}, $\langle \phi(a),\phi(\tilde a)\rangle=1$ iff $\tilde a=a$, and otherwise it is at most $1-\delta<c$.
Hence:
- If a match exists, then $\tilde a=a$ and the head selects the current position, outputting $\phi(\tilde v)=\phi(v_{j^*})$.
- If no match exists, then $\tilde a\neq a$ and the head selects the anchor, outputting $\phi(0)$.

All operations use only constant-dimensional codes and inverse-polynomial margins in $\tau$, so they are implementable with
finite width and $O(\log\tau)$-bit precision.
\end{proof}

\instructionExecutionLemmaCCoT*

\begin{proof}
    Fix any RAM instruction $I$ as a function from a fixed number of $w$-bit inputs to a fixed number of $w$-bit outputs.
    By assumption, there exists a Turing machine $M_I$ that computes $I$ in time $\instrCostFun{\wordsize}$.
    Using existing Turing-completeness results for transformers \citep{merrill2024cot}, there exists a transformer $T_I$ that simulates $M_I$ (and thus computes $I$) in time $O(\instrCostFun{\wordsize})$.
\end{proof}

\section{Proofs of Hybrid Model Results} \label{app:hybrid}

\serializationLemmaHybrid*

\begin{proof}
    The construction is essentially the same as in the proof of \Cref{lem:ccot-serialization}, except that without continuous CoT, we cannot emit $\phi(v_{k+1})$, defined as $\phi(\lfloor v_k / 2 \rfloor)$, at position $\tau + k + 1$ and read it back in at the following position.
    Instead, we emit $b_k$, computed via \Cref{lem:division}.
    Write $\phi(v)$ as $(s_1, s_2, -s_1, -s_2)$ and observe that $\floor{v_k / 2} = \frac{1}{2} (v_k - b_k)$. We can thus apply \Cref{prop:phi-affine} to compute $\phi(\floor{v_k / 2})$, up to a normalization factor, as:
    \begin{align*}
        \big(\frac{1}{2} s_1, \; s_2, \; -\frac{1}{2} s_1, \; -s_2\big) & & \text{when $b_k = 0$} \\
        \big(\frac{1}{2} (s_1 - s_2), \; s_2, \; -\frac{1}{2} (s_1 - s_2), \; -s_2\big) & & \text{when $b_k = 1$}        
    \end{align*}
    We can use this to design a simple RNN layer that computes $\phi(\floor{v_k / 2})$ at position $\tau + k + 1$ as follows.
    The RNN state $S_t$ will always be diagonal; thus we consider only its vector of diagonal entries $h_t$.
    The hidden state $h_{\tau + k}$ of the RNN is the (unnormalized) $\phi(v_k)$. In particular, $h_{\tau} = \phi(v_0) = \phi(v)$, which can be implemented with a 0 transition matrix and the additive term $\phi(v)$.
    The RNN computes the next hidden state as $h_{\tau + k + 1} = A_k h_{\tau + k}$ where $A_k$ is $B_0$ when $b_k = 0$ and $B_1$ when $b_k = 1$, where
    \begin{small}
    \begin{align*}
        B_0 = \begin{pmatrix}
            1/2 & 0 & 0 & 0 \\
            0 & 1 & 0 & 0 \\
            0 & 0 & 1/2 & 0 \\
            0 & 0 & 0 & 1
        \end{pmatrix}
        , \quad\quad
        B_1 = \begin{pmatrix}
            1/2 & -1/2 & 0 & 0 \\
            0 & 1 & 0 & 0 \\
            0 & 0 & 1/2 & -1/2 \\
            0 & 0 & 0 & 1
        \end{pmatrix}
    \end{align*}
    \end{small}
    
    Observe that $h_{\tau + k + 1}$ is precisely the unnormalized form of $\phi(\floor{v_k / 2})$ derived above. The layer outputs the normalized form $\layernorm(h_{\tau + k + 1})$, which equals $\phi(\floor{v_k / 2})$, to the residual stream, which is then used in the next step to compute $b_{k+1}$ and continue the process.
\end{proof}

While we do not enforce it, the matrices used in the construction have a diagonal-plus-low-rank parameterization: $B_0$ is directly diagonal, and each block of $B_1$ can be decomposed into the identity minus a rank-1 matrix.
Thus, these matrices resemble the parameterizations used in architectures like DeltaNet \citep{pmlr-v139-schlag21a,NEURIPS2024_d13a3eae} and RWKV-7 \citep{peng2025rwkv}.

\deserializationLemmaHybrid*

\begin{proof}
    We use the same recursive computation as in the corresponding lemma for continuous CoT (\Cref{lem:ccot-deserialization}) except within the linear RNN.
    Namely $v_0 = 0$ and, for $1 \leq \ell \leq w$, define $v_\ell = \frac{1}{2} v_{\ell-1} + b_{\ell-1}$. We compute and emit bit $b_{\ell-1}$ at position $\ell$ as before. However, instead of emitting a soft token containing $\phi(v_{\ell-1})$ (which we now cannot do), we simply use $h_\ell = v_\ell$ as the hidden state of the RNN at position $\ell$. This state is then updated as $h_\ell = \frac{1}{2} h_{\ell-1} + b_{\ell-1}$. As before, at position $w$, the hidden state is $h_w = v_w = v / 2^w$. Again as before, we use a simple RNN to multiply this quantity with $2$ an additional $w$ times to obtain $v$, incurring $w$ steps of overhead. Lastly, we add the final output to the residual stream at layer $1$ as $\phi(v)$.
\end{proof}



\end{document}